\documentclass[Afour,sageh,times]{sagej}

\usepackage{moreverb,url}

\usepackage[colorlinks,bookmarksopen,bookmarksnumbered,citecolor=red,urlcolor=red]{hyperref}

\newtheorem{theorem}{Theorem}
\newtheorem{corollary}{Corollary}
\usepackage{dirtytalk}
\usepackage{subcaption}
\usepackage{xcolor}
\usepackage{soul}
\usepackage{gensymb}
\newcommand{\mb}[1]{\mathbf{#1}}

\newcommand\BibTeX{{\rmfamily B\kern-.05em \textsc{i\kern-.025em b}\kern-.08em
T\kern-.1667em\lower.7ex\hbox{E}\kern-.125emX}}

\def\volumeyear{2026}

\begin{document}

\runninghead{Flight and Gosselin}

\title{A Generalized Theory of Load Distribution in Redundantly-actuated Robotic Systems}

\author{Joshua Flight\affilnum{1} and Clément Gosselin\affilnum{1}}

\affiliation{\affilnum{1}Département de génie mécanique et de génie industriel, Université Laval, Canada.}

\corrauth{Joshua Flight, Département de génie mécanique et de génie industriel, Université Laval, Québec, Québec, G1V 0A6, Canada.}

\email{joshua.flight.1@ulaval.ca}

\begin{abstract}
This paper presents a generalized theory which describes how applied loads are distributed within rigid bodies handled by redundantly-actuated robotic systems composed of multiple independent closed-loop kinematic chains. The theory fully characterizes the feasible set of manipulating wrench distributions for a given resultant wrench applied to the rigid body and has important implications for the force-control of multifingered grippers, legged robots, cooperating robots, and other overconstrained mechanisms. We also derive explicit solutions to the wrench synthesis and wrench analysis problems. These solutions are computationally efficient and scale linearly with the number of applied wrenches, requiring neither numerical methods nor the inversion of large matrices. Finally, we identify significant shortcomings in current state-of-the-art approaches and propose corrections. These are supported by illustrative examples and a simulation that demonstrate the advantages of the improved methods.
\end{abstract}

\keywords{Load distribution, internal loads, grasping, legged robots, cooperating manipulators, redundancy}

\maketitle

\section{Introduction}
Modern robotics relies increasingly on systems in which a single object is manipulated or supported by multiple independently actuated kinematic chains. Examples include multifingered robotic hands (\cite{Yoshikawa1991, Zuo1999, Kiatos2021, Andrychowicz2020, OpenAI2019}), legged walking robots (\cite{Gerardo2018, Medeiros2020}), and cooperating manipulators handling a common object (\cite{Walker1991, Williams1993, Chung2005, Erhart2015, Verginis2023}).

In these examples, the mapping of applied wrenches to the resultant wrench that they apply to the object is written as 

\begin{equation}
    \mb{h}_o = \mb{G} \mb{h}
    \label{eq:staticEq}
\end{equation}
where $\mb{h}_o = \begin{bmatrix}
    \mb{f}_o^T & \mb{t}_o^T
\end{bmatrix}^T$ is the resultant wrench which includes a pure force $\mb{f}_o$ and a pure torque $\mb{t}_o$, $\mb{h} = \begin{bmatrix}
    \mb{h}_1^T & \ldots & \mb{h}_n^T
\end{bmatrix}^T$ is the stacked vector of wrenches applied to the object by the kinematic chains where each $\mb{h}_i$ is defined similarly to $\mb{h}_o$, and $\mb{G}$ is the \emph{grasp matrix} (\cite{prattichizzo2008grasping}). 

What we consider to be the \say{kinematic chains} and the \say{object} depends on the context. For robotic hands, the kinematic chains are the fingers, which apply wrenches to the grasped object; for legged robots, the kinematic chains are the legs, which apply wrenches to the ground; for cooperating manipulators, the kinematic chains are the individual manipulators, which apply wrenches to a common object. Each example is illustrated in Figure \ref{fig:Examples}.

\begin{figure}[ht]
\centering
    \subfloat[Example of a multifingered gripper that applies pure forces to the handled object. Adapted from \cite{Kumar1988}.]{%
        \includegraphics[width=.7\linewidth]{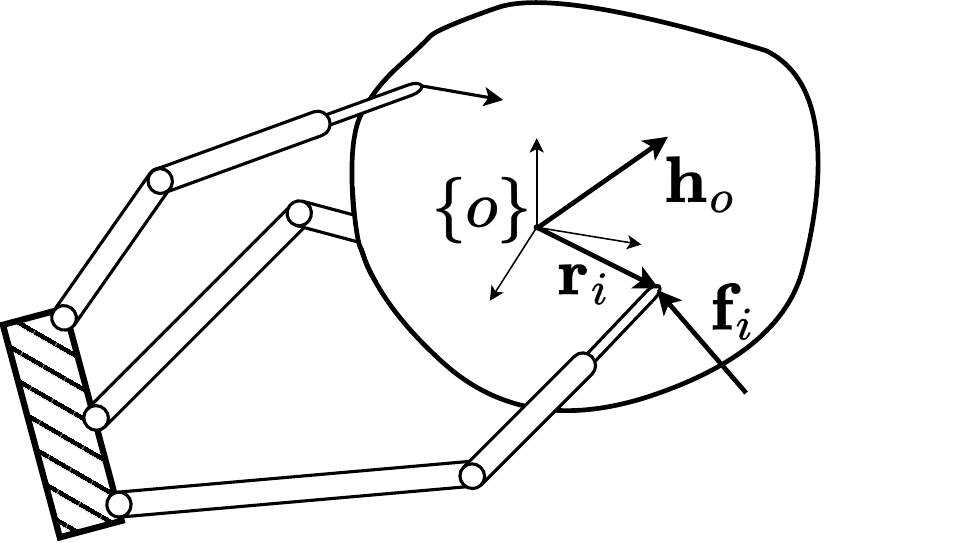}%
        \label{subfig:GripperExample}%
    } \\
    \subfloat[Example of a legged robot that applies pure forces to the ground.  Adapted from \cite{Gor2013}.]{%
        \includegraphics[width=.7\linewidth]{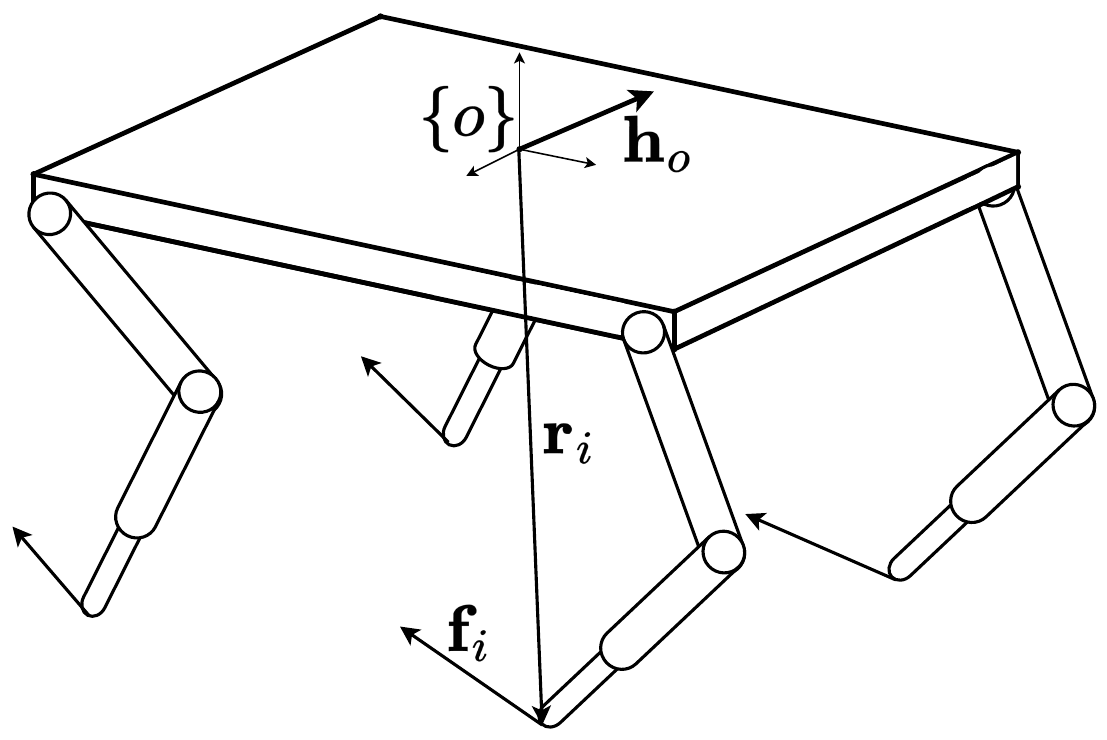}%
        \label{subfig:LeggedRobotExample}%
    } \\
    \subfloat[Example of cooperating manipulators each applying a pure force $\mb{f}_i$ and a pure torque $\mb{t}_i$ to the object.]{%
        \includegraphics[width=.7\linewidth]{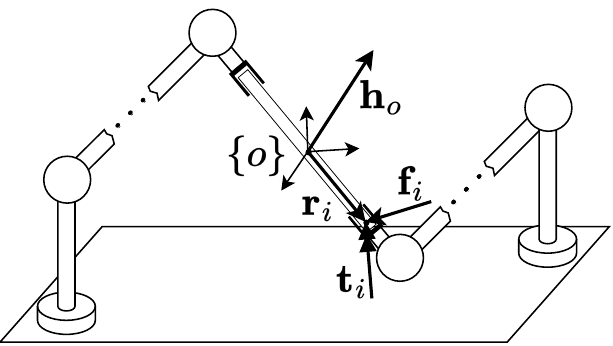}%
        \label{subfig:CoopManipsExample}%
    }
    \caption{Examples of rigid objects handled by multiple kinematic chains.}
    \label{fig:Examples}
\end{figure}

In these systems, the wrenches applied to the object are typically redundant and thus infinitely many applied wrench combinations can produce the same net wrench on the object. On the one hand, this redundancy may be leveraged to improve grip tightness, distribute applied loads more evenly among the actuators, or to allow singular configurations to be avoided. On the other hand, the applied wrenches $\mb{h}_i$ may have antagonistic components which produce internal loads (i.e., combinations of applied wrenches that cancel and thus are invisible to the net wrench but produce nonzero stresses within the object). Internal loads do not affect the acceleration of the object, yet they have important consequences for the performance of the system. They can:

\begin{itemize}
    \item induce undesired contact pressures that can damage a grasped object;
    \item create friction and wear that reduce the efficiency and lifespan of equipment;
    \item bias force/torque sensors leading to issues in compensation and control;
    \item hinder feasible force/torque capabilities due to actuator limits.
\end{itemize}

Because internal loads are intrinsic to these hyperstatic structures, and given their near-ubiquity in robotic applications, it is of great importance to be able to characterize them and, depending on the application, limit or exploit them.

\section{Related Work}
\subsection{\textcolor{black}{Wrench Synthesis}}
\label{sec:RelatedWork-WrenchSynthesis}
The principles of rigid body dynamics have been understood for centuries, and they are a fundamental tool for the analysis of robotic systems. They are of particular importance for the study of load distribution in objects handled by redundantly-actuated robotic systems. Despite having been studied for decades, no consensus has emerged from the robotics literature concerning the characterization and determination of internal loads generated within the object by the wrenches applied by the kinematic chains.

The study of pure forces applied to rigid objects in \cite{Kumar1988} is among the first works on the subject. The authors define \emph{interaction forces} as the difference between the projections of pairwise forces along the lines that join their application points. \textcolor{black}{Therefore, a system is free of interaction forces if, for every pair of applied forces $\mb{f}_i$ and $\mb{f}_j$ at locations $\mb{r}_i$ and $\mb{r}_j$:}

\begin{equation}
    (\mb{f}_j - \mb{f}_i) \cdot (\mb{r}_j - \mb{r}_i) = 0.
    \label{eq:zeroInteractionForceCondition}
\end{equation}
\textcolor{black}{They term the set of forces satisfying this condition the \emph{equilibrating force distribution}.} 



\textcolor{black}{To eliminate the components in the null-space of the grasp matrix, which they showed generated interaction forces, and determine the equilibrating force distribution $\mb{f}_e$, they proposed to use the unweighted Moore-Penrose pseudo-inverse solution}

\begin{equation}
    \mb{f}_e = \mb{G}^{\dagger} \mb{h}_o
    \label{eq:EquilibratingForceDistribution}
\end{equation}
\textcolor{black}{where $\mb{f}_e = \begin{bmatrix}
    \mb{f}^T_{e,1} & \ldots & \mb{f}^T_{e,n}
\end{bmatrix}^T$ is the stacked vector of equilibrating forces and $\mb{G}^{\dagger} = \mb{G}^T (\mb{G} \mb{G}^T )^{-1}$.}

\textcolor{black}{Another important (and so far uncontested) result from }\cite{Kumar1988} \textcolor{black}{is showing that the equilibrating force distribution returned by (}\ref{eq:EquilibratingForceDistribution}\textcolor{black}{) is a \emph{helicoidal vector field}}\endnote{\textcolor{black}{A helicoidal vector field is defined as a vector field whose field lines are a family of helices with the same axis. Each helical field line can be generated by a combination of rotation about a fixed axis and translation along that same axis.}}. \textcolor{black}{While true in some cases, we have found that this statement does not hold generally. We provide clarification on this statement and identify the conditions under which the equilibrating force distribution does form a helicoidal vector field in this paper.}

\cite{Walker1991} later made the distinction between these interaction forces and \emph{internal loads} by showing that the unweighted Moore-Penrose inverse of $\mb{G}$ does return a system of forces which satisfies (\ref{eq:zeroInteractionForceCondition}), but also generates what is more generally referred to as internal loads. They proposed a \emph{nonsqueezing} weighted Moore-Penrose pseudo-inverse of the grasp matrix to be used in place of $\mb{G}^{\dagger}$ in (\ref{eq:EquilibratingForceDistribution}) to find wrench distributions free of internal loads, although this result was later contested in \cite{Chung2005}. Nonetheless, this more general concept of internal loads also allowed torque components to be considered, since it is not clear how to extend the analogy of interaction forces to applied torques (\cite{Erhart2015}). Hence, it should be noted that Kumar and Waldron's method is \emph{only} applicable in situations involving only pure forces applied to the object. In this paper, we use the term \emph{manipulating wrench distribution} to refer to sets of wrenches which do not generate internal loads.

The difference between interaction forces and internal forces was studied in \cite{Zuo1999} and conditions under which these two force sets become equivalent were proposed. 

Physically meaningful interpretations of internal loads were presented in \cite{Yoshikawa1991, Williams1993}. In \cite{Yoshikawa1991}, the pure forces applied to a rigid object by multifingered robotic hands were analyzed and conditions that must be satisfied by the manipulating forces were proposed. In \cite{Williams1993}, the authors develop a physical model of the grasped object called the \emph{virtual linkage}. The authors use this physical model to form an extended set of equilibrium equations that includes the internal force and torque components. 

\cite{Erhart2015} made significant contributions to this line of research by ($i$) demonstrating that the manipulating wrench distribution is not unique for a given resultant wrench in some contexts and ($ii$) proposing a parametrized Moore-Penrose pseudo-inverse of the grasp matrix, denoted $\mb{G}^+_M$, that enables the calculation of the entire set of feasible manipulating wrench distributions. 

\textcolor{black}{The authors use analytical dynamics and employ a virtual rigid body framework where the handled object is represented by a virtual rigid body of mass $m^*_o$ and inertia tensor $\mb{J}^*_o$ assumed to be at rest}\endnote{The asterisks ($^*$) are used to distinguish the dynamic properties of the virtual rigid body from those of the actual physical object.}. \textcolor{black}{Then, for $n$ applied wrenches, they define a dynamically equivalent system of $n$ lumped elements each with mass $m^*_i$ and inertia tensor $\mb{J}^*_i$, and whose centers of mass are coincident with the wrench application points. Together, these lumped elements have the same mass, center of mass, and inertia as the virtual rigid body. By calculating the constrained accelerations induced at each wrench application point by the resultant wrench applied to the rigid body, they are able to determine the manipulating wrenches required to generate the constrained accelerations of each lumped element. The inertia parameters $(m^*_i, \mb{J}^*_i)$ are the parametrization variables used to span the space of solutions for the manipulating wrench distribution. They arrive at the following theorem: }

\begin{theorem}[As proposed in \cite{Erhart2015}]
    \label{th:BadParamMPInv}
    The load distribution given by 
    \begin{equation}
        \mb{G}^+_M = \begin{bmatrix}
            \frac{m^*_1}{m^*_o} \mb{I}_3 & m^*_1 [\mb{J}^*_o]^{-1} \mb{S}(\mb{r}_1)^T \\
            \mb{0}_3 & \mb{J}^*_1 [\mb{J}^*_o]^{-1} \\
            \vdots & \vdots \\
            \frac{m^*_n}{m^*_o} \mb{I}_3 & m^*_n [\mb{J}^*_o]^{-1} \mb{S}(\mb{r}_n)^T \\
            \mb{0}_3 & \mb{J}^*_n [\mb{J}^*_o]^{-1}
        \end{bmatrix}
        \label{eq:BadParametrizedInverse}
    \end{equation}
    for some positive-definite weighting coefficients $m^*_i \in \mathbb{R}$ and $\mb{J}^*_i \in \mathbb{R}^{3 \times 3}$ with 
    \begin{align}
        m^*_o &= \sum_{i=1}^n m^*_i \label{eq:EH-MassSum}\\
        \mb{J}^*_o &= \sum_{i=1}^n \mb{J}^*_i + \sum_{i=1}^n \mb{S}(\mb{r}_i) m^*_i \mb{S}(\mb{r}_i)^T \label{eq:EH-InertiaTensor}\\
        \sum_{i=1}^n & \mb{r}_i m^*_i = \mb{0}_{3 \times 1} \label{eq:EH-CoM}
    \end{align}
    is free of internal loads.
\end{theorem}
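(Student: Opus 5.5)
The plan is to follow the virtual rigid body construction described just before the statement. I would show that the columns of $\mb{G}^+_M$ are exactly the wrenches that each lumped element $(m^*_i,\mb{J}^*_i)$ needs in order to move with the rigid-body acceleration field generated by $\mb{h}_o$. I would then argue that such a distribution needs no inter-element constraint wrenches, which is the sense in which it is free of internal loads. The argument has three steps: a kinematic computation, a verification that $\mb{G}\mb{G}^+_M=\mb{I}_6$, and a d'Alembert-type argument for the absence of internal loads.

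\textbf{Step 1: kinematics.} Because the positive-definite $\mb{J}^*_i$ and $m^*_i>0$ make $\mb{J}^*_o$ in (\ref{eq:EH-InertiaTensor}) positive definite, $\mb{J}^*_o$ is invertible. Condition (\ref{eq:EH-CoM}) places the centre of mass of the virtual body at the origin. Since the body is at rest ($\boldsymbol{\omega}=\mb{0}$), the Newton--Euler equations give
\begin{equation*}
\mb{a}_o=\mb{f}_o/m^*_o, \qquad \boldsymbol{\alpha}=[\mb{J}^*_o]^{-1}\mb{t}_o .
\end{equation*}
There are no gyroscopic or centripetal terms. The acceleration of the material point at $\mb{r}_i$ is then
\begin{equation*}
\mb{a}_i=\mb{a}_o+\boldsymbol{\alpha}\times\mb{r}_i=\mb{a}_o+\mb{S}(\mb{r}_i)^T\boldsymbol{\alpha},
\end{equation*}
and every lumped element shares the angular acceleration $\boldsymbol{\alpha}$. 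The wrench that lumped element $i$ needs in isolation is therefore
\begin{equation*}
\mb{f}_i=m^*_i\mb{a}_i=\tfrac{m^*_i}{m^*_o}\mb{f}_o+m^*_i\mb{S}(\mb{r}_i)^T[\mb{J}^*_o]^{-1}\mb{t}_o, \qquad \mb{t}_i=\mb{J}^*_i\boldsymbol{\alpha}.
\end{equation*}
This is precisely the $i$-th block row of (\ref{eq:BadParametrizedInverse}) applied to $\mb{h}_o$.

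\textbf{Step 2: consistency, $\mb{G}\mb{G}^+_M=\mb{I}_6$.} Using $\mb{G}$ with block columns $\begin{bmatrix}\mb{I}_3 & \mb{0}\\ \mb{S}(\mb{r}_i) & \mb{I}_3\end{bmatrix}$, I would check the two components separately.
\begin{itemize}
\item The force sum is $\sum_i \frac{m^*_i}{m^*_o}\mb{f}_o+\big(\sum_i m^*_i\mb{S}(\mb{r}_i)^T\big)[\mb{J}^*_o]^{-1}\mb{t}_o=\mb{f}_o$. This uses (\ref{eq:EH-MassSum}) for the first term, and linearity of $\mb{S}$ with (\ref{eq:EH-CoM}) to kill the second.
\item The moment sum is $\sum_i\mb{S}(\mb{r}_i)\frac{m^*_i}{m^*_o}\mb{f}_o+\big(\sum_i\mb{S}(\mb{r}_i)m^*_i\mb{S}(\mb{r}_i)^T+\sum_i\mb{J}^*_i\big)[\mb{J}^*_o]^{-1}\mb{t}_o$. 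The first term vanishes by (\ref{eq:EH-CoM}), and the bracket equals $\mb{J}^*_o$ by (\ref{eq:EH-InertiaTensor}), so the sum is $\mb{t}_o$.
\end{itemize}
Hence $\mb{G}^+_M$ is a right inverse of $\mb{G}$, so $\mb{h}=\mb{G}^+_M\mb{h}_o$ reproduces the resultant wrench.

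\textbf{Step 3: absence of internal loads.} By Step 1, each lumped element, acted on only by its own applied wrench $\mb{h}_i$, already undergoes exactly the motion it would have as part of the rigid body. By d'Alembert's principle, the wrenches the elements exert on one another through the body, i.e.\ the stresses, must then vanish identically. Any internal-load component would be a nonzero element of $\ker\mb{G}$ superposed on this distribution, and it would require nonzero constraint wrenches.

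I expect this last step to be the main obstacle, because it depends on which formal definition of ``internal load'' is adopted. The argument works if internal-load-free is taken to mean zero inter-element constraint wrenches in the lumped model. I would state that definition explicitly. I would also flag that it depends on the choice of $(m^*_i,\mb{J}^*_i)$, and that it is not obviously equivalent to, for example, (\ref{eq:zeroInteractionForceCondition}) in the pure-force case. If a stronger, parametrization-independent notion is intended, I would first test the claim on a two-point pure-force example before attempting a general proof.
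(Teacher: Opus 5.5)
\textbf{The proof breaks at Step 1.} The statement as written is false in three dimensions. The paper does not prove it; it refutes it, and it fails at exactly the point where your argument has a gap.

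Your Step 1 computation is correct, but the claim that it is ``precisely the $i$-th block row'' of (\ref{eq:BadParametrizedInverse}) is not. You derived the block $m^*_i\mb{S}(\mb{r}_i)^T[\mb{J}^*_o]^{-1}$, while the statement has $m^*_i[\mb{J}^*_o]^{-1}\mb{S}(\mb{r}_i)^T$. These coincide only in special cases. For $\mb{r}_i\neq\mb{0}$, $[\mb{J}^*_o]^{-1}$ commutes with $\mb{S}(\mb{r}_i)$ iff $\mb{r}_i$ is a principal axis of $\mb{J}^*_o$ with equal transverse principal moments. Examples are an isotropic $\mb{J}^*_o$, or a planar formulation where the rotational inertia is a scalar.

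Your Step 2 therefore verifies $\mb{G}\mb{G}^+_M=\mb{I}_6$ for your matrix, not for the stated one. For the stated matrix the force rows still close, since $\sum_i m^*_i[\mb{J}^*_o]^{-1}\mb{S}(\mb{r}_i)^T=[\mb{J}^*_o]^{-1}\mb{S}(\sum_i m^*_i\mb{r}_i)^T=\mb{0}$. The moment rows, however, give
\[
\sum_{i=1}^n m^*_i\mb{S}(\mb{r}_i)[\mb{J}^*_o]^{-1}\mb{S}(\mb{r}_i)^T+\sum_{i=1}^n\mb{J}^*_i[\mb{J}^*_o]^{-1}.
\]
(\ref{eq:EH-InertiaTensor}) cannot collapse this to $\mb{I}_3$, because $[\mb{J}^*_o]^{-1}$ sits between the skew factors. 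So the stated $\mb{G}^+_M$ generally does not even reproduce $\mb{t}_o$.

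The paper's unit-sphere example shows this concretely. It returns resultant torque $(-0.126,\,2.690,\,1.820)$ instead of the prescribed $(-0.208,\,1.957,\,1.846)$. That example uses $\mb{J}^*_i=\mb{0}$, but by continuity the mismatch persists for small positive-definite $\mb{J}^*_i$. The paper's appendix performs exactly your Step 1 computation, but uses it to \emph{replace} (\ref{eq:BadParametrizedInverse}) by (\ref{eq:GoodParametrizedInverse}), attributing the original error to ignoring non-commutativity.

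\textbf{Step 3 proves less than the paper requires, even after fixing the factor order.} The paper's second objection is that (\ref{eq:EH-MassSum})--(\ref{eq:EH-CoM}) are insufficient. Consider the manipulating torques $\mb{J}^*_i[\mb{J}^*_o]^{-1}\mb{t}_o$ and the force-induced torque $\bigl(\sum_i\mb{S}(\mb{r}_i)m^*_i\mb{S}(\mb{r}_i)^T\bigr)[\mb{J}^*_o]^{-1}\mb{t}_o$. They are parallel to $\mb{t}_o$ for every $\mb{t}_o$ only if $\mb{J}^*_i\propto\sum_i\mb{S}(\mb{r}_i)m^*_i\mb{S}(\mb{r}_i)^T\propto\mb{J}^*_o$. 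Otherwise they carry mutually cancelling components orthogonal to $\mb{t}_o$, which the paper counts as internal loading.

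Your criterion of vanishing inter-element constraint wrenches in the lumped model cannot detect this, so the worry you raise at the end is justified. The paper does not resolve it by proof; it adds (\ref{eq:inertiaProportionality}) as a hypothesis. With the block written as $m^*_i\mb{S}(\mb{r}_i)^T[\mb{J}^*_o]^{-1}$, your argument proves the paper's corrected statement with (\ref{eq:GoodParametrizedInverse}), but only under your weaker, lumped-model notion of being free of internal loads.
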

\textcolor{black}{Note that $\mb{S}(\mb{r}_i)$ denotes the skew-symmetric matrix that represents the cross product with vector $\mb{r}_i$ such that $\mb{S}(\mb{r}_i) \mb{a} = \mb{r}_i \times \mb{a}$ where $\mb{a}$ can be any 3-vector and that $\mb{S}(\mb{r}_i)^T = -\mb{S}(\mb{r}_i)$ is a property of skew-symmetric matrices.}

\textcolor{black}{Though it is elegant, we have identified many issues with the above theorem and have found it to be largely incomplete. First, we have found that the constraints }(\ref{eq:EH-MassSum})-(\ref{eq:EH-CoM}) \textcolor{black}{are insufficient for guaranteeing valid manipulating wrench distributions and that an additional constraint must be added. Second, we have found that the form of $\mb{G}^+_M$ shown in} (\ref{eq:BadParametrizedInverse}) \textcolor{black}{is incorrect and that the proposed corollary relating $\mb{G}^+_M$ to the unweighted Moore-Penrose pseudo-inverse of $\mb{G}$ is false. Third, }\cite{Erhart2015} \textcolor{black}{incorrectly characterize the uniqueness of the solution and leave open the question of how the choice of parameters $(m^*_i, \mb{J}^*_i)$ relates to the dynamics of the actual handled object, and conversely, how the dynamic properties of the object limit the choice of these parameters. We provide corrections to these issues in this paper.}

\subsection{\textcolor{black}{Wrench Analysis}}
Wrench analysis is a less developed area of the literature. Although some of the wrench synthesis techniques described above have been adapted for use in the decomposition of applied wrenches into sets of manipulating wrenches and constraint wrenches (\cite{Yoshikawa1991, Williams1993}), the latest contributions to this line of research have focused on establishing physical plausibility criteria for the decomposition (\cite{Schmidts2016, Donner2018}). 

\cite{Schmidts2016} \textcolor{black}{were the first to propose these criteria}. \textcolor{black}{They arrived at this conclusion by first stating (without much justification) that each applied force can be written as }

\begin{equation}
    \mb{f}_i = \mb{f}_{m,i} + \mb{f}_{c,i}
    \label{eq:DecompSchmidtsetal}
\end{equation}
\textcolor{black}{where $\mb{f}_i$ is the $i$-th force applied to the object by a robotic manipulator, and $\mb{f}_{m,i}$ and $\mb{f}_{c,i}$ are the manipulating and constraint force components of $\mb{f}_i$. They then state that any component of an applied force cannot be greater than the applied force itself and derive physical plausibility inequalities based on this assumption. They incorporate these criteria into a constrained optimization problem which they use to perform the decomposition. }

Later, \cite{Donner2018} expanded the work done in \cite{Schmidts2016} to include torques and used the same physical plausibility criteria, but showed graphically that the results returned by the method proposed in \cite{Schmidts2016} were erroneous. They therefore proposed a corrected optimization problem that returns results that are consistent with their assumptions about manipulating and constraint wrenches. 

We have found that the decomposition (\ref{eq:DecompSchmidtsetal}) used in \cite{Schmidts2016} and \cite{Donner2018} inaccurately describes the nature of load distribution in objects handled by redundant robotic systems, as it violates some of the fundamental principles of rigid body motion. In this paper, we propose a corrected decomposition algorithm which corrects these oversights, thus making it the first explicit wrench decomposition algorithm that guarantees both physically plausibility and consistency with rigid body motion.

\subsection{Contributions}
While the work of \cite{Erhart2015} and \cite{Donner2018} represents the current state of the art in wrench synthesis and decomposition, this paper makes a significant advancement by establishing a unified framework specifically adapted to the real-time force-control of redundantly-actuated robotic systems. The specific contributions of this work are the following: 

\begin{enumerate}
    \item \textcolor{black}{\textbf{A Unifying Theory of Load Distribution:} We identify and correct critical inconsistencies in the assumptions and theorems that have guided wrench synthesis and analysis research; some of which have stood uncontested for decades. We propose a rigorous theoretical framework that unifies these two fundamental problems in robot force-control as a final solution to this long-standing problem.}
    
    \item \textcolor{black}{\textbf{General, Closed-Form Solutions:} The parametrized Moore-Penrose pseudo-inverse $\mb{G}^+_M$ is obtained directly in closed-form, eliminating the need to invert large matrices. This is a significant advancement over modern wrench decomposition algorithms that rely on iterative optimization routines and allows for the precise and stable determination of manipulating and constraint wrench components without the computational overhead of optimization.}
    
    \item \textcolor{black}{\textbf{A Complete Mathematical Characterization of the Solution:} We propose the first complete characterization of the uniqueness of the manipulating wrench distribution and describe the physical significance of wrench components in the null-space of the grasp matrix by dividing it into two novel sub-spaces.}
\end{enumerate}

The remainder of this paper is structured as follows: Section \ref{sec:ProblemDefinition} defines the problems which we aim to solve; Sections \ref{sec:preliminaries} and \ref{sec:Udwadia-KalabaImplications} present key concepts and considerations relevant to our discussion; Section \ref{sec:GeneralizedTheory} presents our theorem; Section \ref{sec:Implementation} shows how to use our theorem for wrench synthesis and analysis; Section \ref{sec:implications} discusses the implications of our results for other related theorems proposed in the literature; and the advantages of our proposed methods are demonstrated in detailed case studies given in Section \ref{sec:Examples} and an extensive simulation experiment given in Section \ref{sec:Simulation}.

\section{\textcolor{black}{Problem Definition}}
\label{sec:ProblemDefinition}
\subsection{Problem Statement}
The generalized theory presented in this work aims to solve two interconnected problems that are fundamental to the force control of redundantly-actuated robotic systems: 

\begin{enumerate}
    \item \textbf{Wrench synthesis}: Given a desired resultant wrench $\mb{h}_o$ to be applied to the object by the kinematic chains, compute a distribution of wrenches to generate it while eliminating or prescribing internal loads.
    \item \textbf{Wrench analysis (decomposition)}: Given an arbitrary set of $n$ wrenches applied to the object, decompose this set into a set of manipulating wrenches and a set of constraint wrenches
\end{enumerate}

\subsection{Assumptions}
\textcolor{black}{In this work, we assume:}
\begin{enumerate}
    \item \textcolor{black}{that the object handled by the kinematic chains is rigid, and;}
    \item \textcolor{black}{that the vectors $\mb{r}_i = \begin{bmatrix}
        x_i & y_i & z_i
    \end{bmatrix}^T$ that define the application points of the wrenches applied to the object with respect to its center of mass are known.}
\end{enumerate}

\subsection{Applicability}
\textcolor{black}{The resultant wrench $\mb{h}_o$ to be applied to the object by the kinematic chains to complete a given dynamic trajectory is given by }

\begin{equation}
    \mb{h}_o = \mb{M}(\mb{x}) \ddot{\mb{x}}_o + \mb{C}(\mb{x}_o, \dot{\mb{x}}_o) \dot{\mb{x}}_o + \mb{g}(\mb{x}) + \mb{h}_{ext}
    \label{eq:FullDynamics}
\end{equation}
\textcolor{black}{where $\mb{h}_o$ is the resultant wrench applied to the object, $\mb{x}_o$, $\dot{\mb{x}}_o$, and $\ddot{\mb{x}}_o$ are the position, velocity, and acceleration of the rigid body at the instant, respectively, $\mb{M}(\mb{x})$ and $\mb{C}(\mb{x}_o, \dot{\mb{x}}_o)$ are the inertia and centripetal matrices, $\mb{g}(\mb{x})$ is the gravitational force, and $\mb{h}_{ext}$ is the wrench applied to the rigid body by its environment. }

\textcolor{black}{The method for wrench synthesis that we propose is an \emph{instantaneous wrench distribution solver}. It is independent from the individual components that make up $\mb{h}_o$ and simply serves to calculate the required set of wrenches to be applied by the kinematic chains to generate $\mb{h}_o$ while either eliminating or prescribing internal loads at a given instant. It can be used as a solver in broader control algorithms used in modern robotics applications that involve dynamic trajectories and changing contact topology, since even in these applications it is assumed that the wrench application point vectors $\mb{r}_i$ are known at each time step. }

\textcolor{black}{In most modern robotic systems, contact wrenches are often solved for implicitly using whole-body or model-predictive controllers that rely on iterative optimization routines. We do not intend for our proposed results to replace them, but rather to complement them by offering the following advantages:}

\begin{enumerate}
    \item \textcolor{black}{\textbf{Computational efficiency:} Modern controllers rely on iterative optimization routines which can be computationally expensive and suffer from convergence issues, especially in high-frequency control loops. Our proposed methods are explicit, closed-form solutions with predictable computational times. This could prove essential for synthesizing wrench distributions in high-frequency control loops, or, if optimization-based solvers are necessary, providing the solver with a physically informed initial guess }(\say{warm start}) \textcolor{black}{to accelerate convergence.}

    \item \textcolor{black}{\textbf{Analytical decomposition and metrics:} Modern controllers return a wrench vector that satisfies the constraints (friction cones, torque limits, etc.), but rarely distinguish between the wrench components that cause motion (the manipulating wrenches) and those that squeeze the object (the constraint wrenches). The method that we propose for wrench decomposition is mathematically rigorous and also explicit. This is useful for quantifying the squeezing on delicate objects or analyzing solutions returned by black-box controllers by calculating performance metrics with our physically informed analysis methods.}
\end{enumerate}

\section{Preliminaries}
\label{sec:preliminaries}
\subsection{Definitions}
In this work, we consider $n$ wrenches applied to a single rigid body in a robotic system. Our notation is consistent with the one used in \cite{Erhart2015}. The origin of the object-fixed frame $\{o\}$ is coincident with the rigid-body's center of mass (CoM). Denote by $\mb{p}_o$ the position of the rigid body's CoM with respect to the origin of the world frame $\{w\}$ and by $\mb{q}_o$ the Euler angles that define its orientation with respect to $\{w\}$. Let $\mb{p}_i$ denote the position vector of the $i$-th wrench application point in the world frame and let $\mb{r}_i$ denote the position of this same point with respect to the origin of the object frame expressed in frame $\{w\}$. We define a mobile frame $\{ i \}$ for each wrench application point with its origin located at $\mb{r}_i$ and an orientation with respect to the world frame defined by a vector of Euler angles $\mb{q}_i$. The relevant quantities are shown in Figure \ref{fig:kinematicQuantities}.

\begin{figure}
    \centering
    \includegraphics[width=0.75\linewidth]{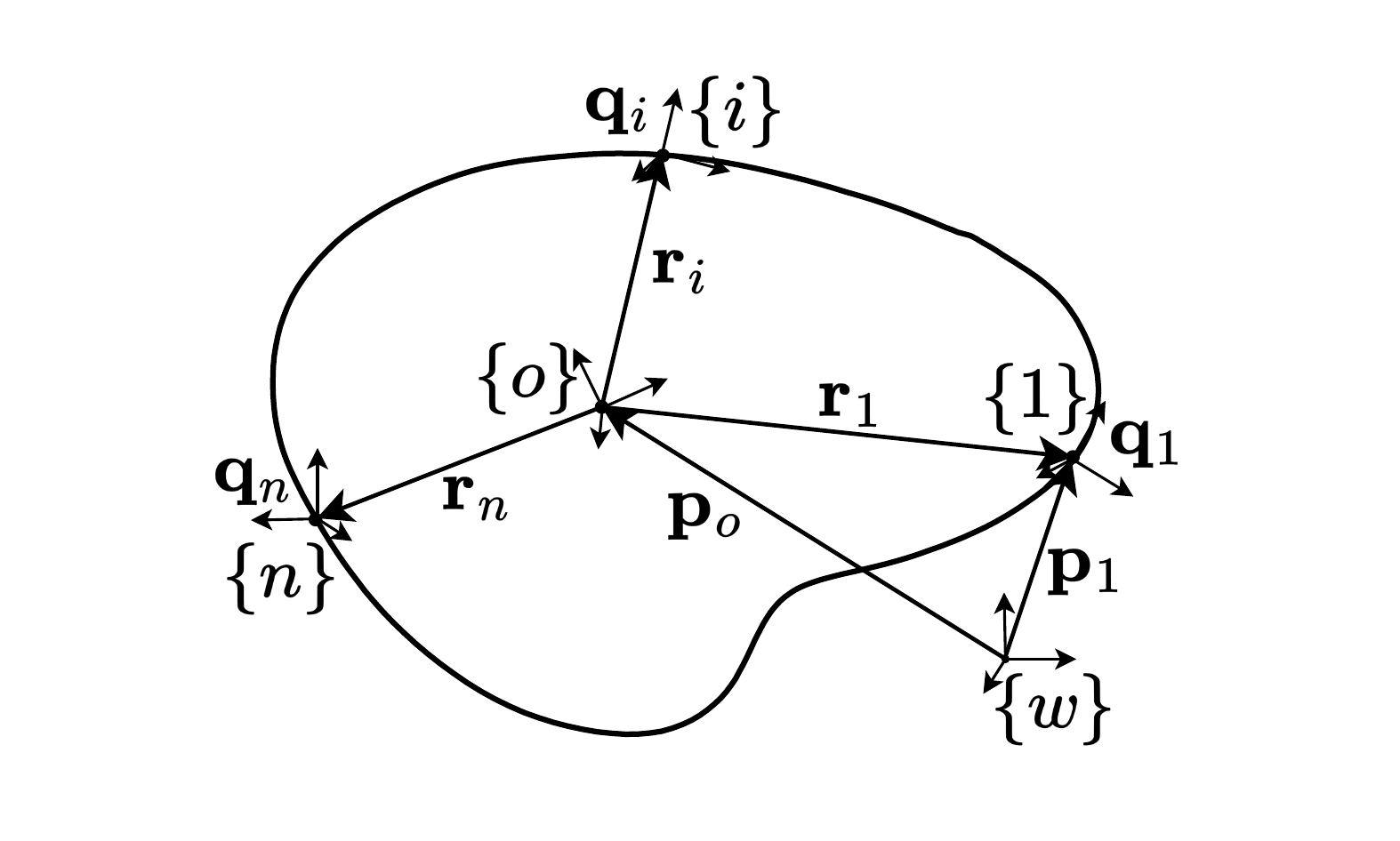}
    \caption{Relevant kinematic quantities (adapted from \cite{Erhart2015}). }
    \label{fig:kinematicQuantities}
\end{figure}

\subsection{\textcolor{black}{Rigid Body Dynamics and the Grasp Matrix}}
A rigid body is defined as a system of elements where the relative position and orientation between any two elements remains constant, regardless of the loads applied. For a rigid body with mass $m_o$ and inertia tensor $\mb{J}_o$ with respect to its CoM, the Newton-Euler equations of motion are written

\begin{align}
    \mb{f}_o &= m_o \ddot{\mb{p}}_o, \label{eq:NEforce}\\
    \mb{t}_o &= \mb{J}_o \dot{\boldsymbol{\omega}}_o + \boldsymbol{\omega}_o \times (\mb{J}_o \boldsymbol{\omega}_o), \label{eq:NEtorque}
\end{align}

The resultant wrench is the sum of the $n$ wrenches applied by the kinematic chains. If the $i$-th kinematic chain applies a force $\mb{f}_i$ and torque $\mb{t}_i$ at location $\mb{r}_i$ with respect to the CoM, the resultant force and torque are

\begin{equation}
    \mb{f}_o = \sum_{i=1}^n \mb{f}_i,
    \quad \mb{t}_o = \sum_{i=1}^n \left(\mb{t}_i +\mb{r}_i \times \mb{f}_i \right),
\end{equation}
respectively. This allows us to define the grasp matrix $\mb{G}$ shown in (\ref{eq:staticEq}). The form of $\mb{G}$ depends on the contact model. 

\paragraph*{Case 1 - Pure Forces (Point Contact)} 
\textcolor{black}{The kinematic chains only apply pure forces. Here, $\mb{h} = \begin{bmatrix}
    \mb{f}_1^T & \ldots & \mb{f}_n^T
\end{bmatrix}^T \in \mathbb{R}^{3n}$ and $\mb{G}$ takes the form} 

\begin{equation}
    \mb{G} = \begin{bmatrix}
        \mb{I}_3 & \cdots & \mb{I}_3 \\
        \mb{S}(\mb{r}_1) & \cdots & \mb{S}(\mb{r}_n)
    \end{bmatrix}
    \label{eq:GPureForces}
\end{equation}

\paragraph*{Case 2 - General Wrenches (Rigid Contact)} 
\textcolor{black}{The kinematic chains apply both forces and torques. Here, $\mb{h} = \begin{bmatrix}
    \mb{f}_1^T & \mb{t}_1^T & \ldots & \mb{f}_n^T & \mb{t}_n^T
\end{bmatrix}^T \in \mathbb{R}^{6n}$ and $\mb{G}$ becomes}

\begin{equation}
    \mb{G} = \begin{bmatrix}
        \mb{I}_3 & \mb{0}_3 & \cdots & \mb{I}_3 & \mb{0}_3 \\ 
        \mb{S}(\mb{r}_1) & \mb{I}_3 & \cdots & \mb{S}(\mb{r}_n) & \mb{I}_3
    \end{bmatrix}.
    \label{eq:GForcesandMoments}
\end{equation}

\subsection{Kinematic Constraints}
We define a mobile frame $\{i\}$ at each wrench application point which is fixed to the rigid body. Changes to the position and orientation of these mobile frames are therefore constrained to obey the rigid body motion of the object and we may define two constraint equations.

\paragraph*{1) Translational Constraint}
The relative position of any two mobile frames must remain constant. This can be written as 

\begin{equation}
    ^o \mb{r}_i =\text{constant}, \ \forall i
\end{equation}

\noindent
where $^o \mb{r}_i$ defines the location of the origin of mobile frame $i$ in the object-fixed frame $\{o\}$. The location of the same point expressed in the world frame $\{w\}$ is obtained with $\mb{p}_i = \mb{p}_o + \mb{Q} ^o\mb{r}_i$ with $\mb{Q}$ being the rotation matrix which represents the rotation from the world frame to the object-fixed frame. We differentiate to obtain the velocity equation $\dot{\mb{p}}_i = \dot{\mb{p}}_o + \boldsymbol{\omega}_o \times \mb{r}_i$ where $\boldsymbol{\omega}_o$ is the angular velocity of the rigid body. Differentiating again, we obtain the acceleration equation 

\begin{equation}
    \ddot{\mb{p}}_i = \ddot{\mb{p}}_o + \dot{\boldsymbol{\omega}}_o \times \mb{r}_i + \boldsymbol{\omega}_o \times (\boldsymbol{\omega}_o \times \mb{r}_i).
    \label{eq:LinearKinematicConstraint}
\end{equation}

\paragraph*{2) Rotational Constraint}
In addition to the relative position of points on the rigid body remaining constant, the relative orientation of any frame $\{ i \}$ with respect to frame $\{ o \}$ must also remain constant. This translates to every mobile frame having the same angular velocity as the rigid body (i.e. $\boldsymbol{\omega}_i = \boldsymbol{\omega}_o \ \forall i$). Differentiation of this constraint yields

\begin{equation}
    \dot{\boldsymbol{\omega}}_i = \dot{\boldsymbol{\omega}}_o.
    \label{eq:AngularKinematicConstraint}
\end{equation}

Eqs. (\ref{eq:LinearKinematicConstraint}) and (\ref{eq:AngularKinematicConstraint}) are the constrained acceleration equations that must be satisfied by every point on the rigid body (\cite{Erhart2015}).

For a system of $n$ wrenches, the kinematic constraints imposed on the $n$ mobile frames fixed to the rigid body at the wrench application points can be written in matrix form as

\begin{equation}
    \mb{A} \ddot{\mb{x}} = \mb{b},
    \label{eq:KinConstraints}
\end{equation}
where $\ddot{\mb{x}} = \begin{bmatrix}
    \ddot{\mb{x}}_1^T & \ldots & \ddot{\mb{x}}_n^T
\end{bmatrix}^T$ is the stacked vector of constrained accelerations of the frames where $\ddot{\mb{x}}_i$ is defined similarly to $\ddot{\mb{x}}_o$, and

\begin{equation}
    \mb{A} = \begin{bmatrix}
        -\mb{I}_3 & \mb{S}(\mb{r}_{21}) & \mb{I}_3 & \mb{0} &  &  & \\
        \mb{0} & -\mb{I}_3 & \mb{0} & \mb{I}_3 &  &  & \\
        \vdots & \vdots &  &  & \ddots & & \\
        -\mb{I}_3 & \mb{S}(\mb{r}_{n1}) &  &  &  & \mb{I}_3 & \mb{0} \\
        \mb{0} & -\mb{I}_3 &  &  &  & \mb{0} & \mb{I}_3
    \end{bmatrix}
\end{equation}
where $\mb{r}_{ji}$ is a relative position vector from $\mb{r}_i$ to $\mb{r}_j$ (\cite{Erhart2015}). The centripetal terms are grouped in vector $\mb{b}$ which is written as

\begin{equation}
    \mb{b} = \begin{bmatrix}
        \mb{S}(\boldsymbol{\omega}_2)^2 \mb{r}_{21} \\ \mb{0} \\ \vdots \\ \mb{S}(\boldsymbol{\omega}_n)^2 \mb{r}_{n1} \\ \mb{0}
    \end{bmatrix}.
\end{equation}

\subsection{Dynamically Equivalent Systems}
\textcolor{black}{Dynamically equivalent systems are often used in analytical dynamics to simplify the dynamic analyses of rigid bodies with continuous mass distributions by discretizing them into sets of lumped elements. A valid dynamically equivalent system has the same mass, center of mass, and inertia tensor (about all axes) as the rigid object. Thus, they behave identically in terms of force and motion. In this work, we construct a dynamically equivalent system of the grasped object by defining a set of $n$ rigidly connected discrete elements with mass $m_i$ and inertia tensor $\mb{J}_i$ with their centers of mass at the wrench application points (see Figure }\ref{fig:equimomentalSystem}). \textcolor{black}{The CoM of the $i$-th lumped element is therefore coincident with the origin of the $i$-th mobile frame and is subject to the same kinematic constraints.} 

\begin{figure}
    \centering
    \includegraphics[width=0.85\linewidth]{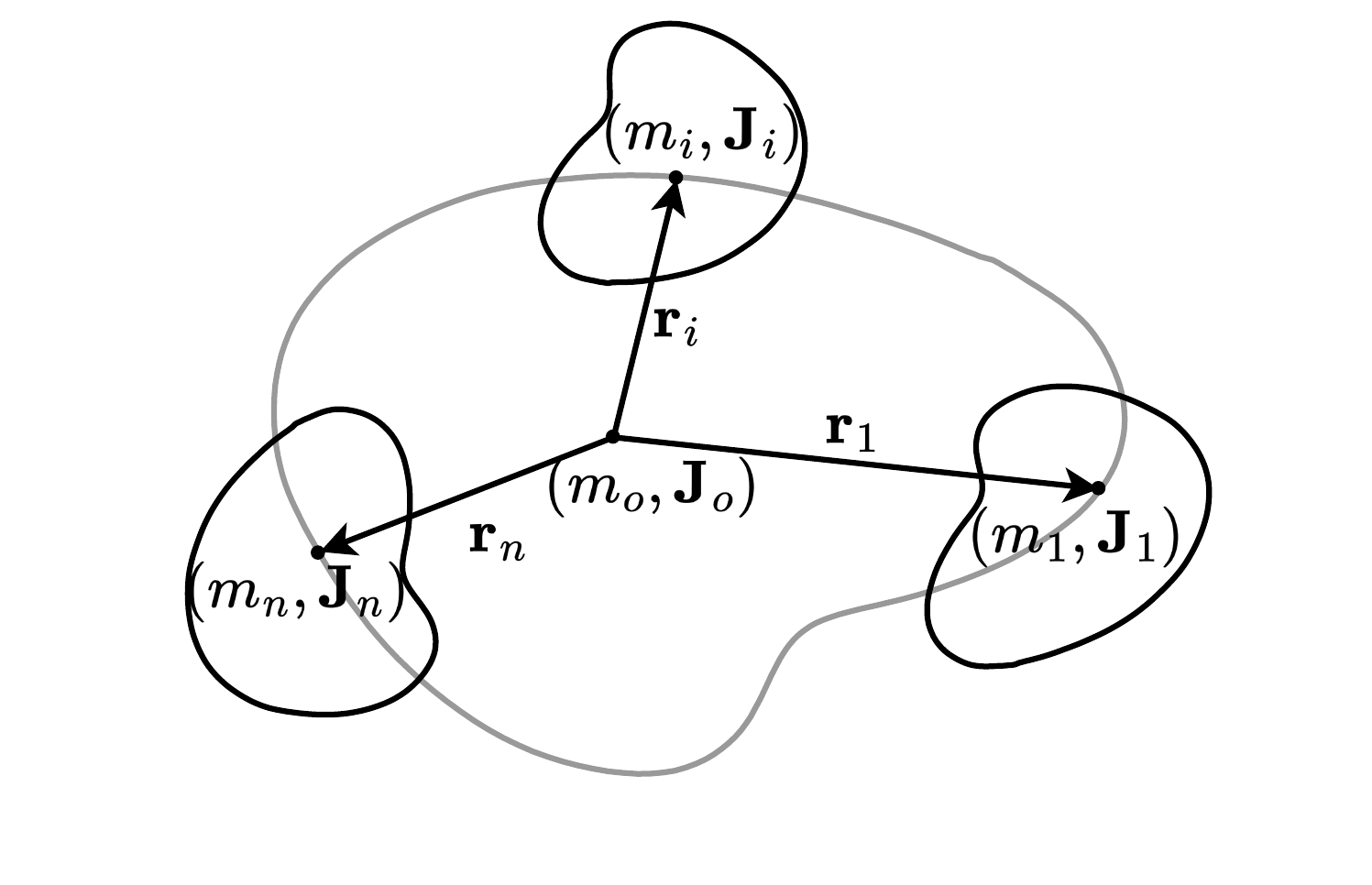}
    \caption{Example of a dynamically equivalent system of $n$ LMIEs.}
    \label{fig:equimomentalSystem}
\end{figure}

We call these discrete elements Lumped Mass-Inertia Elements (LMIEs) and define them as rigid bodies of arbitrary shape with known mass $m_i$ and inertia tensor $\mb{J}_i$. A dynamically equivalent system of $n$ LMIEs each with properties $(m_i, \mb{J}_i)$ for a rigid object with properties $(m_o, \mb{J}_o)$ must satisfy

\begin{align}
    m_o &= \sum_{i=1}^n m_i \  \text{(mass equivalence)}, \label{eq:MassSumEquivalence}\\
    \mb{J}_o &= \sum_{i=1}^n \mb{J}_i + \sum_{i=1}^n \mb{S}(\mb{r}_i) m_i \mb{S}(\mb{r}_i)^T \ \text{(inertia equiv.)}, \label{eq:MomentofInertiaEquivalence}\\
    \sum_{i=1}^n & \mb{r}_i m_i = \mb{0}_{3 \times 1} \ \text{(CoM equivalence)}. \label{eq:CoMEquivalence}
\end{align}

\subsection{\textcolor{black}{The Udwadia-Kalaba Equation}}
\label{subsec:Udwadia-KalabaEquation}
\textcolor{black}{The Udwadia-Kalaba equation }(\cite{Udwadia2010}) \textcolor{black}{is derived from Gauss's principle of least constraint and provides an alternative perspective on constrained rigid body motion. As it is the foundation for what will follow, we summarize the main results here and refer the reader to }\cite{Udwadia1992, Udwadia2010} \textcolor{black}{for further details.}

\textcolor{black}{Consider an \emph{unconstrained} system of $n$ LMIEs. Their equations of motion can be written as}

\begin{equation}
    \mb{M}(\mb{x}, t) \ddot{\mb{x}}^d = \hat{\mb{h}}(\mb{x}, \dot{\mb{x}}^d, t),
    \label{eq:UnconstrainedLMIEDynamics}
\end{equation}
\textcolor{black}{where $\mb{M}(\mb{x}, t) \in \mathbb{R}^{6n \times 6n}$ is the symmetric positive definite inertia matrix of the system containing the inertial parameters of the LMIEs (i.e., $\mb{M}(\mb{x}, t) = \text{diag}(
m_1 \mb{I}_3, \mb{J}_1, \ldots, m_n \mb{I}_3, \mb{J}_n)$). $\hat{\mb{h}}(\mb{x}, \dot{\mb{x}}^d, t)$ is the known stacked vector of wrenches acting on the LMIEs that includes the applied wrenches $\mb{h}$ and any other centripetal forces that may be present. $\dot{\mb{x}}^d, \ddot{\mb{x}}^d \in \mathbb{R}^6$ are the stacked  vectors of unconstrained velocities and accelerations of the LMIEs, respectively where superscript $d$ stands for \emph{desired}. Because the LMIEs are considered unconstrained, $\dot{\mb{x}}^d$ can take any arbitrary value, and $\ddot{\mb{x}}^d$ is the acceleration the LMIEs \emph{would have had}, had they not been fixed to the rigid body in motion }(\cite{Udwadia2002}).

\textcolor{black}{In the presence of kinematic constraints }(\ref{eq:KinConstraints}), \textcolor{black}{the actual \emph{constrained} accelerations of the LMIEs will differ from the free unconstrained accelerations they would have had. This incompatibility between the desired motion and constrained motion causes additional forces of constraint to be applied to the LMIEs }(\cite{Udwadia1992}). \textcolor{black}{The dynamic equation that describes the constrained motion of the LMIEs now takes the form }

\begin{equation}
    \mb{M}(\mb{x}, t) \ddot{\mb{x}} = \hat{\mb{h}}(\mb{x}, \dot{\mb{x}}, t) + \hat{\mb{h}}_c(\mb{x}, \dot{\mb{x}}, t)
    \label{eq:constrainedEquationofMotion}
\end{equation}
\textcolor{black}{where $\hat{\mb{h}}_c(\mb{x}, \dot{\mb{x}}, t) = \begin{bmatrix}
    \hat{\mb{h}}_{c,1}^T & \ldots & \hat{\mb{h}}_{c,n}^T
\end{bmatrix}^T$ is the generalized stacked vector of constraint wrenches acting on the LMIEs. It is these constraint wrenches which attempt to deform the rigid body and cause the generation of internal loads. They are added to the applied wrenches such that the resulting motion of each LMIE is consistent with the motion of the rigid body induced by the resultant wrench $\mb{h}_o$. This is illustrated in Figure }\ref{fig:ConstrainedAccelerations}. 

\begin{figure}[t!]
    \subfloat[Three wrenches $\mb{h}_i$ applied to a rigid body generate a resultant wrench $\mb{h}_o$.]{%
        \includegraphics[width=.4\linewidth]{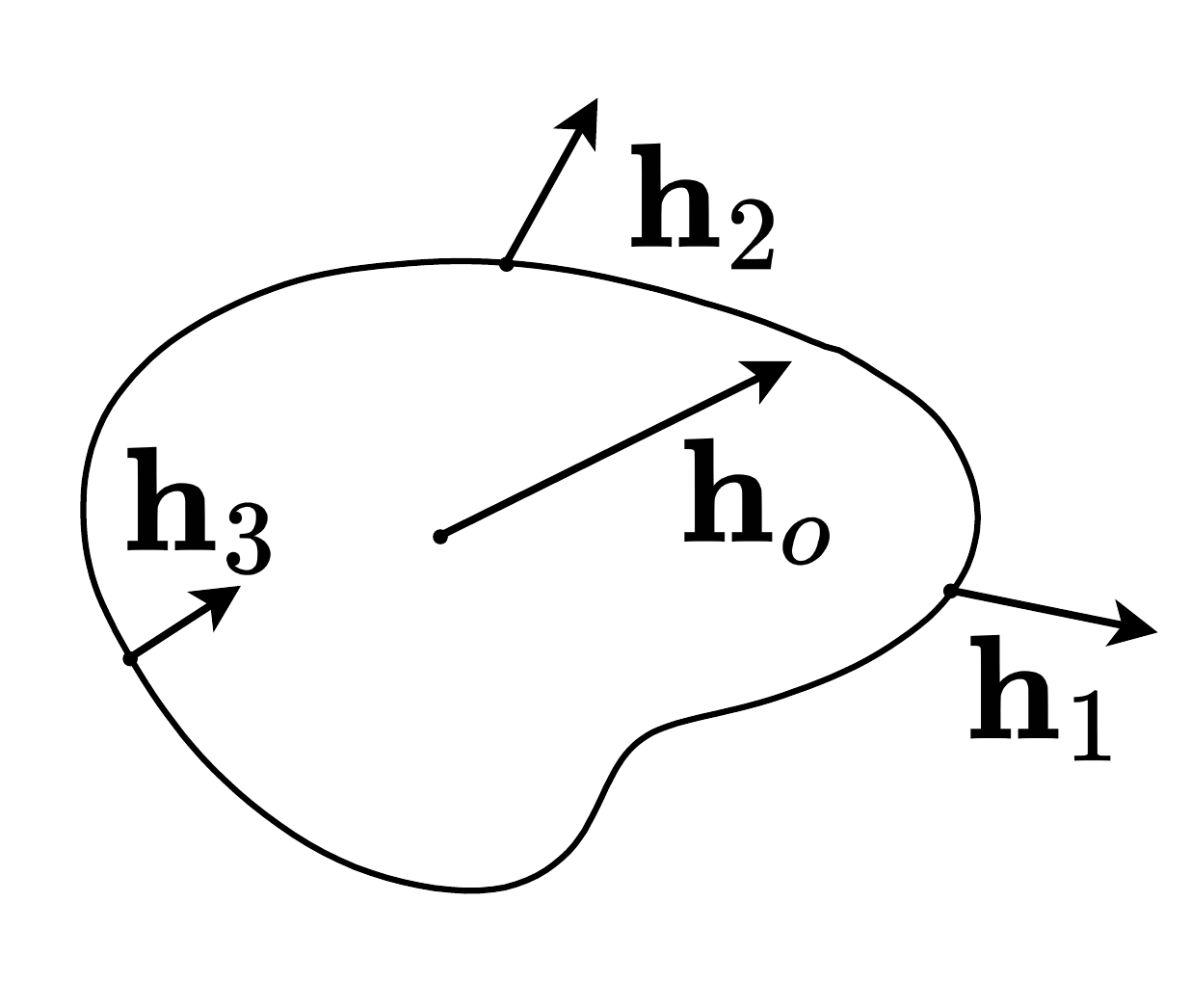}%
        \label{subfig:ConstrainedAccelerations1}%
    } \qquad
    \subfloat[The resultant wrench $\mb{h}_o$ induces rigid body acceleration $\ddot{\mb{x}}_o$.]{%
        \includegraphics[width=.4\linewidth]{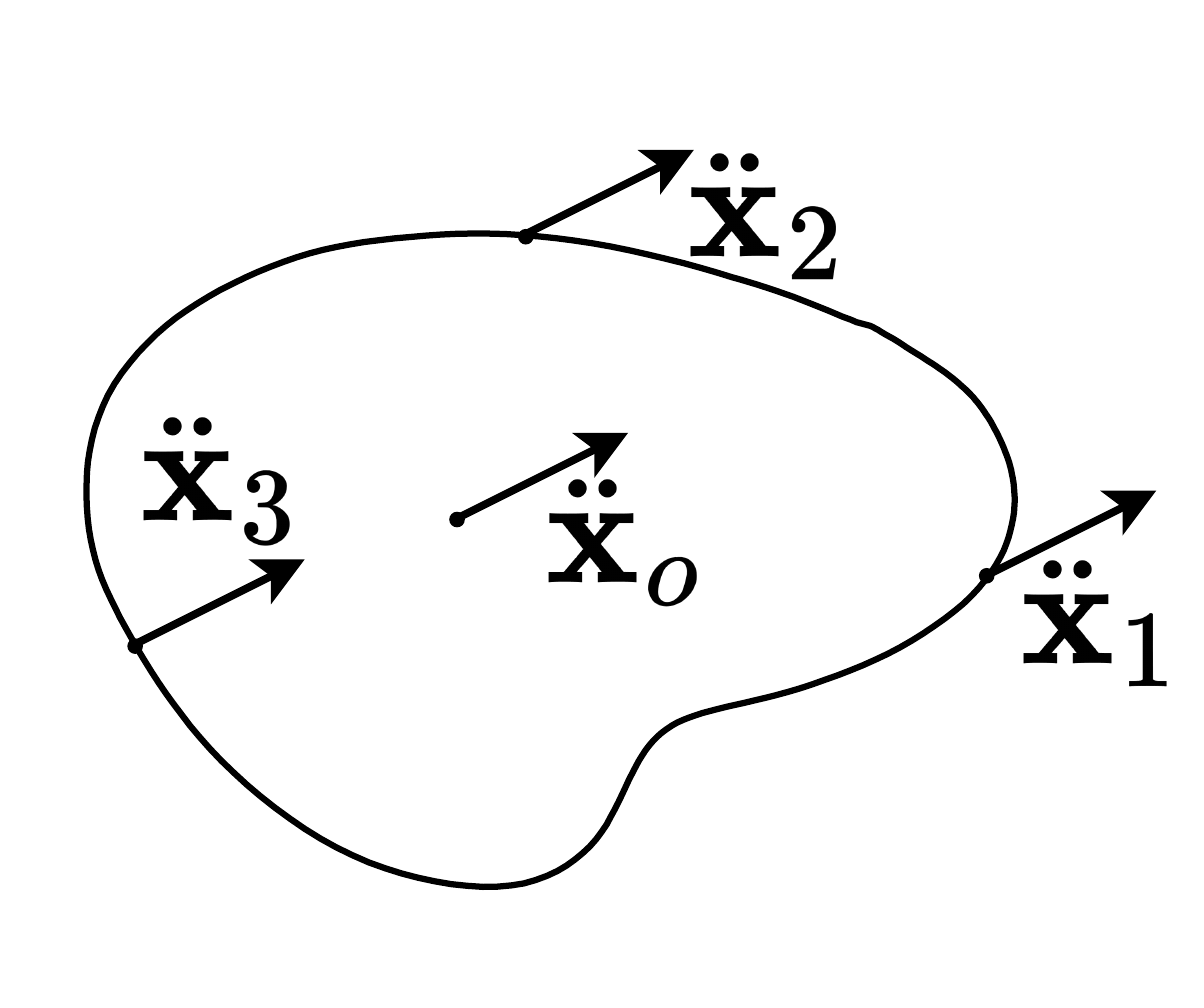}%
        \label{subfig:ConstrainedAccelerations2}%
    }\\
    \centering
    \subfloat[Constraint wrenches $\mb{h}_{c,i}$ ensure compliance with rigid body motion.]{%
        \includegraphics[width=.6\linewidth]{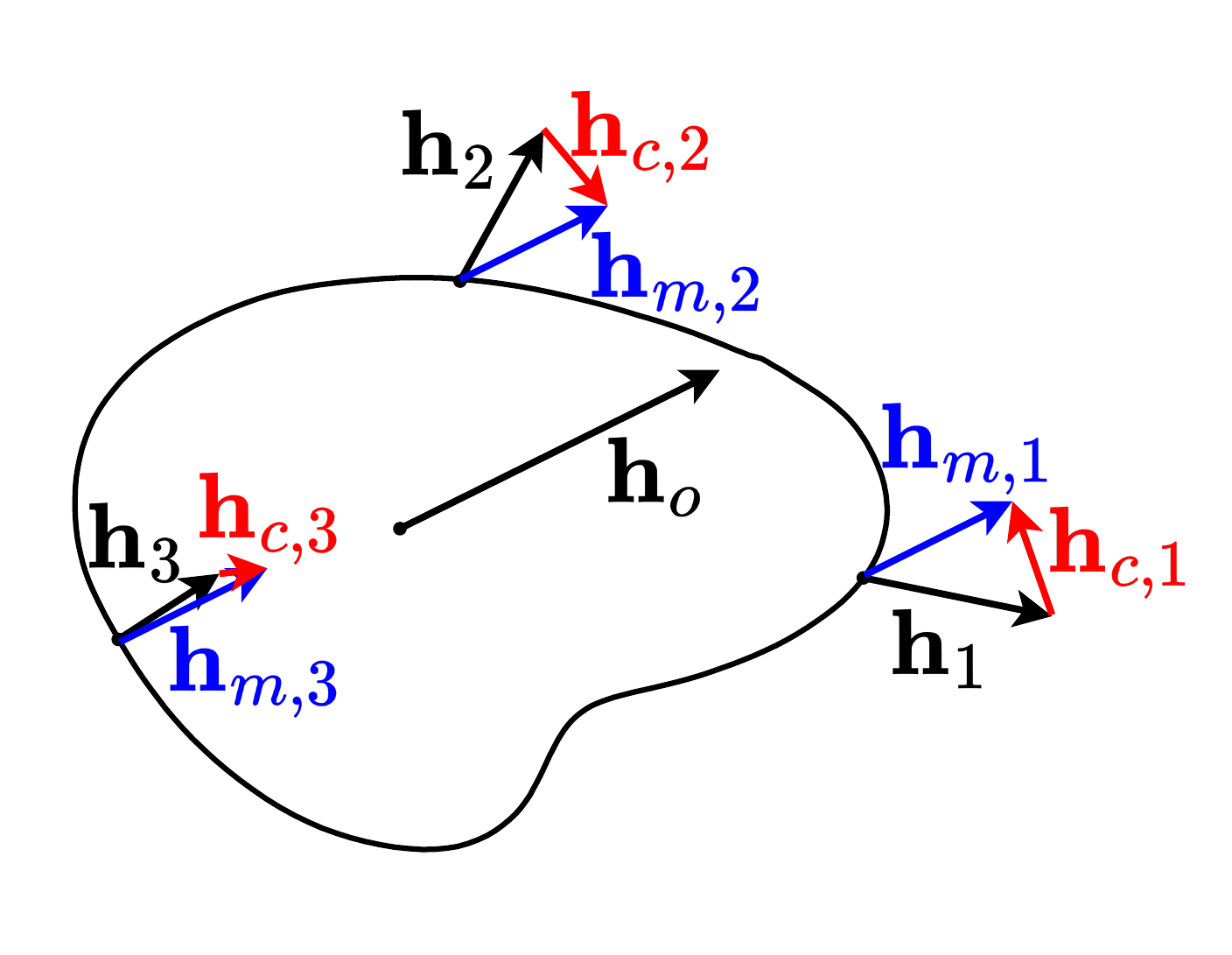}%
        \label{subfig:ConstrainedAccelerations3}%
    }
    \caption{Illustration of physical significance of constraint wrenches.}
    \label{fig:ConstrainedAccelerations}
\end{figure}

\textcolor{black}{The term on the left-hand side of} (\ref{eq:constrainedEquationofMotion}) \textcolor{black}{is the generalized stacked manipulating wrench vector $\hat{\mb{h}}_m = \begin{bmatrix}
    \hat{\mb{h}}_{m,1}^T & \ldots & \hat{\mb{h}}_{m,n}^T
\end{bmatrix}^T$. It is the set of wrenches required to impart the proper constrained accelerations on the LMIEs. We may therefore rewrite }(\ref{eq:constrainedEquationofMotion}) as 

\begin{equation}
     \hat{\mb{h}}_m = \hat{\mb{h}} + \hat{\mb{h}}_c
     \label{eq:GeneralizedWrenchEquilibrium}
\end{equation}
\textcolor{black}{where the arguments $(\mb{x}, \dot{\mb{x}}, t)$ have been omitted for clarity. Eq. }(\ref{eq:GeneralizedWrenchEquilibrium}) \textcolor{black}{is the fundamental equation which describes how loads are distributed in a rigid body in motion subject to multiple applied wrenches. }

The generalized constraint wrenches can be obtained explicitly with 

\begin{equation}
    \hat{\mb{h}}_c = \mb{M}^{\frac{1}{2}} \mb{B}^+ (\mb{b} - \mb{A} \ddot{\mb{x}}^d)
    \label{eq:explicitConstraintWrenches}
\end{equation}
where $\mb{B} = \mb{A} \mb{M^{-\frac{1}{2}}}$ and $\mb{e} = \mb{b} - \mb{A} \ddot{\mb{x}}^d$ is the kinematic error (i.e., the extent to which the unconstrained accelerations $\ddot{\mb{x}}^d_i$ of the LMIEs induced by the applied wrenches $\hat{\mb{h}}_i$ do not satisfy the kinematic constraints (\ref{eq:KinConstraints})). The reader is referred to \cite{Udwadia2010} for further details on this derivation.

\section{\textcolor{black}{Implications of the Udwadia-Kalaba Equation}}
\label{sec:Udwadia-KalabaImplications}
\subsection{\textcolor{black}{Internal Loads Generated by Centripetal Forces}}
\label{sec:CentripetalForceImplication}
\textcolor{black}{Some researchers have made the assumption that constraint wrenches are components of the applied wrenches }(\cite{Schmidts2016, Donner2018}) (see Eq. (\ref{eq:DecompSchmidtsetal})). \textcolor{black}{However, the Udwadia-Kalaba equation states that constraint wrenches are applied to an LMIE by the rigid body so as to make its constrained motion consistent with the kinematic constraints. In fact, constraint wrenches can be present when no wrenches are applied at all.}

\textcolor{black}{Imagine a planar, triangular rigid body spinning with a constant angular velocity $\omega_o$ about an axis that passes through its CoM and is normal to the plane. Each vertex of the triangle is at a distance $r$ from the CoM. A dynamically equivalent system of this rigid body can be constructed with an LMIE of mass $m_i$ located at each vertex. } 

\textcolor{black}{The velocity vectors $\dot{\mb{p}}_i$ of the vertices are perpendicular to the axis of rotation. Were they not rigidly attached to the rigid body (i.e., unconstrained), they would travel in a straight line in that direction. However, the normal acceleration $a_n = r \omega_o^2$ keep them spinning about the axis with the same angular velocity as every other point on the planar body such that the constraints imposed by rigid body motion are satisfied. This normal acceleration $a_{n,i}$ is therefore the required constrained acceleration of the $i$-th vertex, and if no external forces are applied, a constraint force is necessary to induce this constrained acceleration on the LMIE. This is illustrated in Figure }\ref{fig:SpinningTriangle}.

\begin{figure}
    \centering
    \includegraphics[width=0.6\linewidth]{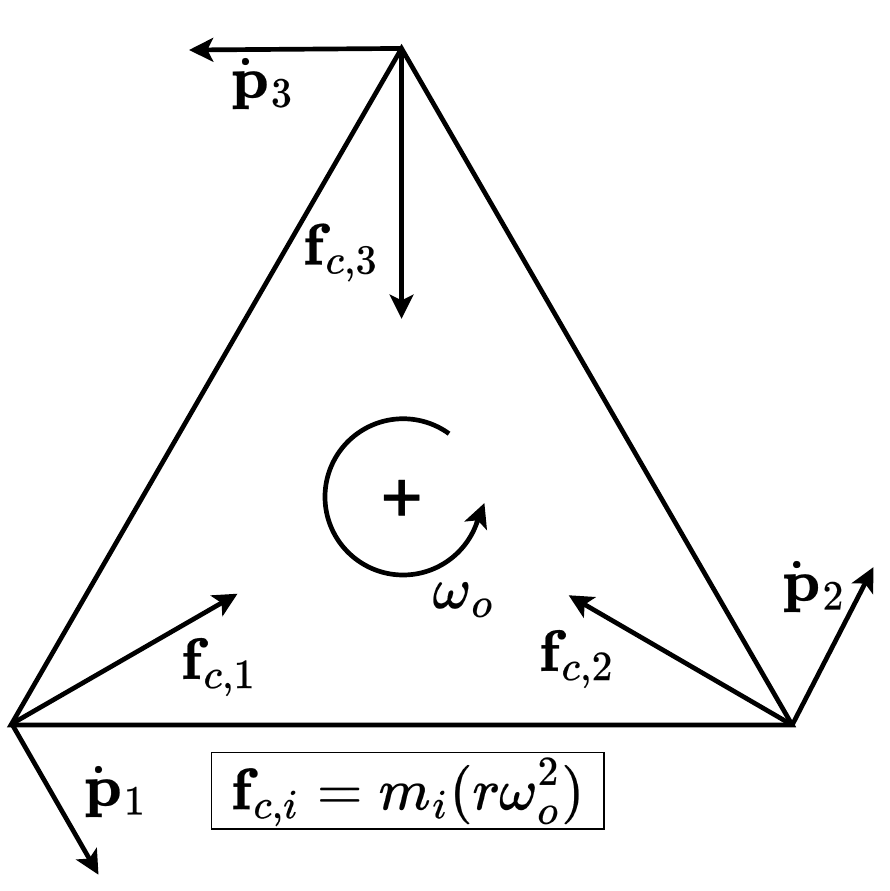}
    \caption{When a body is in motion, constraint wrenches are present even in the absence of applied wrenches.}
    \label{fig:SpinningTriangle}
\end{figure}

\textcolor{black}{In this work, we are primarily concerned with the internal loads generated by the interplay between the applied wrenches $\mb{h}_i$, not those due to the centripetal forces which can be present even when no external wrenches are applied. For this reason, we adopt Erhart and Hirche's virtual rigid body model where they substitute the actual rigid body of mass $m_o$, inertia tensor $\mb{J}_o$ and instantaneous velocity $\dot{\mb{x}}_o$ with a virtual rigid body of mass $m^*_o$ and inertia tensor $\mb{J}^*_o$ which we assume to be \emph{at rest}. Note that the inertial parameters $(m^*_o, \mb{J}^*_o)$ of the virtual rigid body cannot be chosen arbitrarily, as we will show in the next sub-sections. In what follows, any variable identified with an asterisk $(^*)$ corresponds to a dynamic quantity of the virtual rigid body or of the LMIEs in its dynamically equivalent system. }

\textcolor{black}{The Udwadia-Kalaba equation in its general form discussed in Section} \ref{subsec:Udwadia-KalabaEquation} \textcolor{black}{describes the constrained motion of the LMIEs using generalized wrench vectors $\hat{\mb{h}}$, $\hat{\mb{h}}_c$, and $\hat{\mb{h}}_m$ which also include centripetal forces. Since these are now zero in the rigid body at rest, we rewrite this equation as}

\begin{equation}
    \mb{h}_m = \mb{h} + \mb{h}_c 
    \label{eq:wrenchEquilibrium}
\end{equation}
\textcolor{black}{where $\mb{h}$ contains the wrenches applied to the LMIEs by the kinematic chains, $\mb{h}_c$ contains the constraint wrenches applied to the LMIEs by the rigid body to ensure the compatibility of their motion with the kinematic constraints, and $\mb{h}_m$ contains the manipulating wrenches required to induce the proper constrained accelerations on the LMIEs. None of them include additional contributions from inertial forces such as the $\boldsymbol{\omega}_o \times (\mb{J}_o \boldsymbol{\omega}_o)$ term in }(\ref{eq:NEtorque}).

\textcolor{black}{An interesting consequence of considering the virtual rigid body to be at rest is that the constrained accelerations of the LMIEs in its dynamically equivalent system are now given by}

\begin{equation}
    \ddot{\mb{x}}^*_i = \begin{bmatrix}
        \ddot{\mb{p}}^*_i \\ \dot{\boldsymbol{\omega}}^*_i
    \end{bmatrix} = \begin{bmatrix}
        \ddot{\mb{p}}^*_o + \dot{\boldsymbol{\omega}}^*_o \times \mb{r}_i \\ \dot{\boldsymbol{\omega}}^*_o
    \end{bmatrix},
\end{equation}
\textcolor{black}{since $\boldsymbol{\omega}^*_o = \mb{0}$. This is analogous to the twist vector of a rigid body in motion which defines the helicoidal field of velocities $\dot{\mb{p}}_i$ of the points on the rigid body. As a result, the vector field of constrained accelerations $\ddot{\mb{p}}_i$ of the LMIEs on the virtual rigid body also forms a helicoidal vector field.}

\subsection{\textcolor{black}{Scale Invariance of the Dynamically Equivalent System}}
\label{sec:ScaleInvarianceImplication}
\textcolor{black}{The virtual rigid body and its dynamically equivalent system are simply convenient mathematical tools used to distribute the resultant wrench $\mb{h}_o$ among the applied wrenches $\mb{h}$ while satisfying certain constraints to obtain the manipulating wrench distribution. They serve to discretize the handled object with a continuous mass distribution into a set of discrete elements (i.e., the LMIEs) and to ensure that we are only considering the internal loads generated by the applied wrenches. The instantaneous acceleration of the virtual rigid body $\ddot{\mb{x}}^*_o$ induced by $\mb{h}_o$ may differ from the instantaneous acceleration $\ddot{\mb{x}}_o$ of the actual object if $\mb{h}_o$ were applied to it. This does not negate the validity of our proposed framework, since calculating the virtual instantaneous accelerations induced by the wrenches is only an intermediary step which serves to translate the resultant wrench to the wrenches required to generate it at the wrench application points. In fact, the inertial properties of the virtual rigid body $(m^*_o, \mb{J}^*_o)$ need not match those of the handled object $(m_o, \mb{J}_o)$ exactly, they need only be \emph{proportional}.} 

\textcolor{black}{Suppose a wrench $\mb{h}_o$ is applied to the CoM a rigid body of mass $m_o$ and inertia tensor $\mb{J}_o$. The resulting acceleration is given by}

\begin{equation}
    \ddot{\mb{x}}_o = \mb{M}_o^{-1} \mb{h}_o
\end{equation}
\textcolor{black}{where the centripetal terms have been omitted so as not to consider the internal loads generated by centripetal forces as described in the previous sub-section. }

\textcolor{black}{If we construct a dynamically equivalent system of $n$ LMIEs for this rigid body, the constrained acceleration $\ddot{\mb{x}}_i$ of each LMIE can be calculated with the kinematic constraint equations} (\ref{eq:LinearKinematicConstraint}) and (\ref{eq:AngularKinematicConstraint}). \textcolor{black}{The manipulating wrenches needed to induce these constrained accelerations can in turn be found with }

\begin{equation}
    \mb{h}_{m,i} = \mb{M}_i\ddot{\mb{x}}_i, \quad \mb{M}_i = \begin{bmatrix}
        m_i \mb{I}_3 & \mb{0} \\ \mb{0} & \mb{J}_i
    \end{bmatrix}.
\end{equation}

\textcolor{black}{Now, let us do the same using a dynamically equivalent system of the virtual rigid body with mass $m^*_o = k m_o$ and inertia tensor $\mb{J}^*_o = k \mb{J}_o$ where $k$ is a proportionality constant. We can therefore write}

\begin{equation}
    \mb{M}^*_o = k \mb{M}_o.
\end{equation}
\textcolor{black}{In order for constraints }(\ref{eq:MassSumEquivalence})-(\ref{eq:CoMEquivalence}) \textcolor{black}{to still be satisfied, we must also scale the inertial properties of the LMIEs by the same amount: }

\begin{equation}
    \mb{M}^*_i = k \mb{M}_i \quad \forall i.
    \label{eq:ScaledInertia}
\end{equation}

\textcolor{black}{The acceleration induced on the dynamically equivalent system is then }

\begin{equation}
    \ddot{\mb{x}}^*_o = (\mb{M}^*_o)^{-1} \mb{h}_o = \frac{1}{k} \ddot{\mb{x}}_o
\end{equation}
\textcolor{black}{and, consequently, the constrained acceleration of every LMIE in this case is}

\begin{equation}
    \ddot{\mb{x}}^*_i = \frac{1}{k} \ddot{\mb{x}}_i.
    \label{eq:ScaledAcceleration}
\end{equation}

\textcolor{black}{Finally, if we denote by $\mb{h}'_{m,i}$ the manipulating wrench needed to induce the scaled constrained acceleration $\ddot{\mb{x}}^*_i$ on the $i$-th LMIE of mass $m^*_i$ and inertia tensor $\mb{J}^*_i$, we can obtain it with}

\begin{equation}
    \mb{h}'_{m,i} = \mb{M}^*_i \ddot{\mb{x}}^*_i.
    \label{eq:ScaledWrench}
\end{equation}

\textcolor{black}{Substituting }(\ref{eq:ScaledInertia}) and (\ref{eq:ScaledAcceleration}) into (\ref{eq:ScaledWrench}), \textcolor{black}{we see that}

\begin{equation}
    \mb{h}'_{m,i} = k \mb{M}_i \left(\frac{1}{k} \ddot{\mb{x}}_i\right) = \mb{h}_{m,i},
\end{equation}
\textcolor{black}{which is the same solution.}

\textcolor{black}{In other words, if the virtual rigid body has, for example, double the inertia of the physical object, its instantaneous acceleration will be half that of the physical object and so will the instantaneous accelerations of the LMIEs. However, the inertia of the LMIEs will be double those of the physical object, and the wrenches required to induce half the acceleration on a rigid body with double the inertia will be the same since the effects of the scaling cancel. }

\subsection{The Need for an Additional Constraint}
\label{sec:AdditionalConstraintJustification}
In their original theorem (Theorem \ref{th:BadParamMPInv}) Erhart and Hirche use the same constraints as those required for constructing a dynamically equivalent system (see Eqs. (\ref{eq:MassSumEquivalence})-(\ref{eq:CoMEquivalence})) to limit the feasible solutions for the virtual inertia parameters $(m^*_i, \mb{J}^*_i)$. However, we have found that these are not sufficient for calculating valid manipulating wrench distributions and that an additional constraint is required. 

The additional constraint pertains to the alignment of the manipulating torques. The resultant torque $\mb{t}_o$ is obtained with

\begin{equation}
    \mb{t}_o = \sum_{i=1}^n \mb{t}_{m,i} + \sum_{i=1}^n (\mb{r}_i \times \mb{f}_{m,i}),
    \label{eq:resultantTorque}
\end{equation}
where the second term on the right-hand side is the manipulating force-induced torque $\mb{t}_{fm}$. The manipulating torques $\mb{t}_{m,i}$ and the manipulating force-induced torque $\mb{t}_{fm}$ of an admissible manipulating wrench distribution will sum to $\mb{t}_o$ yet not have any antagonistic components that, when summed, cancel and have no effect on the net wrench applied to the object. This is illustrated in Figure \ref{fig:TorqueCondition} where a pure torque $\mb{t}$ and a force-induced torque $\mb{t}_f$ sum to a resultant torque $\mb{t}_o$. In Figure \ref{subfig:CompensatedTorques}, $\mb{t}_f$ has a component that is orthogonal to $\mb{t}_o$. This $x$-component must be compensated by $\mb{t}$ in order for their resultant to equal $\mb{t}_o$. This compensation induces constraint wrenches. Conversely, in Figure \ref{subfig:ManipulatingTorques}, both $\mb{t}_f$ and $\mb{t}$ are parallel and sum to $\mb{t}_o$ without need for compensation and therefore do not generate constraint wrenches.

\begin{figure}[t!]
    \subfloat[$\mb{t}_f$ and $\mb{t}$ have components perpendicular to $\mb{t}_o$ that cancel and cause internal loads.]{%
        \includegraphics[width=.45\linewidth]{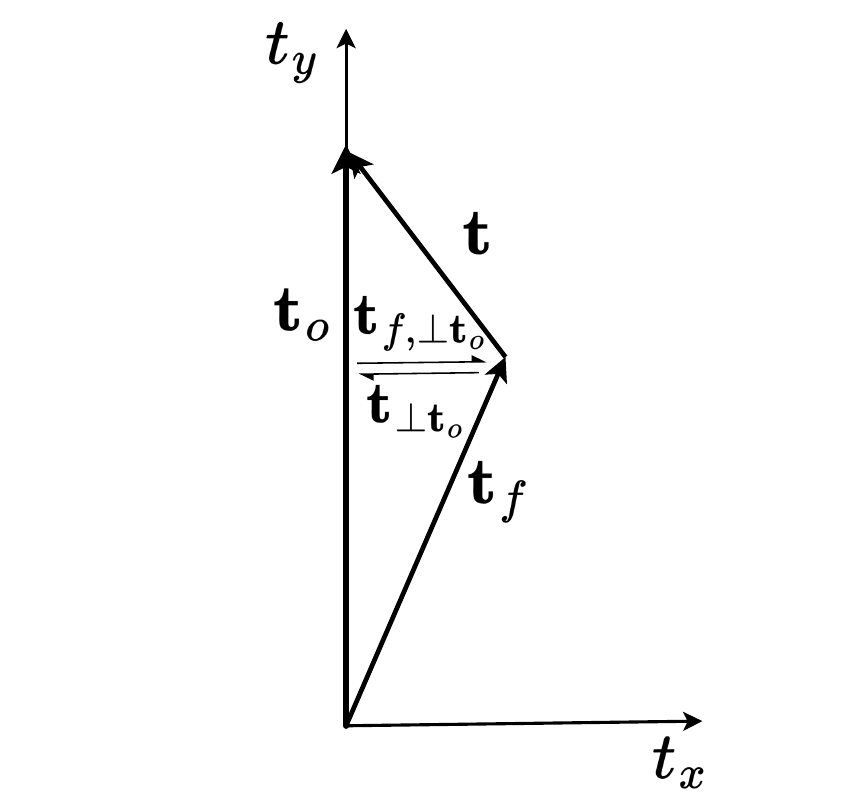}%
        \label{subfig:CompensatedTorques}%
    } \qquad
    \subfloat[$\mb{t}_f$ and $\mb{t}$ have no antagonistic components and sum to $\mb{t}_o$ without generating internal loads.]{%
        \includegraphics[width=.42\linewidth]{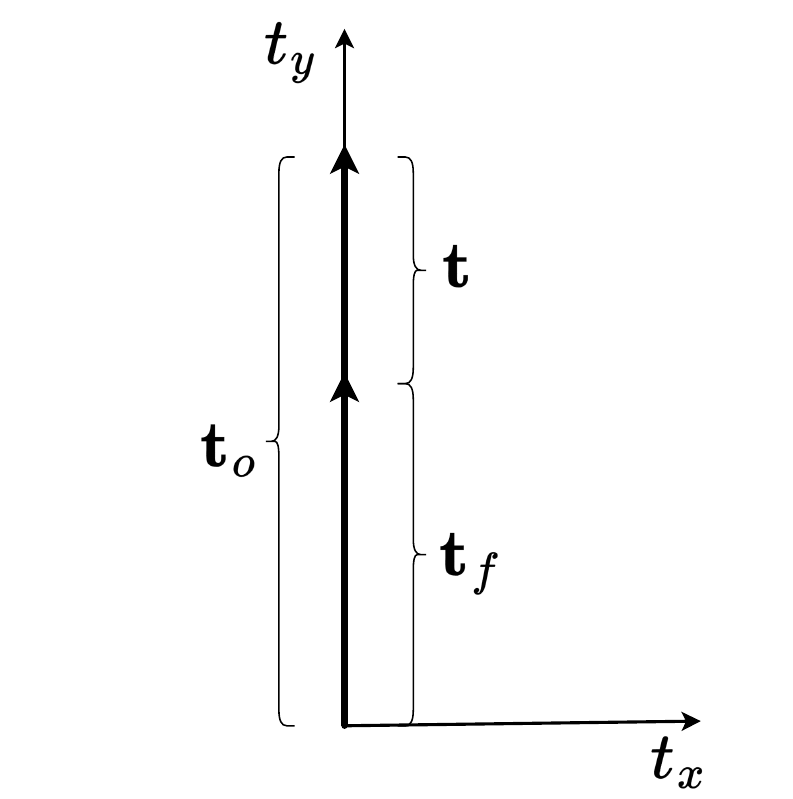}%
        \label{subfig:ManipulatingTorques}%
    }\\
    \centering
    \caption{Example of orthogonal torque component cancellation in non-parallel torques causing internal loads.}
    \label{fig:TorqueCondition}
\end{figure}

With this realization, we may rewrite (\ref{eq:resultantTorque}) as 

\begin{equation}
    ||\mb{t}_o || \mb{u}_o = \sum_{i=1}^n ||\mb{t}_{m,i}|| \mb{u}_o + ||\mb{t}_{fm}|| \mb{u}_o
\end{equation}
where $\mb{u}_o$ is a unit vector in the direction of $\mb{t}_o$. 

The angular acceleration of the virtual rigid body is found with 

\begin{equation}
    \dot{\boldsymbol{\omega}}^*_o = (\mb{J}^*_o)^{-1} \mb{t}_o.
    \label{eq:angularDynamicsWhole}
\end{equation}
To satisfy constraint (\ref{eq:AngularKinematicConstraint}), we may consider a set of $n+1$ sub-elements ($n$ sub-elements with inertia $\mb{J}^*_i$ and one sub-element with inertia $\sum_{i=1}^n \mb{S}(\mb{r}_i) m^*_i \mb{S}(\mb{r}_i)^T$). The angular acceleration of the $n$ elements with moment of inertia tensor $\mb{J}^*_i$ is given by 

\begin{equation}
    \dot{\boldsymbol{\omega}}^*_i = (\mb{J}^*_i)^{-1} \mb{t}_{m,i},
    \label{eq:angularDynamicsLMIE}
\end{equation}
and the angular acceleration of the system of $n$ point masses is given by 

\begin{equation}
    \dot{\boldsymbol{\omega}}^*_f = \left(\sum_{i=1}^n \mb{S}(\mb{r}_i) m^*_i \mb{S}(\mb{r}_i)^T \right)^{-1} \mb{t}_{fm}.
    \label{eq:angularDynamicsForces}
\end{equation}

Substituting (\ref{eq:angularDynamicsWhole}) and (\ref{eq:angularDynamicsLMIE}) into (\ref{eq:AngularKinematicConstraint}) yields

\begin{equation}
    \begin{aligned}
        (\mb{J}^*_o)^{-1} \mb{t}_o &= (\mb{J}^*_i)^{-1} \mb{t}_{m,i} \\
        ||\mb{t}_o|| (\mb{J}^*_o)^{-1} \mb{u}_o &= ||\mb{t}_{m,i}|| (\mb{J}^*_i)^{-1} \mb{u}_o \quad \forall i, \label{eq:manipulatingTorqueDynamicEquivalence}
\end{aligned}
\end{equation} 
since every $\mb{t}_{m,i}$ must be parallel to $\mb{t}_o$. For (\ref{eq:manipulatingTorqueDynamicEquivalence}) to hold for all vectors $\mb{u}_o$, we must have

\begin{equation}
    ||\mb{t}_o|| (\mb{J}^*_o)^{-1} = ||\mb{t}_{m,i}|| (\mb{J}^*_i)^{-1}.
\end{equation}
Solving for $\mb{J}^*_i$, we obtain

\begin{equation}
    \mb{J}^*_i = \frac{||\mb{t}_{m,i}||}{||\mb{t}_o||} \mb{J}^*_o,
\end{equation}
where we see that $\mb{J}^*_i$ and $\mb{J}^*_o$ must be proportional.

Repeating this process for the manipulating force-induced torque $\mb{t}_{fm}$ yields

\begin{equation}
    \sum_{i=1}^n \mb{S}(\mb{r}_i) m^*_i \mb{S}(\mb{r}_i)^T = \frac{||\mb{t}_{fm}||}{||\mb{t}_o||} \mb{J}^*_o.
\end{equation}

We therefore state that the inertia parameters $(m^*_i, \mb{J}^*_i)$ chosen for the LMIEs must also satisfy

\begin{equation}
        \mb{J}^*_i \ \propto \sum_{i=1}^n \mb{S}(\mb{r}_i) m^*_i \mb{S}(\mb{r}_i)^T \propto \ \mb{J}^*_o \ \forall i
\end{equation}
in order for $\mb{G}^+_M$ to return valid manipulating wrench distributions. \textcolor{black}{Note that the symbol $\propto$ means }\say{proportional to}.

\textcolor{black}{The reason for this oversight is likely that Erhart and Hirche only provide simplified examples in one and two dimensions. In these cases, there are either none or only one axis of rotation (no rotations are permitted in 1 dimension and only rotation about an axis normal to the plane is permitted in 2 dimensions). Hence, the proportionality of the rotational inertia was guaranteed because it was defined by a scalar value. In 3 dimensions, rotational inertia is defined by a $3 \times 3$ inertia tensor, and proportionality is an important consideration that cannot be omitted when choosing the parameters $(m^*_i, \mb{J}^*_i)$ as we have shown here. }

\section{Generalized Theory of Load Distribution in Robotic Systems}
\label{sec:GeneralizedTheory}
\subsection{\textcolor{black}{Overview}}
\textcolor{black}{Our theorem describes the conditions under which no internal loads are generated in a rigid body subjected to multiple wrenches applied by independent kinematic chains. Given the resultant wrench and wrench application points, it fully characterizes the feasible set of $n$ manipulating wrenches. It does so by applying the Udwadia-Kalaba equation to a dynamically equivalent system of the virtual rigid body consisting of $n$ LMIEs located at the wrench application points. }

\textcolor{black}{The theorem works by considering the instantaneous acceleration $\ddot{\mb{x}}^*_o$ that would be induced on the virtual rigid body at rest by the resultant wrench obtained from }(\ref{eq:FullDynamics}) \textcolor{black}{at a given time step in a dynamic trajectory. This instantaneous acceleration is distinct from the actual instantaneous acceleration $\ddot{\mb{x}}_o$ of the physical object. For example, in a static application where $\mb{h}_o = \mb{h}_{ext}$, the physical object is held still by wrenches applied to it by its environment. Nonetheless, we calculate the instantaneous acceleration $\ddot{\mb{x}}^*_o$ that would be induced on the virtual rigid body by $\mb{h}_o$ were it not held still, and use this to calculate the wrenches required to generate $\mb{h}_o$ while not generating internal loads.}

\textcolor{black}{The procedure can be explained in three steps. First, the instantaneous acceleration $\ddot{\mb{x}}^*_o$ of the virtual rigid body induced by the resultant wrench $\mb{h}_o$ is calculated. Second, the resulting constrained acceleration $\ddot{\mb{x}}^*_i$ of each LMIE is determined using the kinematic constraints. Third, the manipulating wrenches $\mb{h}_{m,i}$ required to induce these constrained accelerations are obtained from the dynamics equations of the LMIEs. }

\subsection{Fundamental Theorem of Load Distribution in Robotic Systems}
We now present our main result. 

\begin{theorem}
\label{th:InternalLoadsAsHelicoidalField}
     Let a virtual rigid body of mass $m^*_o$ and inertia tensor $\mb{J}^*_o$ at rest be subjected to a resultant wrench $\mb{h}_o$ with respect to its CoM obtained from (\ref{eq:FullDynamics}). Denote by 

    \begin{equation}
    \label{eq:RBDynamics}
        \ddot{\mb{x}}^*_o = \begin{bmatrix}
            \ddot{\mb{p}}^*_o \\ \dot{\boldsymbol{\omega}}^*_o
        \end{bmatrix} = (\mb{M}^*_o)^{-1} \mb{h}_o
    \end{equation}
    the instantaneous acceleration of the virtual rigid body induced by $\mb{h}_o$. Let the instantaneous acceleration at a location $\mb{r}_i$ with respect to the CoM of the virtual rigid body be obtained with the kinematic constraints such that

    \begin{equation}
        \ddot{\mb{x}}^*_i = \begin{bmatrix}
            \ddot{\mb{p}}^*_i \\ \dot{\boldsymbol{\omega}}^*_i
        \end{bmatrix} = \begin{bmatrix}
            \ddot{\mb{p}}^*_o + \dot{\boldsymbol{\omega}}^*_o \times \mb{r}_i \\ \dot{\boldsymbol{\omega}}^*_o
        \end{bmatrix}.
        \label{eq:LMIEConstrainedAcceleration}
    \end{equation}
    
    For each stacked wrench vector $\mb{h} = \begin{bmatrix}
        \mb{h}_1^T & \ldots & \mb{h}_n^T 
    \end{bmatrix}^T$ that solves (\ref{eq:staticEq}), if each applied wrench $\mb{h}_i$ imparts an instantaneous virtual acceleration 

    \begin{equation}
    \label{eq:LMIEDynamics}
        (\ddot{\mb{x}}^*_i)^d = (\mb{M}^*_i)^{-1} \mb{h}_i
    \end{equation}
    onto the LMIE at $\mb{r}_i$ for some positive-definite virtual masses and inertia tensors $m^*_i \in \mathbb{R}$ and $\mb{J}^*_i \in \mathbb{R}^{3 \times 3}$ with 
    \begin{align}
        m^*_o &= \sum_{i=1}^n m^*_i, \label{eq:virtualMassSum} \\
        \mb{J}^*_o &= \sum_{i=1}^n \mb{J}_i^* + \sum_{i=1}^n \mb{S}(\mb{r}_i) m^*_i \mb{S}(\mb{r}_i)^T, \label{eq:inertiaTensorEquivalence} \\
        \quad & \sum_{i=1}^n \mb{r}_i m^*_i = \mb{0}, \label{eq:virtualCoMEquivalence} \\
        \mb{J}^*_i \ \propto& \sum_{i=1}^n \mb{S}(\mb{r}_i) m^*_i \mb{S}(\mb{r}_i)^T \propto \ \mb{J}^*_o \ \forall i, \label{eq:inertiaProportionality}
    \end{align}
    then the necessary and sufficient condition for no internal loads is

    \begin{equation}
        (\ddot{\mb{x}}^*_i)^d = \ddot{\mb{x}}^*_i \ \Longrightarrow \ \mb{h}_i =\mb{h}_{m,i} \quad \forall i
    \end{equation}
\end{theorem}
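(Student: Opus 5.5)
The argument rests on the Udwadia--Kalaba relation (\ref{eq:wrenchEquilibrium}), $\mb{h}_m = \mb{h} + \mb{h}_c$, applied to the dynamically equivalent system of LMIEs of the virtual rigid body at rest. Since $\boldsymbol{\omega}^*_o=\mb{0}$, the centripetal vector $\mb{b}$ vanishes and (\ref{eq:explicitConstraintWrenches}) becomes
\begin{equation*}
\mb{h}_c = -(\mb{M}^*)^{\frac{1}{2}} \mb{B}^+ \mb{A} (\ddot{\mb{x}}^*)^d .
\end{equation*}
Here ``no internal loads'' is taken to mean $\mb{h}_c=\mb{0}$, with $\mb{h}_{m,i}=\mb{M}^*_i\ddot{\mb{x}}^*_i$ the manipulating wrench. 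I would prove the two directions separately.

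\textbf{Sufficiency.} Suppose $(\ddot{\mb{x}}^*_i)^d=\ddot{\mb{x}}^*_i$ for all $i$. The constrained accelerations (\ref{eq:LMIEConstrainedAcceleration}) satisfy $\mb{A}\ddot{\mb{x}}^*=\mb{0}$, because they are generated by a single rigid-body acceleration. Hence the kinematic error $\mb{e}$ vanishes, so $\mb{h}_c=\mb{0}$. From (\ref{eq:LMIEDynamics}) we then get $\mb{h}_i=\mb{M}^*_i\ddot{\mb{x}}^*_i=\mb{h}_{m,i}$.

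I must also check that the manipulating distribution actually reproduces $\mb{h}_o$, so that it lies in the solution set of (\ref{eq:staticEq}). Summing forces gives
\begin{equation*}
\sum_i m^*_i(\ddot{\mb{p}}^*_o+\dot{\boldsymbol{\omega}}^*_o\times\mb{r}_i)=m^*_o\ddot{\mb{p}}^*_o=\mb{f}_o ,
\end{equation*}
using (\ref{eq:virtualMassSum}) and (\ref{eq:virtualCoMEquivalence}). Summing torques gives
\begin{equation*}
\sum_i \mb{J}^*_i\dot{\boldsymbol{\omega}}^*_o+\sum_i \mb{r}_i\times m^*_i(\ddot{\mb{p}}^*_o+\dot{\boldsymbol{\omega}}^*_o\times\mb{r}_i) .
\end{equation*}
The $\ddot{\mb{p}}^*_o$ term drops by (\ref{eq:virtualCoMEquivalence}), and the rest equals $\mb{J}^*_o\dot{\boldsymbol{\omega}}^*_o=\mb{t}_o$ by (\ref{eq:inertiaTensorEquivalence}), since $\mb{r}_i\times(\dot{\boldsymbol{\omega}}\times\mb{r}_i)=\mb{S}(\mb{r}_i)\mb{S}(\mb{r}_i)^T\dot{\boldsymbol{\omega}}$. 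Condition (\ref{eq:inertiaProportionality}) then ensures that each $\mb{t}_{m,i}=\mb{J}^*_i(\mb{J}^*_o)^{-1}\mb{t}_o$ and $\mb{t}_{fm}$ are parallel to $\mb{t}_o$. This gives the absence of antagonistic torque components argued in Section~\ref{sec:AdditionalConstraintJustification}.

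\textbf{Necessity.} Suppose $\mb{h}$ solves (\ref{eq:staticEq}) and produces no internal loads, i.e.\ $\mb{h}_c=\mb{0}$. I would show $\mb{A}(\ddot{\mb{x}}^*)^d=\mb{0}$ using that $\mb{B}^+\mb{e}=\mb{0}$ iff $\mb{e}\perp\mathrm{range}(\mb{B})$. Since $\mb{A}$ has full row rank (its rows are block-triangular in the $\mb{I}_3$ blocks), $\mb{B}$ is surjective and $\mb{B}^+$ is injective, so $\mb{e}=\mb{0}$. Thus $(\ddot{\mb{x}}^*)^d$ is a rigid-body acceleration field: $(\ddot{\mb{x}}^*_i)^d=[\mb{a}+\boldsymbol{\alpha}\times\mb{r}_i;\ \boldsymbol{\alpha}]$ for some $\mb{a},\boldsymbol{\alpha}$.

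It remains to identify $(\mb{a},\boldsymbol{\alpha})$ with $\ddot{\mb{x}}^*_o$. Applying $\mb{G}$ to $\mb{h}_i=\mb{M}^*_i(\ddot{\mb{x}}^*_i)^d$ and using the same summation computation as above gives $\mb{G}\mb{h}=\mb{M}^*_o[\mb{a};\boldsymbol{\alpha}]$. Since also $\mb{G}\mb{h}=\mb{h}_o=\mb{M}^*_o\ddot{\mb{x}}^*_o$ and $\mb{M}^*_o$ is invertible, we get $[\mb{a};\boldsymbol{\alpha}]=\ddot{\mb{x}}^*_o$, hence $(\ddot{\mb{x}}^*_i)^d=\ddot{\mb{x}}^*_i$.

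The main obstacle is the precise role of the proportionality condition (\ref{eq:inertiaProportionality}). The Udwadia--Kalaba argument alone only uses (\ref{eq:virtualMassSum})--(\ref{eq:virtualCoMEquivalence}). The paper's notion of ``internal loads'' additionally excludes antagonistic torques, so I expect to have to invoke the derivation of Section~\ref{sec:AdditionalConstraintJustification} to show that, under (\ref{eq:inertiaProportionality}), $\mb{h}_c=\mb{0}$ coincides with the full no-internal-load notion in both directions. I would handle this by showing that the torque split $\mb{t}_{m,i}$ and $\mb{t}_{fm}$ induced by the rigid-body acceleration is exactly the parallel decomposition required there.
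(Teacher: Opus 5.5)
Your argument is correct, but it takes a different route from the paper's.

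\textbf{The paper's route.} The proof works LMIE by LMIE. It reads the constraint wrench directly off the constrained equation of motion as $\mb{h}_{c,i} = \mb{M}^*_i\bigl(\ddot{\mb{x}}^*_i - (\mb{M}^*_i)^{-1}\mb{h}_i\bigr)$. In doing so it takes for granted that each LMIE's constrained acceleration is the field (\ref{eq:LMIEConstrainedAcceleration}) generated by $\ddot{\mb{x}}^*_o$. The equivalence $\mb{h}_{c,i}=\mb{0}\iff(\ddot{\mb{x}}^*_i)^d=\ddot{\mb{x}}^*_i$ then follows immediately from invertibility of $\mb{M}^*_i$. Neither (\ref{eq:staticEq}) nor (\ref{eq:virtualMassSum})--(\ref{eq:inertiaProportionality}) is used explicitly.

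\textbf{Your route.} You start from the global closed form (\ref{eq:explicitConstraintWrenches}) with $\mb{b}=\mb{0}$. You use the full row rank of $\mb{A}$ to get $\mb{h}_c=\mb{0}\iff\mb{A}(\ddot{\mb{x}}^*)^d=\mb{0}$. You then use $\mb{G}\mb{h}=\mb{h}_o$ together with (\ref{eq:virtualMassSum})--(\ref{eq:virtualCoMEquivalence}) to identify the resulting rigid-body field with $\ddot{\mb{x}}^*_o$.

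\textbf{What each buys.} The paper's version is shorter. Yours is longer, but it makes explicit where dynamic equivalence enters. That is exactly what ties the Udwadia--Kalaba condition $\mb{h}_c=\mb{0}$ to the specific field generated by $\ddot{\mb{x}}^*_o$, rather than presupposing it. Your version also verifies that $\mb{h}_m$ actually reproduces $\mb{h}_o$, which the paper leaves implicit.

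\textbf{The proportionality condition.} The paper's proof never invokes (\ref{eq:inertiaProportionality}), so you need not treat it as an obstacle. It enters only through the observation you already make: $\mb{J}^*_i=c_i\mb{J}^*_o$ gives $\mb{t}_{m,i}=c_i\mb{t}_o$, and likewise $\mb{t}_{fm}\parallel\mb{t}_o$.
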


The proof of this theorem is given in Appendix A. 

\subsection{The Helicoidal Virtual Acceleration Field as the Sum of Two Fields}
\textcolor{black}{In our characterization of manipulating wrenches, we are primarily concerned with the instantaneous accelerations $\ddot{\mb{x}}^*_i$ of the LMIEs that form the dynamically equivalent system of the virtual rigid body. Additionally, a manipulating force $\mb{f}_{m,i}$ can induce linear and angular acceleration in the rigid body, since it also applies a torque $\mb{r}_i \times \mb{f}_{m,i}$, whereas a manipulating torque $\mb{t}_{m,i}$ can only induce angular acceleration. Therefore, we deduce that all of the linear acceleration of the rigid body must be produced by the manipulating forces, while the angular acceleration can be produced by a combination of the manipulating forces and the manipulating torques. }

\textcolor{black}{The portion of the angular acceleration generated by either the manipulating forces or a given manipulating torque depends on its respective contribution to the overall inertia of the virtual rigid body. If all the angular acceleration is to be generated by the manipulating forces, then the inertia contribution of the masses of the LMIEs $\sum_{i=1}^n \mb{S}(\mb{r}_i) m^*_i \mb{S}(\mb{r}_i)^T$ must be equal to the total inertia we assign to the virtual rigid body $\mb{J}^*_o$ and we set $\mb{J}^*_i = \mb{0}, \ \forall i$. Conversely, if we wish for the manipulating torques to produce some of the angular acceleration, we must increase their contributions to the overall inertia of the system. For example, if the manipulating torques are to generate half of the angular acceleration while the manipulating forces generate the other half, we need to design our dynamically equivalent system such that the system of LMIE masses and the sum of the LMIE inertia tensors contribute equally to the overall inertia of the system. In this case, we set $\mb{J}^*_o = 2\sum_{i=1}^n \mb{S}(\mb{r}_i) m^*_i \mb{S}(\mb{r}_i)^T$ and $\sum^n_{i=1} \mb{J}^*_i = \sum_{i=1}^n \mb{S}(\mb{r}_i) m^*_i \mb{S}(\mb{r}_i)^T$.}

We can therefore write the instantaneous acceleration of the object as

\begin{equation}
    \ddot{\mb{x}}^*_o = \ddot{\mb{x}}^*_f + \ddot{\mb{x}}^*_t
    \label{eq:AccelFieldSum}
\end{equation}
where $\ddot{\mb{x}}^*_f = \begin{bmatrix}
    \ddot{\mb{p}}^{*T}_o & \dot{\Bar{\boldsymbol{\omega}}}^{*T}_f
\end{bmatrix}^T$ and $\ddot{\mb{x}}^*_t = \begin{bmatrix}
    \mb{0}_{3 \times 1} & \dot{\Bar{\boldsymbol{\omega}}}^{*T}_t
\end{bmatrix}^T$ are the components of acceleration of the virtual rigid body generated by the manipulating forces and manipulating torques, respectively. Note that vectors $\dot{\Bar{\boldsymbol{\omega}}}^*_f$ and $\dot{\Bar{\boldsymbol{\omega}}}^*_t$ are different from vectors $\dot{\boldsymbol{\omega}}^*_f$ and $\dot{\boldsymbol{\omega}}^*_i$ discussed in section \ref{sec:AdditionalConstraintJustification}. On the one hand, $\dot{\boldsymbol{\omega}}_f$ and $\dot{\boldsymbol{\omega}}_i$ denote the angular acceleration of sub-elements in the dynamically equivalent system of the virtual rigid body. $\dot{\boldsymbol{\omega}}_f$ is the angular acceleration of the sub-system of point masses $m^*_i$ with inertia $\sum_{i=1}^n \mb{S}(\mb{r}_i) m^*_i \mb{S}(\mb{r}_i)^T$ and $\dot{\boldsymbol{\omega}}^*_i$ is the angular acceleration of the $i$-th LMIE with moment of inertia $\mb{J}^*_i$. On the other hand, $\dot{\Bar{\boldsymbol{\omega}}}^{*}_f$ and $\dot{\Bar{\boldsymbol{\omega}}}^{*}_t$ are the individual contributions of the individual wrench components on the virtual rigid body's acceleration considering its \emph{total} inertia. Therefore, we have $\dot{\boldsymbol{\omega}}^*_i = \dot{\boldsymbol{\omega}}^*_f = \dot{\boldsymbol{\omega}}^*_o$ and $\dot{\boldsymbol{\omega}}^*_o = \dot{\Bar{\boldsymbol{\omega}}}^*_f + \dot{\Bar{\boldsymbol{\omega}}}^*_t$.

$\ddot{\mb{x}}^*_f$ and $\ddot{\mb{x}}^*_t$ both define helicoidal vector fields and have parallel directions $(\dot{\Bar{\boldsymbol{\omega}}}^*_f \parallel \dot{\Bar{\boldsymbol{\omega}}}^*_t)$. They are obtained with 

\begin{align}
    \ddot{\mb{x}}^*_f &= \begin{bmatrix}
        \frac{1}{m^*_o} \mb{I}_3 & \mb{0}_{3 \times 3} \\ \mb{0}_{3 \times 3} & (\mb{J}^*_o)^{-1}
    \end{bmatrix} \begin{bmatrix}
        \mb{f}_o \\ \sum_{i=1}^n \mb{r}_i \times \mb{f}_{m,i}
    \end{bmatrix}, \label{eq:forceHelicoid} \\
     \ddot{\mb{x}}^*_t &= \begin{bmatrix}
        \mb{0}_{3 \times 3} & \mb{0}_{3 \times 3} \\ \mb{0}_{3 \times 3} & (\mb{J}^*_o)^{-1}
    \end{bmatrix}  \begin{bmatrix}
        \mb{0}_{3 \times 1} \\ \sum_{i=1}^n \mb{t}_{m,i}
    \end{bmatrix}.
\end{align}

\subsection{Uniqueness of the Manipulating Wrench Distribution}
\label{sec:uniqueness}
The uniqueness of the solution is discussed briefly in \cite{Erhart2015}. Here, we aim to correct and complete their description.

The uniqueness of the manipulating wrench distribution is governed by the number of solutions for the LMIE inertia parameters $(m^*_i, \mb{J}^*_i)$ to constraints (\ref{eq:virtualMassSum})-(\ref{eq:inertiaProportionality}) from which we obtain 10 scalar equalities; one from (\ref{eq:virtualMassSum}), six from the six independent values in the $3 \times 3$ symmetric inertia tensor in (\ref{eq:inertiaTensorEquivalence}), and three from (\ref{eq:virtualCoMEquivalence}). Also remember that the inertia matrix of the dynamically equivalent system must be proportional to that of the physical object ($\mb{M}^*_o = k \mb{M}_o$).

Let us begin by setting $\mb{J}^*_i = \mb{0}_{3 \times 3} \ \forall i$ and considering only a system of $n$ point masses. This situation arises in applications which involve only pure forces applied to the object. \textcolor{black}{Since the application points $\mb{r}_i$ of the wrenches on the rigid body are assumed to be known at the instant, each additional point mass added to the system only introduces one new variable (its virtual mass $m^*_i$) that can be used to solve the system of equations defined by constraints} (\ref{eq:virtualMassSum})-(\ref{eq:inertiaProportionality}). \textcolor{black}{These constraints can be written in matrix form as }

\begin{equation}
\begin{aligned}
    \mb{R}& \mb{m}^* = \mb{c} \\
    \begin{bmatrix}
        1 & \ldots & 1 \\
        \mb{s}^*_1 & \ldots & \mb{s}^*_n \\
        \mb{r}_1 & \ldots & \mb{r}_n
    \end{bmatrix}_{10 \times n}& \begin{bmatrix}
        m^*_1 \\ \vdots \\ m^*_n
    \end{bmatrix} = \begin{bmatrix}
        m^*_o \\ \mb{j}^*_o \\ \mb{0}_{3 \times 1}
    \end{bmatrix}
\end{aligned}
\label{eq:VirtualMassSolutionAll}
\end{equation}
\textcolor{black}{where $m^*_o = k m_o$ is the mass of the dynamically equivalent system and $\mb{j}^*_o$ is a 6-dimensional vector which contains the 6 independent values in symmetric matrix $\mb{J}^*_o$. In other words, if }

\begin{equation}
    \mb{J}^*_o = k \mb{J}_o = k\begin{bmatrix}
        j_{o,11} & j_{o,12} & j_{o,13} \\
        j_{o,21} & j_{o,22} & j_{o,23} \\
        j_{o,31} & j_{o,32} & j_{o,33}
    \end{bmatrix}
\end{equation}
\textcolor{black}{with $j_{o,ij} = j_{o,ji}$, then }

\begin{equation}
    \mb{j}^*_o = k\begin{bmatrix}
        j_{o,11} \\ j_{o,22} \\ j_{o,33} \\ j_{o,12} \\ j_{o,13} \\ j_{o,23}
    \end{bmatrix}.
\end{equation}
\textcolor{black}{Vectors $\mb{s}^*_i, \ i = 1, ..., n$ contain the six independent elements of the symmetric matrices}

\begin{equation}
    \mb{S}(\mb{r}_i) \mb{S}(\mb{r}_i)^T = \begin{bmatrix}
        z_i^2 + y_i^2 & -x_i y_i & -x_i z_i \\
        -x_i y_i & x_i^2 + z_i^2 & -y_i z_i \\
        -x_i z_i & -y_i z_i & x_i^2 + y_i^2
    \end{bmatrix}.
\end{equation}
\textcolor{black}{Thus, }

\begin{equation}
    \mb{s}_i = \begin{bmatrix}
        z_i^2 + y_i^2 \\ x_i^2 + z_i^2 \\ x_i^2 + y_i^2 \\ -x_i y_i \\ -x_i z_i \\ -y_i z_i
    \end{bmatrix}.
\end{equation}

\textcolor{black}{From }(\ref{eq:VirtualMassSolutionAll}), \textcolor{black}{it is evident that constraints }(\ref{eq:virtualMassSum})-(\ref{eq:virtualCoMEquivalence}) \textcolor{black}{severely limit the feasible solutions for the vector of virtual masses $\mb{m}^*$. In }\cite{Erhart2015}, \textcolor{black}{the authors claimed that the solution was unique when $n=3$ and that there were infinitely many solutions when $n > 3$. However, the dimensions of matrix $\mb{R}$ in }(\ref{eq:VirtualMassSolutionAll}) \textcolor{black}{reveal that 10 pure forces are required to satisfy the constraints and infinitely many solutions exist when $n>10$. This is hardly practical in real robotic applications, since this would require robotic hands to have 10 independent fingers or legged robots to have 10 independent legs in order to satisfy the requirements for no internal load generation.}

\textcolor{black}{We propose an alternate method for finding solutions which are more practical. The most restrictive constraint is }(\ref{eq:inertiaTensorEquivalence}) \textcolor{black}{since it introduces six equality constraints. By relaxing this constraint, the problem becomes more tractable. Note that we must still satisfy constraint }(\ref{eq:inertiaProportionality}). 

\textcolor{black}{To widen the solution space for the vector of virtual masses $\mb{m}^*$ which satisfies }(\ref{eq:virtualMassSum})-(\ref{eq:inertiaProportionality}), \textcolor{black}{we can create a system of equations of lower rank by considering only constraints} (\ref{eq:virtualMassSum}) and (\ref{eq:virtualCoMEquivalence}). Then, we simply set $\mb{J}^*_o = \sum_{i=1}^n \mb{S}(\mb{r}_i) m^*_i \mb{S}(\mb{r}_i)^T$ to satisfy constraints (\ref{eq:inertiaTensorEquivalence}) and (\ref{eq:inertiaProportionality}). Constraints (\ref{eq:virtualMassSum}) and (\ref{eq:virtualCoMEquivalence}) may be written in matrix form as 

\begin{equation}
\begin{aligned}
    \mb{R}& \mb{m}^* = \mb{c} \\
    \begin{bmatrix}
        1 & \ldots & 1 \\ \mb{r}_1 & \ldots & \mb{r}_n
    \end{bmatrix}_{4 \times n}& \begin{bmatrix}
        m^*_1 \\ \vdots \\ m^*_n
    \end{bmatrix} = \begin{bmatrix}
        m^*_o \\ \mb{0}_{3 \times 1}
    \end{bmatrix}.
    \label{eq:VirtualMassSolutionReduced}
\end{aligned}
\end{equation}

In this reduced system, $\mb{m}^*$ is uniquely determined in a 3-dimensional space if $n = 4$ and admits infinitely many solutions if $n > 4$. 

\textcolor{black}{The need for only four pure forces is much more practical for real robotic applications. A solution obtained from} (\ref{eq:VirtualMassSolutionReduced}) \textcolor{black}{will generate internal loads as they are defined in }\cite{Udwadia1992} and \cite{Erhart2015}, \textcolor{black}{but can be seen as a more practical alternative if 10 independent forces are not available.}

Also, since $m^*_i \geq 0 \ \forall i$, the CoM of the virtual rigid body must lie within the convex hull of points $\{ \mb{r}_1, \ldots, \mb{r}_n \}$. It is the value of $m^*_i$ which determines the magnitude of the manipulating force $\mb{f}_{m,i}$ needed to generate the constrained acceleration $\ddot{\mb{p}}^*_i$ of the $i$-th LMIE. 

Finally, let us consider that the $n$ wrenches may also apply pure torques to the rigid body. The manipulating torques are constrained to be parallel to $\mb{t}_o$, so each additional manipulating torque that may be applied to the rigid body only introduces one additional dimension to the solution space for the manipulating wrench distribution. The feasible set $\mathcal{T}_m$ for the manipulating torque magnitudes can be described as 

\begin{equation}
    \mathcal{T}_m = \{ \hat{\mb{t}}_m \in \mathbb{R}^n \mid \sum^n_{i=1} \hat{t}_{m,i} \leq ||\mb{t}_o || \},
    \label{eq:manipTorqueFeasibleSet}
\end{equation}
where $\hat{t}_{m,i}$ is the $i$-th component of $\hat{\mb{t}}_m = \begin{bmatrix}
    || \mb{t}_{m,1}|| & \ldots & || \mb{t}_{m,n}||
\end{bmatrix}^T$, which is the vector of manipulating torque magnitudes. Finally, the dimension $d$ of the feasible set of manipulating wrench distributions for $n$ wrenches and a given resultant wrench $\mb{h}_o$ is obtained with

\begin{equation}
    d = n + \text{dim}(\mathcal{N}(\mb{R}))
\end{equation}
where $\mathcal{N}(\mb{R})$ is the null space of matrix $\mb{R}$ whether written in the form shown in (\ref{eq:VirtualMassSolutionAll}) or (\ref{eq:VirtualMassSolutionReduced}). If $d=0$, the solution is uniquely determined. If only forces may be applied to the object, the dimension of the solution space reduces to $d = \text{dim}(\mathcal{N}(\mb{R}))$.

\subsection{Unique Representation of the Internal Loading State of the Rigid Body}
\label{sec:internalLoadingState}
The fact that an infinite number of manipulating wrench distributions can exist may seem counterintuitive, especially in the context of wrench analysis. If we have a given arbitrary applied wrench distribution $\mb{h}$ and wish to find all feasible decompositions into the manipulating wrench components $\mb{h}_{m}$, which induce motion, and the constraint wrench components $\mb{h}_{c}$, which contribute to the \say{squeezing} of the object, the choice of solution for $\mb{h}_{m}$ will affect the corresponding constraint wrench vector $\mb{h}_{c}$ (see Eq. (\ref{eq:wrenchEquilibrium})). However, for a given wrench distribution applied to the real object, we do not expect the squeezing on the object to depend on our choice of $\mb{h}_{m}$. There should therefore exist a unique way of describing this internal loading state, despite there being infinitely many solutions for $\mb{h}_{m}$. 

Assume there are an infinite number of possible manipulating wrench distributions for a given resultant wrench $\mb{h}_o$. Let $\mathcal{H}_m$ and $\mathcal{H}_c$ be the infinite sets of feasible solutions for the manipulating and constraint wrench distributions, respectively. Every vector in set $\mathcal{H}_m$ can be obtained by applying Theorem \ref{th:InternalLoadsAsHelicoidalField}. The mapping between $\mathcal{H}_m$ and $\mathcal{H}_c$ is one-to-one, and $\mathcal{H}_c$ can be described as 

\begin{equation}
    \mathcal{H}_c = \{ \mb{h}_{cj} \in\mathbb{R}^{6n} \mid \mb{h}_{cj} = \mb{h}_{mj} - \mb{h}, \ \mb{h}_{mj} \in \mathcal{H}_m \}
\end{equation}
where $\mb{h}_{mj}$ and $\mb{h}_{cj}$ are the $j$-th elements of $\mathcal{H}_m$ and $\mathcal{H}_c$, respectively, and $\mb{h}$ is the unique vector containing the applied wrenches. $\mathcal{H}_c$ is therefore simply a translation of $\mathcal{H}_m$. 

In \cite{Kumar1988}, interaction forces are defined as the components of the applied forces that lie in the null space of the grasp matrix, $\mathcal{N}(\mb{G})$. However, we have found that this reasoning cannot be applied to the manipulating wrench distribution, since it may have a null-space component yet still not generate internal loads. 

Let $\mb{h}_{m1}$ and $\mb{h}_{m2}$ be two manipulating wrench distributions in the feasible set $\mathcal{H}_m$ for a resultant wrench $\mb{h}_o$ such that 

\begin{align}
    \mb{G} \mb{h}_{m1} &= \mb{h}_o, \\
    \mb{G} \mb{h}_{m2} &= \mb{h}_o.
\end{align}
If we subtract these two equations, we obtain 

\begin{equation}
    \mb{G}(\mb{h}_{m2} - \mb{h}_{m1}) = \mb{0}_{6 \times 1}.
\end{equation}
Hence, the vector $\Delta \mb{h}_m = \mb{h}_{m2} - \mb{h}_{m1}$ also lies in $\mathcal{N}(\mb{G})$, yet adding $\Delta \mb{h}_m$ to a manipulating wrench distribution does not generate internal loads since the resulting vector is also a manipulating wrench distribution. 

Note that constraint wrenches do still lie in $\mathcal{N}(\mb{G})$ since they must satisfy the condition 

\begin{equation}
    \mb{G} \mb{h}_c = \mb{0}_{6 \times 1},
\end{equation}
i.e., they are invisible to the net wrench $\mb{h}_o$. 

With this observation, we are able to find a unique representation of the set of constraint wrenches $\mb{h}_c$ in a feasible set $\mathcal{H}_c$ by defining two sub-spaces within $\mathcal{N}(\mb{G})$. 

We assume that the grasp matrix $\mb{G}$ is full rank. The null space of $\mb{G}$ is therefore $(6n-6)$-dimensional. Let $\mathcal{N}_m$ be the $d$-dimensional subspace within $\mathcal{N}(\mb{G})$ that corresponds to the vectors $\Delta \mb{h}_m$ which may be added to a particular solution for the manipulating wrench distribution, denoted $\mb{h}_{m,p}$, to span the entire set of feasible solutions for $\mb{h}_m$. Any vector in the feasible set $\mathcal{H}_m$ can then be written as

\begin{equation}
    \mb{h}_m = \mb{h}_{m,p} + \Delta \mb{h}_m, \quad \Delta \mb{h}_m \in \mathcal{N}_m.
\end{equation}
Alternatively, this can be written as 

\begin{align}
    \mb{h}_m = \mb{h}_{m,p} + \mb{K} \boldsymbol{\lambda}_m
\end{align}
where $\mb{K}$ is a $(6n \times d)$ matrix whose columns are a set of basis vectors that span $\mathcal{N}_m$ and $\boldsymbol{\lambda}_m$ is a $d$-dimensional vector which uniquely determines the location of a given vector $\Delta \mb{h}_m$ within this space.  

The choice of $\mb{h}_{m,p}$ will affect $\mathcal{N}_m$. For consistency, we propose the convention of using the unique forces-only solution $(\mb{t}_{m,i} = \mb{0}_{3 \times 1}, \ \forall i)$ for $\mb{h}_m$ as  $\mb{h}_{m,p}$ when $\mb{R}$ is square, or the unique forces-only solution for $\mb{h}_m$ that corresponds to the minimum-norm solution for $\mb{m}^*$ as  $\mb{h}_{m,p}$ when $\mb{R}$ is a wide matrix. 

The second $(6n - 6 - d)$-dimensional subspace within $\mathcal{N}(\mb{G})$, denoted $\mathcal{N}_c$, may be used to generate or eliminate constraint wrenches. We know that a vector $\mb{h}_c$ within this space lies in the null space of $\mb{G}$ and is orthogonal to subspace $\mathcal{N}_m$ (i.e. orthogonal to the column space of $\mb{K}$). These conditions may be written as 

\begin{equation}
    \mb{G} \mb{h}_c = \mb{0}_{6 \times 1}, \ \text{and} \ \mb{K}^T \mb{h}_c = \mb{0}_{d \times 1}.
\end{equation}
Collecting these linear equalities, we obtain

\begin{equation}
    \begin{bmatrix}
        \mb{G} \\ \mb{K}^T
    \end{bmatrix} \mb{h}_c = \mb{0}_{(6+d) \times 1}
\end{equation}
and thus, 

\begin{equation}
    \mathcal{N}_c = \mathcal{N}\left( \begin{bmatrix}
        \mb{G} \\ \mb{K}^T
    \end{bmatrix} \right).
\end{equation}
where $\mathcal{N}(\cdot)$ denotes the null-space of the argument. Finally, we may define $6n - 6 - d$ linearly independent basis vectors that span $\mathcal{N}_c$ such that any vector $\mb{h}_c$ can be expressed as 

\begin{equation}
    \mb{h}_c = \mb{Z} \boldsymbol{\lambda}_c
\end{equation}
where $\mb{Z}$ is a matrix of dimension $6n \times (6n - 6 - d)$ whose columns correspond to linearly independent basis vectors that span $\mathcal{N}_c$ and $\boldsymbol{\lambda}_c$ is a $(6n - 6 - d)$-dimensional vector. An interesting consequence of defining $\mb{h}_c$ in this manner is that, for a wrench distribution which admits infinitely many solutions, we may write

\begin{equation}
    \mb{Z}^{\dagger} \mb{h}_{ck} = \boldsymbol{\lambda}_c \ \forall k, \ \text{with} \ \mb{h}_{ck} \in \mathcal{N}_c,
    \label{eq:lambda_c}
\end{equation}
where $\mb{Z}^{\dagger} = (\mb{Z}^T \mb{Z})^{-1} \mb{Z}^T$ is the left Moore-Penrose pseudo-inverse of $\mb{Z}$ and $\mb{h}_{ck}$ is the $k$-th vector in the infinite set $\mathcal{N}_c$. The physical significance of this result is that every constraint wrench vector $\mb{h}_{cj}$ in a given feasible set $\mathcal{H}_c$ has the same corresponding vector $\boldsymbol{\lambda}_c$ which uniquely defines the location in the constraint wrench space $\mathcal{N}_c$ of the set of applied wrenches $\mb{h}$ and thus uniquely defines the internal loading state of the virtual rigid body. In other words, for a given set of applied wrenches, $\boldsymbol{\lambda}_c$ is independent of the choice of solution for $\mb{h}_m$.

\section{\textcolor{black}{Implementation}}
\label{sec:Implementation}
\subsection{Theorem \ref{th:InternalLoadsAsHelicoidalField} for Wrench Synthesis}
The problem of wrench synthesis consists in prescribing the wrenches $\mb{h}_i$ to be applied by the kinematic chains in the context of robot force-control to generate an output wrench $\mb{h}_o$ while either eliminating internal loads or generating a prescribed internal loading state on the object. When doing so with our proposed framework, we must first decide how closely we wish the inertial properties of the virtual body to match the inertial properties of the real body. This will determine how many wrenches are required to eliminate internal loads (see Section \ref{sec:uniqueness}). Then, with the results presented in Section \ref{sec:internalLoadingState}, we are able to write the applied wrenches as 

\begin{equation}
    \mb{h} = \mb{h}_{m,p} + \mb{K} \boldsymbol{\lambda}_m - \mb{Z} \boldsymbol{\lambda}_c.
    \label{eq:wrenchSynthesis}
\end{equation}

For a given $\mb{h}_o$, $\mb{h}_{m,p}$ uniquely describes the set of manipulating wrench distributions. The departure from this unique solution within $\mathcal{N}_m$ is governed by $\boldsymbol{\lambda}_m$ and dictates how the load is balanced between the manipulating wrench components. Finally, $\boldsymbol{\lambda}_c$ uniquely defines the internal loading state induced by the remaining components in $\mathcal{N}(\mb{G})$. When $\mb{h}_o$ is given, we may obtain $\mb{h}_{m,p}$ and $\boldsymbol{\lambda}_m$ from Theorem \ref{th:InternalLoadsAsHelicoidalField}. If we wish to apply resultant wrench $\mb{h}_o$ without generating internal loads, we set $\boldsymbol{\lambda}_c = \mb{0}$. Alternatively, a designer may prescribe specific internal loading states at each time $t$ for a given task with vector $\boldsymbol{\lambda}_c$.

\subsection{Theorem \ref{th:InternalLoadsAsHelicoidalField} for Wrench Analysis}
\label{sec:WrenchAnalysis}
It is trivial to apply Theorem \ref{th:InternalLoadsAsHelicoidalField} to wrench analysis where the aim is to decompose an arbitrary set of $n$ wrenches $\mb{h}_i$ applied at locations $\mb{r}_i$ for $i = 1, ..., n$ into a set of $n$ manipulating wrenches $\mb{h}_{m,i}$ and $n$ constraint wrenches $\mb{h}_{c,i}$. Since every $\mb{h}_i$ is known, we may calculate $\mb{h}_o$ from (\ref{eq:staticEq}). Then, a valid manipulating wrench distribution $\mb{h}_m$ can be obtained from $\mb{h}_o$ by applying Theorem \ref{th:InternalLoadsAsHelicoidalField}. The corresponding constraint wrench distribution $\mb{h}_c$ may be easily obtained from the Udwadia-Kalaba equation (\ref{eq:wrenchEquilibrium}) as 

\begin{equation}
    \mb{h}_c = \mb{h}_m - \mb{h}.
    \label{eq:constraintWrenchAnalysis}
\end{equation}

This is a very simple and non-iterative method for performing wrench decomposition. The fact that it is non-iterative also makes it much more suitable for real-time force-control compared to the optimization-based approaches proposed in \cite{Schmidts2016} and \cite{Donner2018}.

\section{Implications of Theorem \ref{th:InternalLoadsAsHelicoidalField}}
\label{sec:implications}
We now revisit several frameworks proposed in the literature based on the results presented in the previous section and provide corrections where necessary. 

\subsection{Implications for the Parametrized Moore-Penrose Pseudo-inverse of the Grasp Matrix}
\label{sec:Implications-ParamMPInv}
\cite{Erhart2015} propose the parametrized Moore-Penrose inverse of the grasp matrix shown in (\ref{eq:BadParametrizedInverse}). As mentioned in Section \ref{sec:RelatedWork-WrenchSynthesis}, we have found this form of $\mb{G}^+_M$ to be incorrect, and we have already justified the need for an additional constraint in Section \ref{sec:AdditionalConstraintJustification}. We therefore propose the following corrected theorem: 

\setcounter{theorem}{0}

\begin{theorem}[Corrected]
    \label{th:GoodParamMPInv}
    The load distribution given by 
    \begin{equation}
        \mb{G}^+_M = \begin{bmatrix}
            \frac{m^*_1}{m^*_o} \mb{I}_3 & m^*_1 \mb{S}(\mb{r}_1)^T [\mb{J}^*_o]^{-1}  \\
            \mb{0}_3 & \mb{J}^*_1 [\mb{J}^*_o]^{-1} \\
            \vdots & \vdots \\
            \frac{m^*_n}{m^*_o} \mb{I}_3 & m^*_n \mb{S}(\mb{r}_n)^T [\mb{J}^*_o]^{-1}  \\
            \mb{0}_3 & \mb{J}^*_n [\mb{J}^*_o]^{-1}
        \end{bmatrix}
        \label{eq:GoodParametrizedInverse}
    \end{equation}
    for some positive-definite weighting coefficients $m^*_i \in \mathbb{R}$ and $\mb{J}^*_i \in \mathbb{R}^{3 \times 3}$ subject to constraints (\ref{eq:virtualMassSum})-(\ref{eq:inertiaProportionality}) is free of internal loads.
\end{theorem}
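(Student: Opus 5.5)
The plan is to show that $\mb{h} = \mb{G}^+_M \mb{h}_o$ is exactly the stacked manipulating wrench vector $\mb{h}_m$ produced by the three-step procedure of Theorem \ref{th:InternalLoadsAsHelicoidalField}. Then the necessary and sufficient condition of that theorem holds, and the distribution is free of internal loads. Concretely, I will compute $\mb{h}_{m,i} = \mb{M}^*_i \ddot{\mb{x}}^*_i$ symbolically in terms of $\mb{h}_o = [\mb{f}_o^T\ \mb{t}_o^T]^T$ and read off the block rows of the resulting linear map.

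\textbf{Step 1 (linear part).} By (\ref{eq:RBDynamics}), $\ddot{\mb{p}}^*_o = \mb{f}_o/m^*_o$ and $\dot{\boldsymbol{\omega}}^*_o = (\mb{J}^*_o)^{-1}\mb{t}_o$. Substituting into (\ref{eq:LMIEConstrainedAcceleration}) gives
\begin{equation*}
\ddot{\mb{p}}^*_i = \frac{1}{m^*_o}\mb{f}_o - \mb{S}(\mb{r}_i)(\mb{J}^*_o)^{-1}\mb{t}_o = \frac{1}{m^*_o}\mb{f}_o + \mb{S}(\mb{r}_i)^T(\mb{J}^*_o)^{-1}\mb{t}_o ,
\end{equation*}
where we used $\dot{\boldsymbol{\omega}}\times\mb{r}_i = -\mb{r}_i\times\dot{\boldsymbol{\omega}}$ and $\mb{S}(\mb{r}_i)^T=-\mb{S}(\mb{r}_i)$. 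Multiplying by $m^*_i$ yields the force rows $\big[\tfrac{m^*_i}{m^*_o}\mb{I}_3 \ \ m^*_i\mb{S}(\mb{r}_i)^T(\mb{J}^*_o)^{-1}\big]$. The order of the factors here, $\mb{S}(\mb{r}_i)^T$ before $(\mb{J}^*_o)^{-1}$, is exactly the correction relative to (\ref{eq:BadParametrizedInverse}).

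\textbf{Step 2 (angular part).} Since $\dot{\boldsymbol{\omega}}^*_i = \dot{\boldsymbol{\omega}}^*_o$, the torque rows are $\mb{t}_{m,i} = \mb{J}^*_i(\mb{J}^*_o)^{-1}\mb{t}_o$, which gives $[\mb{0}_3\ \ \mb{J}^*_i(\mb{J}^*_o)^{-1}]$. Together, Steps 1 and 2 reproduce (\ref{eq:GoodParametrizedInverse}) row by row.

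\textbf{Step 3 (consistency with the grasp matrix).} I will verify that $\mb{G}\mb{G}^+_M = \mb{I}_6$, so that the distribution actually generates $\mb{h}_o$. For the force rows, summing $\tfrac{m^*_i}{m^*_o}$ gives $\mb{I}_3$ by (\ref{eq:virtualMassSum}). The off-diagonal block $\sum_i m^*_i\mb{S}(\mb{r}_i)^T$ vanishes by (\ref{eq:virtualCoMEquivalence}), since $\mb{S}$ is linear in its argument. For the torque rows, the $\mb{f}_o$ block is $\sum_i \mb{S}(\mb{r}_i)\tfrac{m^*_i}{m^*_o}$, which again vanishes by (\ref{eq:virtualCoMEquivalence}). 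The $\mb{t}_o$ block is
\begin{equation*}
\Big(\sum_i \mb{S}(\mb{r}_i)m^*_i\mb{S}(\mb{r}_i)^T + \sum_i\mb{J}^*_i\Big)(\mb{J}^*_o)^{-1} = \mb{I}_3
\end{equation*}
by (\ref{eq:inertiaTensorEquivalence}).

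\textbf{Step 4 (invoking the main theorem).} With $\mb{h}_i := \mb{h}_{m,i}$, the virtual acceleration $(\mb{M}^*_i)^{-1}\mb{h}_i$ equals $\ddot{\mb{x}}^*_i$ identically. The constraints (\ref{eq:virtualMassSum})--(\ref{eq:inertiaProportionality}) are assumed, so Theorem \ref{th:InternalLoadsAsHelicoidalField} applies. The kinematic error $\mb{e}=\mb{A}\ddot{\mb{x}}^d-\mb{b}$ is zero, so (\ref{eq:explicitConstraintWrenches}) gives $\mb{h}_c=\mb{0}$. I expect the main obstacle to be the role of (\ref{eq:inertiaProportionality}). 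It is not needed for the algebra in Steps 1--3, so I must argue that it is what makes the torques $\mb{t}_{m,i}$ and the force-induced torque $\mb{t}_{fm}$ all parallel to $\mb{t}_o$, in the sense of Section \ref{sec:AdditionalConstraintJustification}. The argument: if $\mb{J}^*_i = \alpha_i\mb{J}^*_o$, then $\mb{t}_{m,i}=\alpha_i\mb{t}_o$. Similarly, $\sum_i\mb{r}_i\times\mb{f}_{m,i}$ reduces, via Step 1 and (\ref{eq:virtualCoMEquivalence}), to $\big(\sum_i\mb{S}(\mb{r}_i)m^*_i\mb{S}(\mb{r}_i)^T\big)(\mb{J}^*_o)^{-1}\mb{t}_o$, which is proportional to $\mb{t}_o$. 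This rules out antagonistic torque components.
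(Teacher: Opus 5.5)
Your proposal is correct, and Steps 1--2 are exactly the paper's proof: substitute $\ddot{\mb{p}}^*_o = \mb{f}_o/m^*_o$ and $\dot{\boldsymbol{\omega}}^*_o = [\mb{J}^*_o]^{-1}\mb{t}_o$ into $\mb{h}_{m,i} = \mb{M}^*_i\ddot{\mb{x}}^*_i$, use $\mb{S}(\mb{r}_i)^T = -\mb{S}(\mb{r}_i)$ to obtain the factor order $m^*_i\mb{S}(\mb{r}_i)^T[\mb{J}^*_o]^{-1}$, and stack the rows. Your Steps 3--4 are correct additions that the paper's appendix does not carry out: verifying $\mb{G}\mb{G}^+_M = \mb{I}_6$ from the mass, CoM and inertia-equivalence constraints, and showing that the proportionality constraint makes every $\mb{t}_{m,i}$ and $\mb{t}_{fm}$ parallel to $\mb{t}_o$ (the paper only shows numerically, in the sphere example, that the uncorrected form fails to reproduce $\mb{h}_o$).
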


\setcounter{theorem}{2}

The proof of this theorem is shown in Appendix A. Eq. (\ref{eq:GoodParametrizedInverse}) allows for the direct calculation of the parametrized Moore-Penrose pseudo-inverse of $\mb{G}$ which, for $n$ wrenches, is $6 \times 6n$. We therefore circumvent the computational cost associated with the inversion of such a large matrix and the largest matrix that must be inverted is instead the symmetric inertia tensor $\mb{J}^*_o$, which is $3 \times 3$. When the number of applied wrenches is increased, we simply add additional rows to $\mb{G}^+_M$ and hence the complexity of the problem scales linearly with the number of applied wrenches. The problem of finding valid manipulating wrench distributions is therefore reduced to choosing combinations of the virtual inertia parameters $(m^*_i, \mb{J}^*_i)$ that satisfy constraints (\ref{eq:virtualMassSum})-(\ref{eq:inertiaProportionality}).

Erhart and Hirche also provide the following corollary to their theorem: 

\begin{corollary}[As presented in \cite{Erhart2015}]
    An equal distribution of the manipulator weights according to $m^*_i = 1$ and $\mb{J}^*_i = \mb{I}_3$ yields

    \begin{equation}
        \mb{G}^{\dagger} = \frac{1}{n} \begin{bmatrix}
            \mb{I}_3 &  \Bar{\mb{J}}^{-1} \mb{S}(\mb{r}_1)^T \\ \mb{0}_{3} & \Bar{\mb{J}}^{-1} \\ 
            \vdots & \vdots \\
            \mb{I}_3 & \Bar{\mb{J}}^{-1} \mb{S}(\mb{r}_n)^T \\ \mb{0}_{3} & \Bar{\mb{J}}^{-1}
        \end{bmatrix}
        \label{eq:BadMPInverseG}
    \end{equation}
    with $\Bar{\mb{J}} = \mb{I}_3 + \frac{1}{n} \sum_{i=1}^n \mb{S}(\mb{r}_i) \mb{S}(\mb{r}_i)^T$ and (\ref{eq:BadMPInverseG}) being equivalent to the Moore-Penrose inverse of $\mb{G}$.
\end{corollary}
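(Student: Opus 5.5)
The plan is to compute the unweighted Moore-Penrose inverse $\mb{G}^{\dagger} = \mb{G}^T(\mb{G}\mb{G}^T)^{-1}$ directly, using the rigid-contact grasp matrix (\ref{eq:GForcesandMoments}), and compare it blockwise with (\ref{eq:BadMPInverseG}). The only structural input is the CoM condition (\ref{eq:virtualCoMEquivalence}). With $m^*_i = 1$ for all $i$, it reduces to $\sum_{i=1}^n \mb{r}_i = \mb{0}$, and hence $\sum_{i=1}^n \mb{S}(\mb{r}_i) = \mb{0}$. As I note at the end, I expect the comparison to confirm the corollary only up to the ordering of a matrix product, and in general to establish the corrected form (\ref{eq:GoodParametrizedInverse}) instead.

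\textbf{Step 1: compute $\mb{G}\mb{G}^T$.} Summing over the $6\times 6$ blocks $\begin{bmatrix}\mb{I}_3 & \mb{0}\\ \mb{S}(\mb{r}_i) & \mb{I}_3\end{bmatrix}$ and their transposes gives
\begin{equation*}
\mb{G}\mb{G}^T = \begin{bmatrix} n\mb{I}_3 & \sum_i \mb{S}(\mb{r}_i)^T \\ \sum_i \mb{S}(\mb{r}_i) & n\mb{I}_3 + \sum_i \mb{S}(\mb{r}_i)\mb{S}(\mb{r}_i)^T \end{bmatrix}.
\end{equation*}
By the CoM condition the off-diagonal blocks vanish, so $\mb{G}\mb{G}^T = \mathrm{diag}(n\mb{I}_3,\, n\Bar{\mb{J}})$, with $\Bar{\mb{J}}$ exactly as defined in the corollary. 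Equivalently, $n\Bar{\mb{J}} = \mb{J}^*_o$ by (\ref{eq:inertiaTensorEquivalence}), and $m^*_o = n$ by (\ref{eq:virtualMassSum}). Its inverse is therefore $\frac{1}{n}\mathrm{diag}(\mb{I}_3, \Bar{\mb{J}}^{-1})$; here $\Bar{\mb{J}}$ is positive definite because it is $\mb{I}_3$ plus a positive semidefinite matrix.

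\textbf{Step 2: form $\mb{G}^T(\mb{G}\mb{G}^T)^{-1}$.} Multiply the $i$-th block row of $\mb{G}^T$, namely $\begin{bmatrix}\mb{I}_3 & \mb{S}(\mb{r}_i)^T\\ \mb{0} & \mb{I}_3\end{bmatrix}$, by this block-diagonal inverse. This gives the $i$-th block row
\begin{equation*}
\frac{1}{n}\begin{bmatrix}\mb{I}_3 & \mb{S}(\mb{r}_i)^T\Bar{\mb{J}}^{-1} \\ \mb{0}_3 & \Bar{\mb{J}}^{-1}\end{bmatrix}.
\end{equation*}
This is precisely what Theorem \ref{th:GoodParamMPInv} (Corrected) gives with $m^*_i=1$, $\mb{J}^*_i=\mb{I}_3$, since $\mb{J}^*_i[\mb{J}^*_o]^{-1} = \frac{1}{n}\Bar{\mb{J}}^{-1}$ and $m^*_i\mb{S}(\mb{r}_i)^T[\mb{J}^*_o]^{-1} = \frac{1}{n}\mb{S}(\mb{r}_i)^T\Bar{\mb{J}}^{-1}$. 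So the equivalence with the Moore-Penrose inverse goes through cleanly for the corrected form.

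\textbf{Step 3 (the main obstacle): reconcile with (\ref{eq:BadMPInverseG}).} The corollary as stated writes $\Bar{\mb{J}}^{-1}\mb{S}(\mb{r}_i)^T$, whereas Step 2 produces $\mb{S}(\mb{r}_i)^T\Bar{\mb{J}}^{-1}$. The two agree for all $i$ if and only if $\Bar{\mb{J}}$ commutes with every $\mb{S}(\mb{r}_i)$. Since $\mb{S}(\mb{r})$ commutes with $\Bar{\mb{J}}$ iff $\Bar{\mb{J}}\mb{r}\times\mb{a} = \mb{r}\times\Bar{\mb{J}}\mb{a}$ for all $\mb{a}$, I would check this holds when $\Bar{\mb{J}}\propto\mb{I}_3$. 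Conversely, I would exhibit a non-isotropic configuration where it fails, e.g.\ points spread along a single axis. That configuration also shows the transposed form violates $\mb{G}\mb{G}^+_M = \mb{I}_6$.

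A second issue is that the choice $\mb{J}^*_i=\mb{I}_3$ satisfies the proportionality constraint (\ref{eq:inertiaProportionality}) only when $\mb{J}^*_o \propto \mb{I}_3$. This is the same isotropy condition. The proof I would write therefore establishes the statement exactly under the hypothesis $\sum_i \mb{S}(\mb{r}_i)\mb{S}(\mb{r}_i)^T \propto \mb{I}_3$. In general it yields only the corrected ordering, so the corollary as literally stated must be amended rather than proved.
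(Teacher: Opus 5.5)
Your Steps 1--2 are the paper's own Appendix argument. You form $\mb{G}\mb{G}^T$, kill its off-diagonal blocks with $\sum_i\mb{r}_i=\mb{0}$, invert, and premultiply by $\mb{G}^T$ to reach the ordering $\mb{S}(\mb{r}_i)^T\Bar{\mb{J}}^{-1}$ of (\ref{eq:GoodMPInverseG}). You also write $\mb{G}\mb{G}^T$ and the inverse block $\frac{1}{n}\Bar{\mb{J}}^{-1}$ correctly, where the paper's displays have $\mb{G}^T\mb{G}$ and $\frac{1}{n}\Bar{\mb{J}}$. Your verdict that the literal statement must be amended is the paper's. Your remark on (\ref{eq:inertiaProportionality}) is correct and sharper than the paper: with $\mb{J}^*_i=\mb{I}_3$ that constraint forces $\Bar{\mb{J}}\propto\mb{I}_3$. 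So under the paper's own corrected hypothesis the two orderings coincide, and the ordering matters only when the CoM condition alone is imposed.

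The step that fails is your counterexample. Suppose the points are collinear, $\mb{r}_i=t_i\mb{e}$ with $\|\mb{e}\|=1$. Then $\mb{S}(\mb{r}_i)\mb{S}(\mb{r}_i)^T=t_i^2(\mb{I}_3-\mb{e}\mb{e}^T)$, so $\Bar{\mb{J}}=(1+c)\mb{I}_3-c\,\mb{e}\mb{e}^T$ with $c=\frac{1}{n}\sum_i t_i^2$. Since $\mb{e}\mb{e}^T\mb{S}(\mb{e})=\mb{S}(\mb{e})\mb{e}\mb{e}^T=\mb{0}$, this $\Bar{\mb{J}}$ commutes with every $\mb{S}(\mb{r}_i)$. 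Hence the stated form equals $\mb{G}^{\dagger}$ there and is a genuine right inverse of $\mb{G}$, so the configuration disproves nothing.

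More generally, a symmetric matrix commutes with $\mb{S}(\mb{r})$, $\mb{r}\neq\mb{0}$, iff it has the form $\alpha\mb{I}_3+\beta\mb{r}\mb{r}^T$. A counterexample therefore needs application points spanning at least two directions together with an anisotropic $\Bar{\mb{J}}$. The equilateral triangle of the paper's planar case study works. With unit-radius vertices in the $xy$-plane, $\sum_i\mb{r}_i=\mb{0}$ and $\Bar{\mb{J}}=\mathrm{diag}(3/2,3/2,2)$. For $\mb{r}_1=[1\ 0\ 0]^T$, the $(2,3)$ entries of $\Bar{\mb{J}}^{-1}\mb{S}(\mb{r}_1)^T$ and $\mb{S}(\mb{r}_1)^T\Bar{\mb{J}}^{-1}$ are $2/3$ and $1/2$. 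The lower-right $3\times3$ block of $\mb{G}$ times the stated form is $\mathrm{diag}(11/12,11/12,7/6)\neq\mb{I}_3$.

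Finally, list $\sum_i\mb{r}_i=\mb{0}$ explicitly in your amended hypothesis next to isotropy. Without it, $\mb{G}\mb{G}^T$ keeps the blocks $\mb{S}(\sum_i\mb{r}_i)$ and $\mb{S}(\sum_i\mb{r}_i)^T$, and $\mb{G}^{\dagger}$ matches neither displayed form. That is the other failure mode the paper's corrected corollary, and the last line of its proof, are built around.
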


However, we have found examples where this statement does not hold. We propose a corrected form of their corollary here with additional conditions that must be satisfied. The proof is shown in Appendix A.

\setcounter{corollary}{0}

\begin{corollary}[Corrected]
\label{cor:MPInverseG}
    The parametrized Moore-Penrose pseudo-inverse of the grasp matrix $\mb{G}^+_M$ is equal to the unweighted Moore-Penrose pseudo-inverse of the grasp matrix $\mb{G}^{\dagger}$ iff a set of $n$ LMIEs with virtual mass $m^*_i = 1$ and virtual inertia tensor $\mb{J}^*_i = \mb{I}_3$ is a valid solution to constraints (\ref{eq:virtualMassSum})-(\ref{eq:inertiaProportionality}). In this case, $\mb{G}^+_M$ is written as
    \begin{equation}
        \mb{G}^+_M = \mb{G}^{\dagger} = \frac{1}{n} \begin{bmatrix}
            \mb{I}_3 & \mb{S}(\mb{r}_1)^T \Bar{\mb{J}}^{-1} \\ \mb{0}_{3 \times 3} & \Bar{\mb{J}}^{-1} \\ 
            \vdots & \vdots \\
            \mb{I}_3 & \mb{S}(\mb{r}_n)^T \Bar{\mb{J}}^{-1} \\ \mb{0}_{3 \times 3} & \Bar{\mb{J}}^{-1}
        \end{bmatrix}
        \label{eq:GoodMPInverseG}
    \end{equation}
    with $\Bar{\mb{J}} = \mb{I}_3 + \frac{1}{n} \sum_{i=1}^n \mb{S}(\mb{r}_i) \mb{S}(\mb{r}_i)^T$.
\end{corollary}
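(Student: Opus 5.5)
The plan is to compute both sides explicitly and compare them block by block. I would use the rigid-contact grasp matrix (\ref{eq:GForcesandMoments}) and the closed form (\ref{eq:GoodParametrizedInverse}) from Theorem \ref{th:GoodParamMPInv}, which I take as established. Write $\mb{P} = \sum_{i=1}^n \mb{S}(\mb{r}_i)\mb{S}(\mb{r}_i)^T$. A direct multiplication gives
\begin{equation*}
\mb{G}\mb{G}^T = \begin{bmatrix} n\mb{I}_3 & \sum_{i} \mb{S}(\mb{r}_i)^T \\ \sum_{i} \mb{S}(\mb{r}_i) & n\mb{I}_3 + \mb{P} \end{bmatrix},
\end{equation*}
and $\mb{G}^{\dagger} = \mb{G}^T(\mb{G}\mb{G}^T)^{-1}$. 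The $i$-th $6\times 6$ block row of $\mb{G}^{\dagger}$ is $\begin{bmatrix} \mb{I}_3 & \mb{0} \\ \mb{S}(\mb{r}_i) & \mb{I}_3\end{bmatrix}^T$ times $(\mb{G}\mb{G}^T)^{-1}$. Throughout I assume $\mb{G}$ is full rank, as in Section \ref{sec:internalLoadingState}, and that the points are not collinear, so that $\mb{P}$ is invertible.

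\textbf{Sufficiency.} Assume $m^*_i=1$ and $\mb{J}^*_i=\mb{I}_3$ satisfy (\ref{eq:virtualMassSum})--(\ref{eq:inertiaProportionality}).
\begin{itemize}
\item Condition (\ref{eq:virtualCoMEquivalence}) gives $\sum_i \mb{r}_i = \mb{0}$. Hence $\sum_i \mb{S}(\mb{r}_i) = \mb{S}(\sum_i \mb{r}_i) = \mb{0}$, so $\mb{G}\mb{G}^T$ is block diagonal.
\item Its inverse is therefore $\mathrm{diag}\bigl(\tfrac1n \mb{I}_3, (n\Bar{\mb{J}})^{-1}\bigr)$, with $\Bar{\mb{J}} = \mb{I}_3 + \tfrac1n \mb{P}$.
\item Multiplying by the $i$-th block row of $\mb{G}^T$ reproduces exactly the $i$-th block row of (\ref{eq:GoodMPInverseG}).
\item On the other side, (\ref{eq:virtualMassSum}) and (\ref{eq:inertiaTensorEquivalence}) give $m^*_o = n$ and $\mb{J}^*_o = n\mb{I}_3 + \mb{P} = n\Bar{\mb{J}}$. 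Substituting these into (\ref{eq:GoodParametrizedInverse}) yields the same matrix.
\end{itemize}
I would note in passing that (\ref{eq:inertiaProportionality}) is the substantive extra requirement here: it forces $\mb{P} \propto \mb{I}_3$. This is exactly where the original corollary of \cite{Erhart2015} fails in general.

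\textbf{Necessity.} Suppose $\mb{G}^+_M = \mb{G}^{\dagger}$ for some admissible parameters $(m^*_i,\mb{J}^*_i)$.
\begin{itemize}
\item In (\ref{eq:GoodParametrizedInverse}), the block mapping $\mb{f}_o$ to $\mb{t}_{m,i}$ is zero. In $\mb{G}^{\dagger}$, that block is (up to the $\mb{S}(\mb{r}_i)$ term that cancels against the force row) the lower-left block of $(\mb{G}\mb{G}^T)^{-1}$.
\item Equating these, together with the force-row comparison, forces $\sum_i \mb{S}(\mb{r}_i)=\mb{0}$, i.e. $\sum_i \mb{r}_i=\mb{0}$. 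So $(\mb{G}\mb{G}^T)^{-1}$ is block diagonal as above.
\item Matching the force--force blocks then gives $m^*_i/m^*_o = 1/n$, so all virtual masses are equal. By the scale invariance of Section \ref{sec:ScaleInvarianceImplication}, scaling every inertia parameter by a common $k$ leaves $\mb{G}^+_M$ unchanged. I would therefore normalize to $m^*_i = 1$, so $m^*_o = n$ and the virtual CoM condition is the one already derived.
\item Matching the torque--torque blocks gives $\mb{J}^*_i(\mb{J}^*_o)^{-1} = (n\mb{I}_3+\mb{P})^{-1}$ for every $i$. Hence all $\mb{J}^*_i$ equal a common $\mb{J}$.
\item Then (\ref{eq:inertiaTensorEquivalence}) gives $\mb{J}^*_o = n\mb{J}+\mb{P}$. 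Inverting the matched relation gives $(n\mb{J}+\mb{P})\mb{J}^{-1} = n\mb{I}_3 + \mb{P}$, that is, $\mb{P}\mb{J}^{-1} = \mb{P}$. Invertibility of $\mb{P}$ then yields $\mb{J} = \mb{I}_3$.
\end{itemize}
Since the original parameters were admissible, the rescaled ones $m^*_i=1$, $\mb{J}^*_i=\mb{I}_3$ satisfy (\ref{eq:virtualMassSum})--(\ref{eq:inertiaProportionality}), because those constraints are homogeneous under common scaling.

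\textbf{Main obstacle.} I expect the necessity direction to be the hardest part, for two reasons. First, I must justify carefully that matching the off-diagonal blocks really forces $\sum_i \mb{r}_i=\mb{0}$. This means handling the general Schur-complement form of $(\mb{G}\mb{G}^T)^{-1}$ before block diagonality is known. Second, I must handle the scale freedom cleanly, so that "equal masses" can legitimately be replaced by "unit masses". If the Schur-complement argument becomes messy, I would first try using the identity $\mb{G}\mb{G}^+_M = \mb{I}_6$ together with the fact that the columns of $\mb{G}^{\dagger}$ lie in the row space of $\mb{G}$. This would show directly that $\mb{G}^+_M$ must have the form $\mb{G}^T\mb{X}$ for some $\mb{X}$, from which the vanishing of $\sum_i \mb{S}(\mb{r}_i)$ can be read off.
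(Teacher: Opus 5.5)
Your proof is correct. Its sufficiency half is the paper's argument: $\sum_i\mathbf{r}_i=\mathbf{0}$ makes $\mathbf{G}\mathbf{G}^T$ block diagonal, and premultiplying the inverse by $\mathbf{G}^T$ gives (\ref{eq:GoodMPInverseG}). You also check explicitly that (\ref{eq:GoodParametrizedInverse}) with $m^*_o=n$, $\mathbf{J}^*_o=n\bar{\mathbf{J}}$ yields the same matrix, which the paper leaves implicit. Your inverse $\mathrm{diag}(\frac{1}{n}\mathbf{I}_3,(n\bar{\mathbf{J}})^{-1})$ is the correct one; the paper's display omits the inverse on the lower-right block.

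The genuine difference is necessity. The paper only remarks that if $\sum_i\mathbf{r}_i\neq\mathbf{0}$, the inverse is no longer block diagonal, so the formula \emph{evaluated at the unit weights} stops agreeing with $\mathbf{G}^{\dagger}$. Read that way, equality forces only the CoM condition and says nothing about (\ref{eq:inertiaProportionality}). For example, with $\mathbf{r}_i=\pm\mathbf{e}_1,\pm2\mathbf{e}_2$ the unit-weight formula equals $\mathbf{G}^{\dagger}$, yet $\mathbf{P}=\mathrm{diag}(8,2,10)\not\propto\mathbf{I}_3$. You instead read the left-hand side as ``some admissible weights yield $\mathbf{G}^{\dagger}$.'' You then show, by block matching and scale invariance, that such weights must be a common rescaling of $m^*_i=1$, $\mathbf{J}^*_i=\mathbf{I}_3$. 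Admissibility, proportionality included, then transfers to the unit weights. This proves the full biconditional under the reading in which it is true, at the cost of somewhat more bookkeeping than the paper's one-line remark.

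Two simplifications and one correction:
\begin{itemize}
\item The obstacle you anticipate is not there. The torque rows of $\mathbf{G}_i^T$ are $[\mathbf{0}\ \ \mathbf{I}_3]$, so the torque-row, force-column block of $\mathbf{G}^{\dagger}$ is exactly the lower-left block of $(\mathbf{G}\mathbf{G}^T)^{-1}$; there is no $\mathbf{S}(\mathbf{r}_i)$ term to cancel. Since this inverse is symmetric, its vanishing makes it, and hence $\mathbf{G}\mathbf{G}^T$, block diagonal. That gives $\sum_i\mathbf{S}(\mathbf{r}_i)=\mathbf{0}$ at once, with no Schur-complement or row-space argument.
\item The non-collinearity assumption can be dropped. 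After normalization, (\ref{eq:inertiaProportionality}) gives $\mathbf{P}=\beta\mathbf{J}$. Then $\mathbf{P}\mathbf{J}^{-1}=\mathbf{P}$ reads $\beta\mathbf{I}_3=\beta\mathbf{J}$, so $\mathbf{J}=\mathbf{I}_3$ unless $\mathbf{P}=\mathbf{0}$.
\item Your side remark misplaces where the original corollary fails. The matrix identity at unit weights needs only $\sum_i\mathbf{r}_i=\mathbf{0}$ and the corrected factor order $\mathbf{S}(\mathbf{r}_i)^T\bar{\mathbf{J}}^{-1}$. Proportionality enters the statement only through the admissibility requirement, not through the computation.
\end{itemize}
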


\setcounter{corollary}{1}

\subsection{Implications for the Equilibrating Wrench Distribution}
\cite{Kumar1988} state that the equilibrating force distribution returned by $\mb{G}^{\dagger}$ is free of interaction forces and that the forces in this distribution lie on a helicoidal vector field. Although the former statement is always true, the latter only holds if constraints (\ref{eq:virtualMassSum})-(\ref{eq:inertiaProportionality}) can be satisfied by $m^*_i = 1$ and $\mb{J}^*_i = \mb{I}_3 \ \forall i$.

It was shown in Section \ref{sec:CentripetalForceImplication} that the instantaneous acceleration of the virtual rigid body $\ddot{\mb{x}}^*_o$ defines a helicoidal vector field since we assume it to be at rest. It was also shown in Section \ref{sec:Implications-ParamMPInv} that $\mb{G}^+_M = \mb{G}^{\dagger}$ when $m^*_i = 1$ and $\mb{J}^*_i = \mb{I}_3 \ \forall i$ solves (\ref{eq:virtualMassSum})-(\ref{eq:inertiaProportionality}). As the authors of \cite{Kumar1988} consider only pure forces, we note that the forces-only solution can be found by omitting the blocks $\begin{bmatrix}
    \mb{0}_3 & \Bar{\mb{J}}^{-1}
\end{bmatrix}$ in (\ref{eq:GoodMPInverseG}).

With this realization, we can see that it is only by coincidence that the equilibrating force field forms a helicoidal field in some cases. Under the circumstances described above where $m^*_i = 1 \ \forall i$, it is trivial to see that $\mb{f}_{m,i} = \ddot{\mb{p}}^*_i \ \forall i$. Thus, since the virtual instantaneous acceleration field is known to be helicoidal, the vector field of manipulating forces will be too. In addition, since $\mb{G}^+_M = \mb{G}^{\dagger}$ when $m^*_i = 1 \ \forall i$, the manipulating force distribution and the equilibrating force distribution are identical in this case and the latter is therefore a helicoidal field as well. The result is that all three vector fields are identical and helicoidal when $m^*_i = 1 \ \forall i$ solves (\ref{eq:virtualMassSum})-(\ref{eq:inertiaProportionality}). Otherwise, neither the equilibrating force distribution nor the manipulating force distribution form helicoidal fields.

\subsection{Implications for Physically Feasible Wrench Decomposition Algorithms}
\textcolor{black}{Current state-of-the-art algorithms for wrench decomposition naively assume that manipulating wrenches and constraint wrenches are both components of the applied wrenches and perform the decomposition using an equation similar to the one shown in }(\ref{eq:DecompSchmidtsetal}). \textcolor{black}{This decision is not properly justified in }\cite{Schmidts2016} \textcolor{black}{nor in} \cite{Donner2018}. 

\textcolor{black}{As we have shown, the constraint wrench components are not generated \emph{by} the applied wrenches. Rather, they are \emph{added} to the applied wrenches in order for the resultant (the manipulating wrench) to generate the proper constrained acceleration of its corresponding point on the rigid body. }

\textcolor{black}{This is not a matter of opinion. It is a direct consequence of the principles of rigid body mechanics. The work of }\cite{Udwadia1992} \textcolor{black}{provides a clear, physically intuitive interpretation of constrained motion that clearly highlights the roles of the manipulating and constraint wrenches and makes this distinction clear.}

\textcolor{black}{As a result, we argue that there are two major issues with the method for wrench decomposition proposed in }\cite{Donner2018}. \textcolor{black}{First, any constraint wrenches found with }(\ref{eq:DecompSchmidtsetal}) \textcolor{black}{will be of the wrong sign, since they are on the wrong side of the wrench equilibrium equation which should have the form shown in }(\ref{eq:wrenchEquilibrium}). \textcolor{black}{Second, the feasible region for the manipulating and constraint wrench components will be needlessly limited by physical plausibility conditions derived from an erroneous assumption. The Udwadia-Kalaba equation, on the other hand, guarantees physical plausibility, having been derived from fundamental laws of rigid body mechanics.}

\section{\textcolor{black}{Case Studies}}
\label{sec:Examples}
\subsection{\textcolor{black}{Planar Point Mass}}
\textcolor{black}{This example would seldom be seen in real robotics applications, but enables us to demonstrate how our proposed framework corrects misconceptions in the literature related to wrench decomposition. To this end, we revisit the example from} \cite{Donner2018} \textcolor{black}{where three planar forces are applied to a point mass (see Figure} \ref{fig:Donneretal-ProblemSetup}).

\begin{figure}
    \centering
    \includegraphics[width=0.8\linewidth]{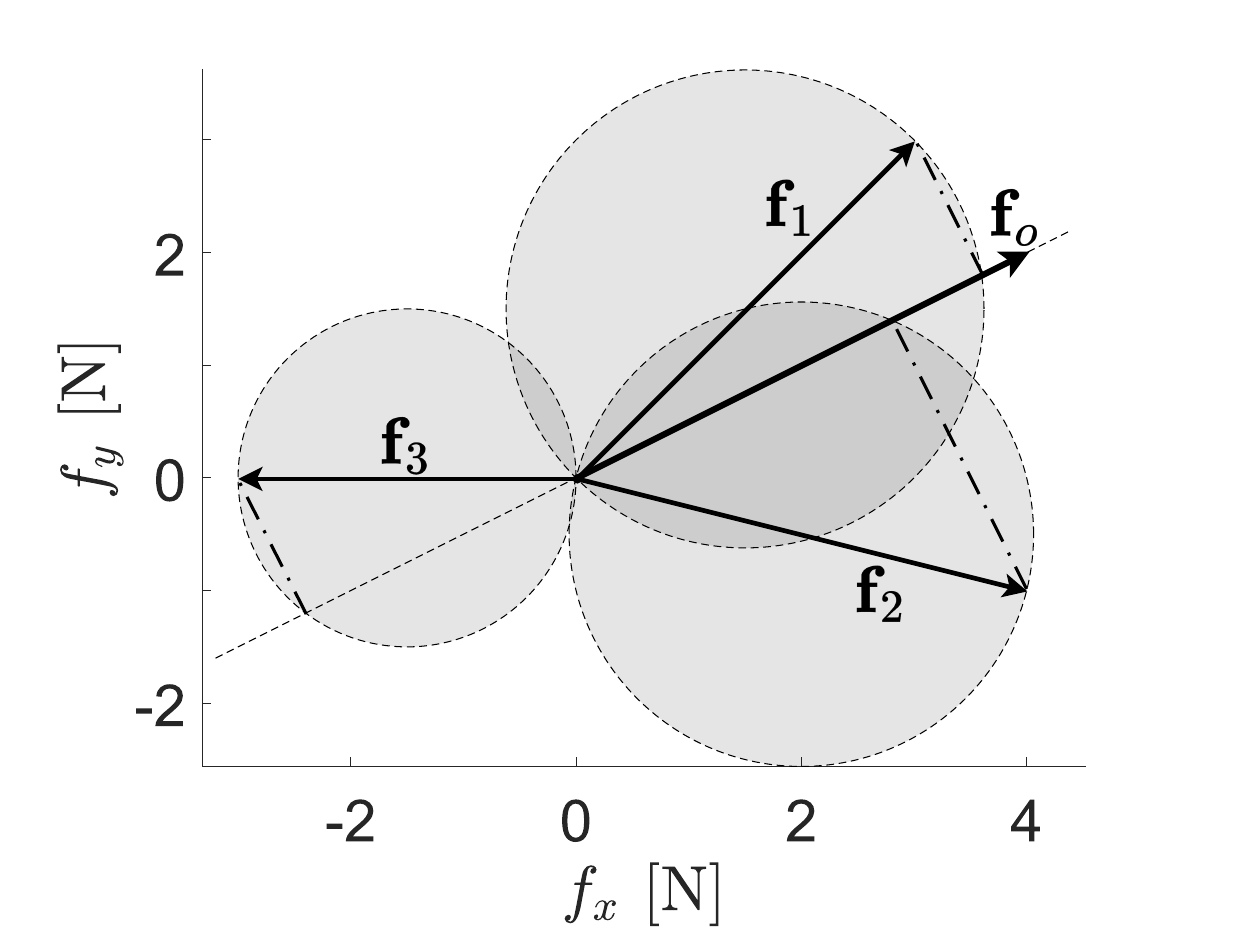}
    \caption{Three forces applied to a point mass.}
    \label{fig:Donneretal-ProblemSetup}
\end{figure}

\textcolor{black}{The three forces $\mb{f}_i$ sum to $\mb{f}_o$. The physical feasibility criteria derived in} \cite{Donner2018} \textcolor{black}{constrain both the manipulating and constraint components to lie within the circles shown in Figure} \ref{fig:Donneretal-ProblemSetup}. \textcolor{black}{The authors also state that the manipulating force component $\mb{f}_{m,i}$ of an applied force $\mb{f}_i$ cannot be larger than its projection on $\mb{f}_o$ (i.e., $\mb{f}_{m,i}^T \mb{f}_{m,i} \leq \mb{f}_i^T \mb{f}_{m,i}$) and that this projection must be positive. Any components which are in the opposite direction to $\mb{f}_o$ or perpendicular to $\mb{f}_o$ are components of the constraint force $\mb{f}_{c,i}$. }

\textcolor{black}{This problem can easily be solved using our methods. The force equilibrium equation is written as }

\begin{equation}
    \mb{f}_o = \mb{G} \mb{f} = \begin{bmatrix}
    \mb{I}_2 & \mb{I}_2 & \mb{I}_3
\end{bmatrix} \begin{bmatrix}
    \mb{f}_1 \\ \mb{f}_2 \\ \mb{f}_3
\end{bmatrix}.
\end{equation}

\textcolor{black}{Manipulating wrench distributions can then be obtained with}

\begin{equation}
    \mb{f}_m = \mb{G}^+_M \mb{f}_o = \frac{1}{m_o}\begin{bmatrix}
        m^*_1 \mb{I}_2 \\
        m^*_2 \mb{I}_2 \\
        m^*_3 \mb{I}_2
    \end{bmatrix}\mb{f}_o.
\end{equation}

\textcolor{black}{A dynamically equivalent system for the point mass $m^*_o$ is simply three smaller point masses $m^*_i$ that are located at the same point. By choosing admissible solutions for the virtual masses $m^*_i$, we can determine the set of manipulating wrench distributions. We need only consider constraint }(\ref{eq:virtualMassSum}), \textcolor{black}{since the other constraints are trivially satisfied. To keep consistent with the notation used thus far, we write the constraint as}

\begin{equation}
    \mb{R} \mb{m}^* = \begin{bmatrix}
        1 & 1 & 1 
    \end{bmatrix} \begin{bmatrix}
        m^*_1 \\ m^*_2 \\ m^*_3
    \end{bmatrix} = m^*_o
\end{equation}
\textcolor{black}{where $m^*_o$ is the virtual mass assigned to the point mass. }

\textcolor{black}{Matrix $\mb{R}$ is $1 \times 3$ and thus has a null-space of dimension 2. We therefore have two degrees of freedom within the space of admissible solutions for $\mb{m}^*$ that can be used to parametrize the feasible set of manipulating wrench distributions ($d = \text{dim}(\mathcal{N}(\mb{R})) = 2$). We choose $m^*_1$ and $m^*_2$ as our free variables and the value of $m^*_3$ is then determined to satisfy the mass sum constraint. The feasible set for $m^*_1$ and $m^*_2$ can be expressed as}

\begin{equation}
\begin{split}
    \mathcal{M} = \{ \begin{bmatrix}
        m^*_1 & m^*_2
    \end{bmatrix}^T \in & \mathbb{R}^2 \mid m^*_1 \geq 0, \\ &m^*_2 \geq 0, m^*_1 + m^*_2 \leq m^*_o \}.
\end{split}
\end{equation}

Figure \ref{fig:Donneretal-Results} \textcolor{black}{shows two of the manipulating wrench distributions given by the }\cite{Donner2018} \textcolor{black}{for this example. We consider both to be valid, but in }\cite{Donner2018}, \textcolor{black}{only the solution in Figure }\ref{subfig:Donneretal-Results1} \textcolor{black}{is considered valid according to their assumptions. }

\begin{figure}[t!]
\centering
    \subfloat[Manipulating force distribution with $\mb{m}^* = \begin{bmatrix}
        2.25 & 0.75 & 0
    \end{bmatrix}^T$.]{%
        \includegraphics[width=.7\linewidth]{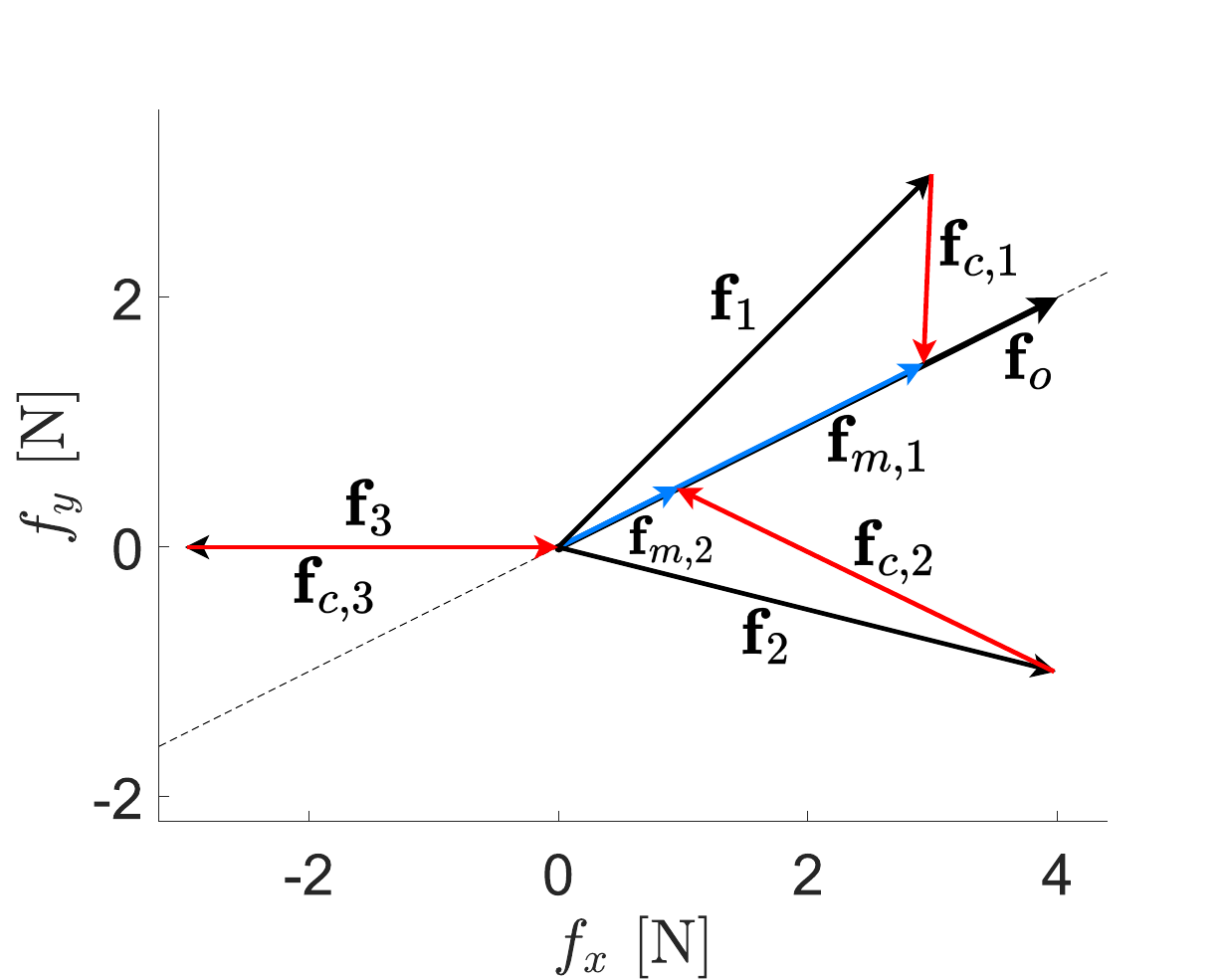}%
        \label{subfig:Donneretal-Results1}%
    } \\
    \subfloat[Manipulating force distribution with $\mb{m}^* = \begin{bmatrix}
        1 & 1 & 1
    \end{bmatrix}^T$.]{%
        \includegraphics[width=.7\linewidth]{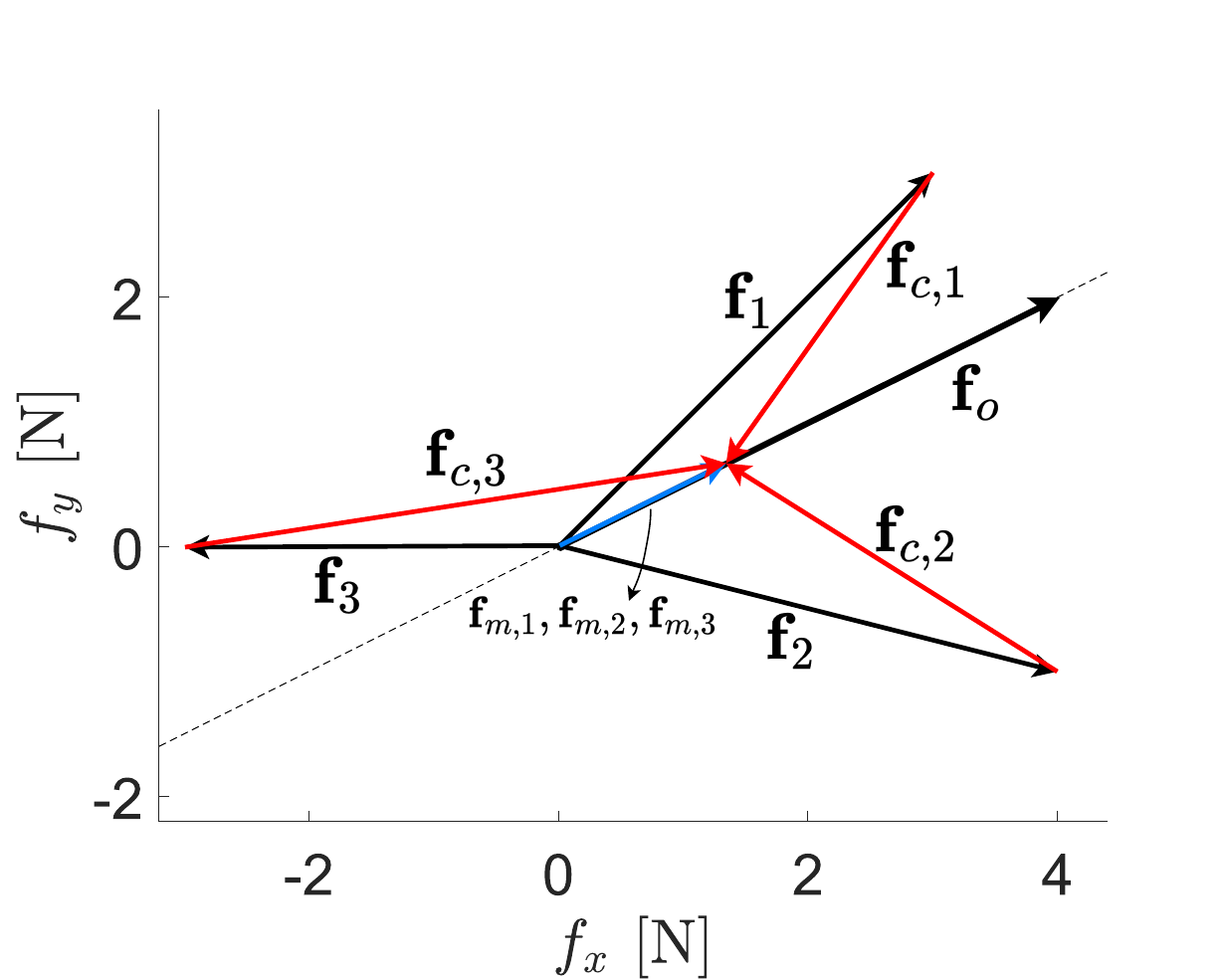}%
        \label{subfig:Donneretal-Results2}%
    }
    \caption{Two manipulating force distributions for three planar forces applied to a point mass.}
    \label{fig:Donneretal-Results}
\end{figure}

\textcolor{black}{The resultant force $\mb{f}_o$ induces instantaneous acceleration $\ddot{\mb{p}}^*_o$ on the point mass. The three LMIEs of mass $m^*_i$ each have instantaneous acceleration $\ddot{\mb{p}}_i = \ddot{\mb{p}}_o$. Then, each manipulating force can be found with $\mb{f}_{m,i} = m^*_i \ddot{\mb{p}}_i$. Any specific solution depends on our choice of virtual mass parameters. The force distribution in Figure} \ref{subfig:Donneretal-Results1} \textcolor{black}{corresponds to solution $\mb{m^*} = \begin{bmatrix}
    2.25 & 0.75 & 0
\end{bmatrix}$ and the distribution in Figure} \ref{subfig:Donneretal-Results2} \textcolor{black}{to $\mb{m^*} = \begin{bmatrix}
    1 & 1 & 1
\end{bmatrix}$, which is the minimum-norm solution. }

\cite{Donner2018} \textcolor{black}{only consider the solution in Figure }\ref{subfig:Donneretal-Results1} \textcolor{black}{to be valid since it is the only one which is consistent with their assumptions concerning physical plausibility. First, they state that $\mb{f}_3$ cannot have a manipulating force component, since it has no positive component along $\mb{f}_o$. However, in Figure} \ref{subfig:Donneretal-Results2}, \textcolor{black}{$\mb{f}_3$ is assigned a manipulating force. Second, the solution in Figure }\ref{subfig:Donneretal-Results2} \textcolor{black}{violates their physical plausibility criteria because the constraint forces extend beyond the feasible region defined by the circles in Figure }\ref{fig:Donneretal-ProblemSetup}. \textcolor{black}{It is our position that these arguments do not negate the validity of the manipulating wrench distribution shown in Figure} \ref{subfig:Donneretal-Results2}, \textcolor{black}{since this solution is consistent with the Udwadia-Kalaba equation. }

\textcolor{black}{We now demonstrate how our proposed framework resolves the ambiguity surrounding the internal loading state of the object when the manipulating wrench distribution admits infinitely many solutions. Despite the constraint force vector $\mb{f}_c$ being dependent on our choice of solution for $\mb{h}_m$, the internal loading state of the point mass can be uniquely defined by a single vector. We have already chosen $m^*_1$ and $m^*_2$ as our free variables. Following the convention proposed in Section }\ref{sec:internalLoadingState}, \textcolor{black}{we choose the manipulating force distribution that corresponds to the minimum-norm solution $\mb{m}^* = \begin{bmatrix}
    1 & 1 & 1
\end{bmatrix}$ (i.e., $\mb{f}_{m,i} = \frac{1}{3}\mb{f}_o \ \forall i$) as the particular solution $\mb{f}_{m,p}$. }

\textcolor{black}{The basis vectors which span $\mathcal{N}_m$ can be written as }

\begin{equation}
    \mb{k}_1 = \begin{bmatrix}
        \mb{u}_o \\ \mb{0}_{2 \times 1} \\ -\mb{u}_o
    \end{bmatrix}, \quad \mb{k}_2 = \begin{bmatrix}
        \mb{0}_{2 \times 1} \\ \mb{u}_o \\ -\mb{u}_o
    \end{bmatrix}.
\end{equation}
\textcolor{black}{where $\mb{u}_o$ is a unit vector in the direction of $\mb{f}_o$. $\mb{k}_1$ and $\mb{k}_2$ describe the effect on the manipulating force vector $\mb{f}_m$ of changing $m^*_1$ and $m^*_2$, respectively. The general equation for the manipulating force distribution for this case is then}

\begin{equation}
    \mb{f}_m = \mb{f}_{m,p} + \begin{bmatrix}
        \mb{k}_1 & \mb{k}_2
    \end{bmatrix} \begin{bmatrix}
        m^*_1 \\ m^*_2
    \end{bmatrix}
\end{equation}
\textcolor{black}{where, again, $\mb{f}_{m,p}$ is the particular solution which corresponds to $\mb{m}^* = \begin{bmatrix}
    1 & 1 & 1
\end{bmatrix}$ and $\begin{bmatrix}
    m^*_1 & m^*_2
\end{bmatrix}^T$ must belong to the feasible set $\mathcal{M}$.}

\textcolor{black}{Finally, we obtain }

\begin{equation}
    \mb{z}_1 = \begin{bmatrix}
        0.116 \\ -0.231 \\ -0.358 \\ 0.716 \\ 0.242 \\ -0.485
    \end{bmatrix}, \quad \mb{z}_2 = \begin{bmatrix}
        0.346 \\ -0.693 \\ -0.073 \\ 0.146 \\ -0.273 \\ 0.546
    \end{bmatrix}
\end{equation}
\textcolor{black}{as the two basis vectors which span the null-space of matrix $\begin{bmatrix}
    \mb{G}^T & \mb{k}_1 & \mb{k}_2
\end{bmatrix}^T$. Using }(\ref{eq:lambda_c}) \textcolor{black}{with $\mb{Z} = \begin{bmatrix}
    \mb{z}_1 & \mb{z}_2
\end{bmatrix}$, we find that every constraint force vector $\mb{f}_c$ in $\mathcal{N}_c$ evaluates to the same vector} 

\begin{equation}
    \boldsymbol{\lambda}_c  = \begin{bmatrix}
        3.220 \\ 0.659
    \end{bmatrix},
\end{equation}
\textcolor{black}{which unambiguously describes the internal loading state of the rigid body. }

\subsection{Planar Body}
Let us consider a planar rigid body in the shape of an equilateral triangle. A pure force $\mb{f}_i = \begin{bmatrix}
    f_{i,x} & f_{i,y}
\end{bmatrix}^T$ is applied at each vertex, and a pure torque $\tau$ is applied at the centroid. For simplicity, we consider each vertex to be $r = 1$ m from the CoM (see Figure \ref{fig:2DManipExample}).

\begin{figure}
    \centering
    \includegraphics[width=0.7\linewidth]{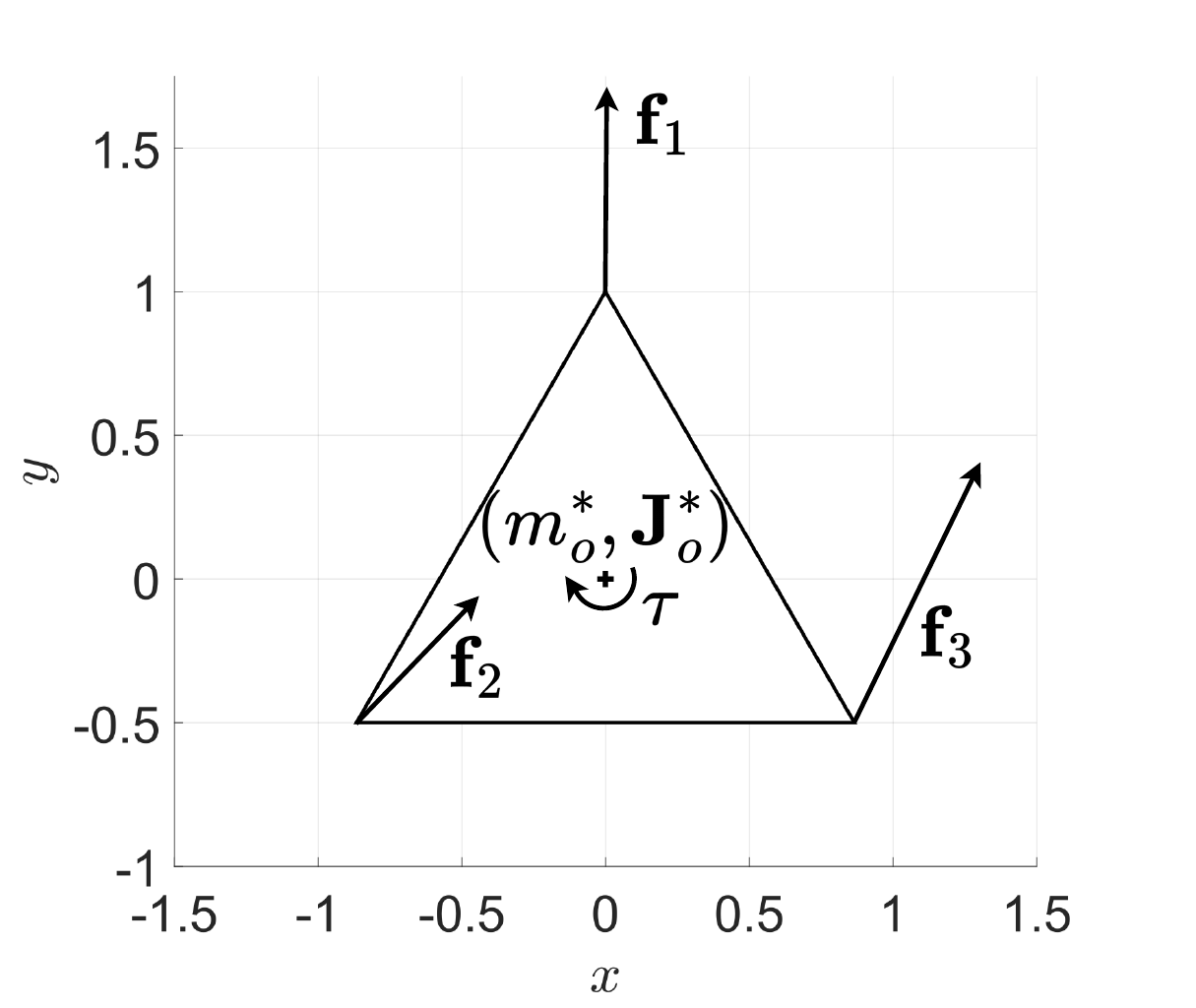}
    \caption{Three forces and one moment applied to a planar rigid body.}
    \label{fig:2DManipExample}
\end{figure}

This example provides an opportunity to better visualize the weighting between the force and torque components with the virtual inertias $m^*_i$ and $\mb{J}^*_i$ and its effect on the helicoidal virtual acceleration field. We may construct an dynamically equivalent system of this rigid body by defining a LMIE of mass $m^*_i$ for $i=1, 2, 3$ at each vertex and an additional LMIE with inertia tensor $\mb{J}^*_4$ at the CoM. 

We consider three cases for a desired resultant wrench $\mb{h}_o = \begin{bmatrix}
    f_{ox} & f_{oy} & 0 & 0 & 0 & t_o
\end{bmatrix}^T = \begin{bmatrix}
    1 & 2 & 0 & 0 & 0 & -2
\end{bmatrix}^T$ where we assign either 0\%, 50\%, or 100\% of the total desired wrench $t_o = -2$ Nm to the manipulating torque $\tau_m$ applied to the rigid body. We choose $m^*_o = 3$ kg and obtain $\mb{m}^* = \begin{bmatrix}
    1 & 1 & 1
\end{bmatrix}^T \text{kg}$ as the unique solution for the virtual masses of the LMIEs. The virtual mass of the LMIE at the CoM $m^*_4$ must be zero since only a pure torque is applied there. 

In the first case where none of $t_o$ is assigned to $\tau_m$, the resultant torque $t_o$ is entirely generated by the forces. Hence, the inertia of the object must come entirely from the virtual masses $m^*_i$. We therefore set $\mb{J}^*_o = \sum_{i=1}^3 \mb{S}(\mb{r}_i) m^*_i \mb{S}(\mb{r}_i)^T$ and obtain the solution shown in Figure \ref{subfig:PlanarAccelerationFields-Page-1}. Note that $\mb{f}_{m,i} = \ddot{\mb{p}}^*_i \ \forall i$. 

In the second case where half of $t_o$ is assigned to $\tau_m$, the solution for the virtual masses remains the same, but now the inertia of the rigid body must be shared by the virtual masses and the inertia tensor $\mb{J}^*_4$. To do so, we set $\mb{J}^*_o = 2\sum_{i=1}^3 \mb{S}(\mb{r}_i) m^*_i \mb{S}(\mb{r}_i)^T$ and $\mb{J}^*_4 = \sum_{i=1}^3 \mb{S}(\mb{r}_i) m^*_i \mb{S}(\mb{r}_i)^T$. This creates an equal weighting between the manipulating torque $\tau_m$ and the manipulating force-induced torque $\tau_{fm}$. Note that $\mb{J}^*_o$ being larger than in the first case is permitted due to the scaling invariance (see Section \ref{sec:ScaleInvarianceImplication}). We obtain the solution shown in Figure \ref{subfig:PlanarAccelerationFields-Page-2}.

In the third case where all of $t_o$ is assigned to $\tau_m$, $\mb{J}^*_4$ must represent the entirety of the inertia of the object. Since the inertia contribution from the masses is fixed to be $\sum_{i=1}^3 \mb{S}(\mb{r}_i) m^*_i \mb{S}(\mb{r}_i)^T$, we can simply let $\mb{J}^*_4$ and $\mb{J}^*_o$ approach infinity such that the contribution from the system of point masses is negligible. The solution is shown in Figure \ref{subfig:PlanarAccelerationFields-Page-3}. 

The field lines of the helicoidal acceleration field $\ddot{\mb{x}}_f$ have maximum curvature when all of the resultant torque $t_o$ is assigned to the forces (Figure \ref{subfig:PlanarAccelerationFields-Page-1}). Conversely, the field lines become parallel when all of $t_o$ is assigned to $\tau_m$ because $\ddot{\mb{x}}_f$ then becomes the acceleration induced by a pure force and all the points have the same acceleration $(\ddot{\mb{p}}_i = \ddot{\mb{p}}_o \ \forall i)$.

\begin{figure}[t!]
\centering
    \subfloat[$\tau_m = 0 \ \text{Nm}$.]{%
        \includegraphics[width=.7\linewidth]{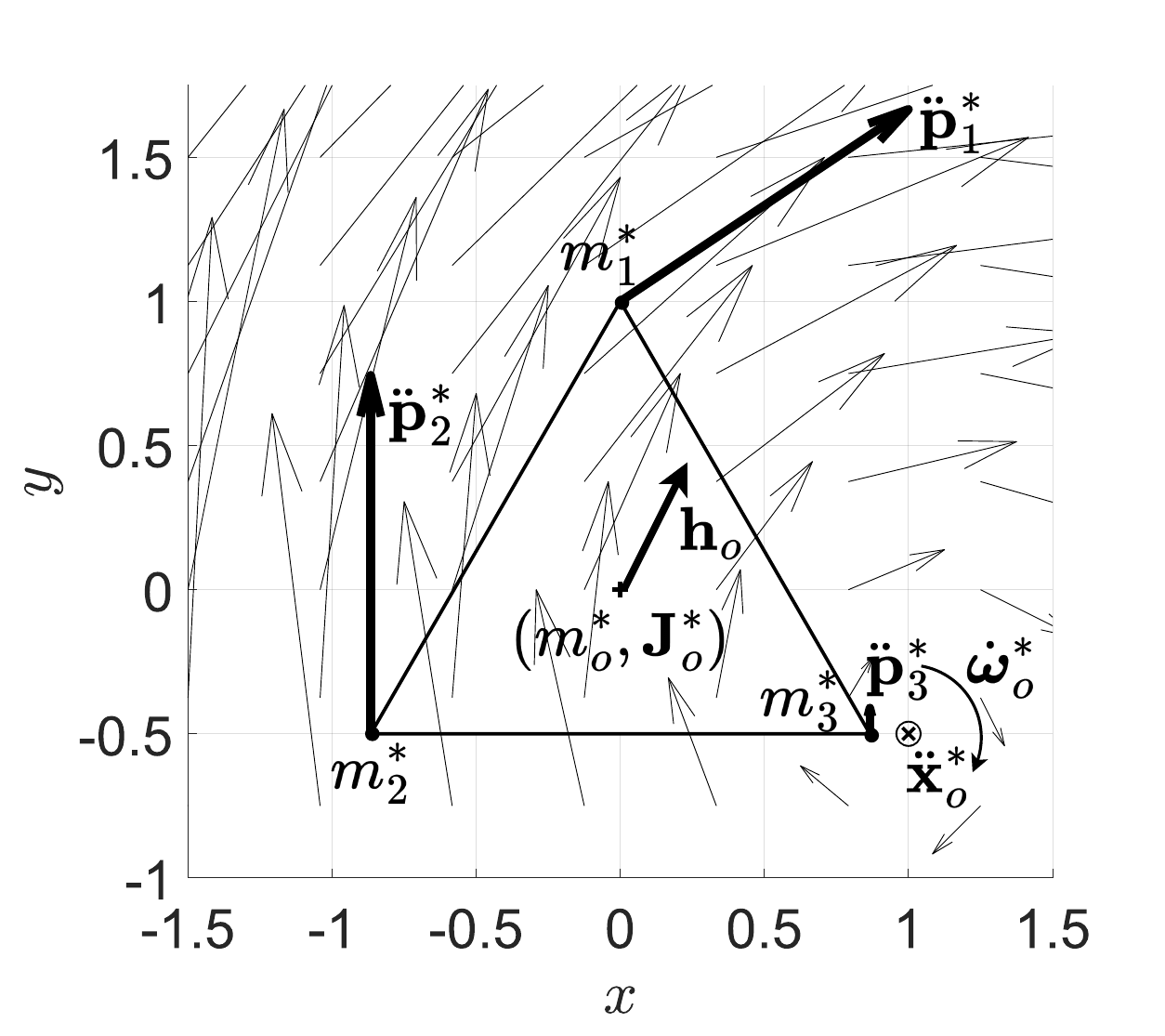}%
        \label{subfig:PlanarAccelerationFields-Page-1}%
    } \\
    \subfloat[$\tau_m = 0.5 \tau_o = -1 \ \text{Nm}$.]{%
        \includegraphics[width=.7\linewidth]{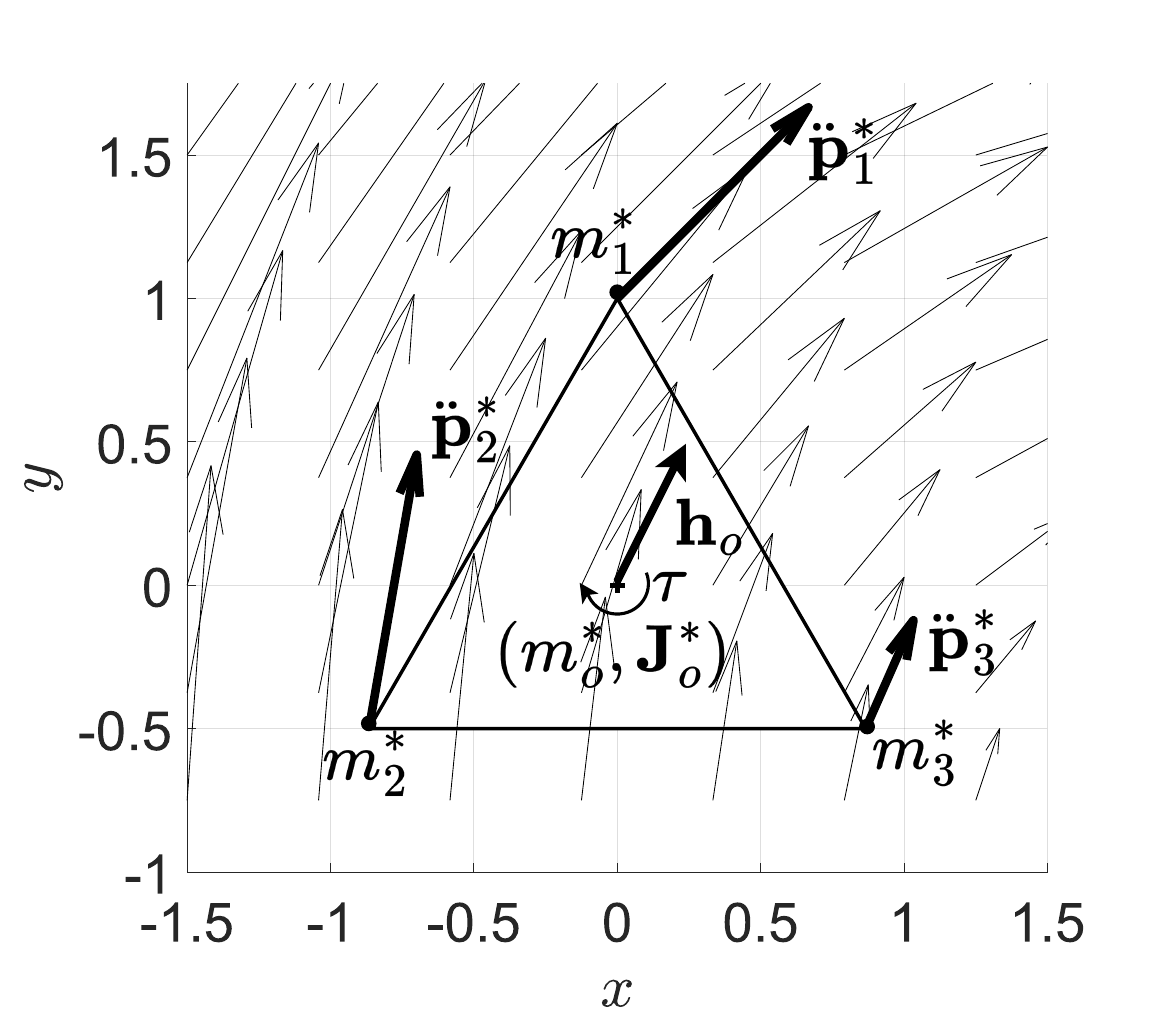}%
        \label{subfig:PlanarAccelerationFields-Page-2}%
    }\\
    \centering
    \subfloat[$\tau_m = \tau_o = -2 \ \text{Nm}$.]{%
        \includegraphics[width=.7\linewidth]{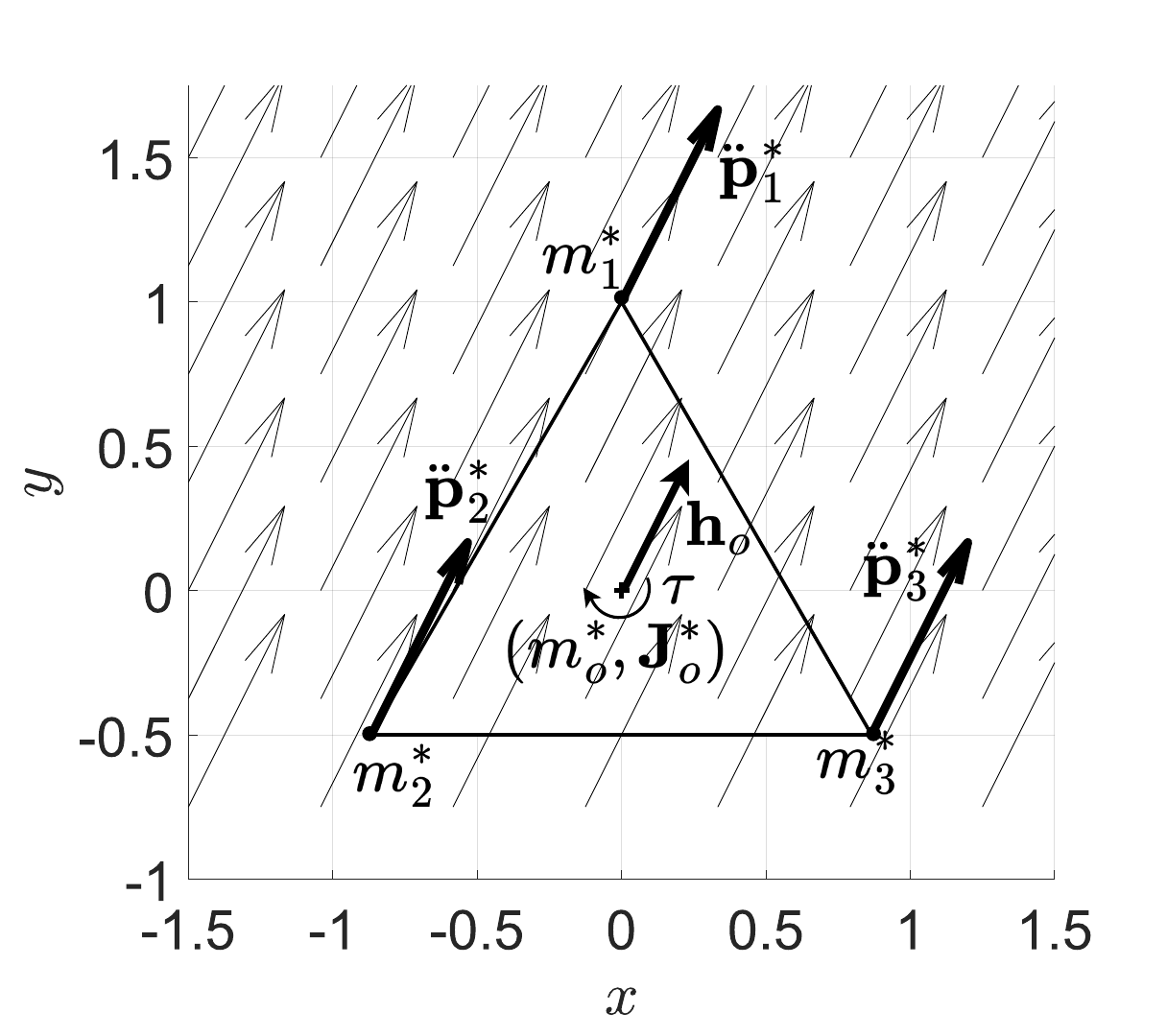}%
        \label{subfig:PlanarAccelerationFields-Page-3}%
    }
    \caption{Vector fields obtained from $\ddot{\mb{x}}^*_{f}$ considering different contributions from the manipulating torque $\tau_m$.}
    \label{fig:PlanarExample}
\end{figure}

\subsection{3-Dimensional Body}
For this example, we perform wrench decomposition (which also includes a wrench synthesis step) for the most general case where a 3-dimensional object is manipulated by the kinematic chains. Similar examples are often seen in robotics applications, either in the context of multifingered grasping or cooperative manipulation by multiple serial robots. We show how to determine the forces only solution and how to distribute the resultant torque among the pure torques that may be applied. 

As explained in Section \ref{sec:uniqueness}, if we wish to match the inertial properties of the object exactly, a total of 10 independent forces are required. As this is not practical, we consider the reduced case where we only consider constraints (\ref{eq:virtualMassSum}) and (\ref{eq:virtualCoMEquivalence}) and only four independent wrenches are necessary. 

Four wrenches $\mb{h}_i$ are applied to a unit sphere at locations defined by vectors $\mb{r}_i$ written as

\begin{equation}
    \mb{r}_i(\alpha_i, \gamma_i) = \begin{bmatrix}
        \cos \alpha_i \cos \gamma_i \\ \sin \alpha_i \cos \gamma_i \\ \sin \gamma_i
    \end{bmatrix} \text{m},
\end{equation}
where $\alpha_i$ and $\gamma_i$ are the azimuth and elevation angles that define a point on the surface of the sphere. The four wrenches are applied at coordinates $(0\degree, 65\degree)$, $(-60\degree, -25\degree)$, $(65\degree, -45\degree)$, and $(180\degree, 0\degree)$. The example is illustrated in Figure \ref{fig:sphereExample}. 

\begin{figure}
    \centering
    \includegraphics[width=0.7\linewidth]{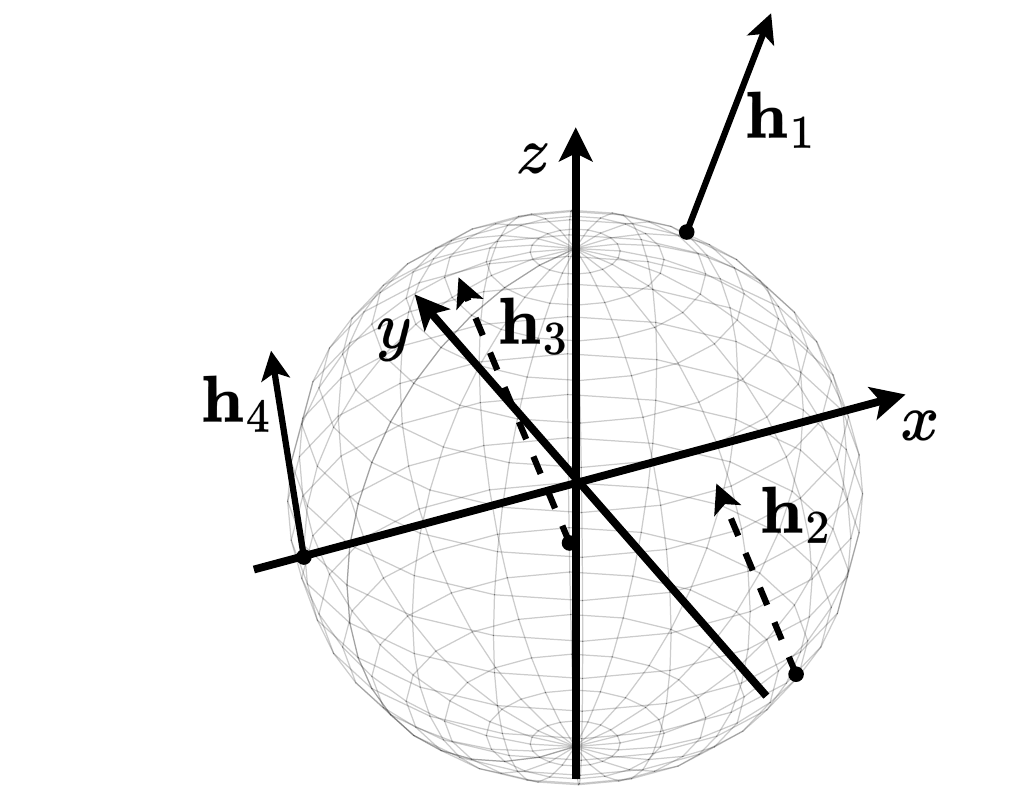}
    \caption{Unit sphere with four applied wrenches. A dashed line indicates that the wrench is applied on the back-side of the sphere.}
    \label{fig:sphereExample}
\end{figure}

We choose to set the applied wrenches arbitrarily as

\begin{equation}
\begin{aligned}
    \mb{h}_1 = \begin{bmatrix}
        1 \\ 0.5 \\ 1 \\ 0 \\ 0.5 \\ 0.5
    \end{bmatrix},& \ \mb{h}_2 = \begin{bmatrix}
        0 \\ 0 \\ 1 \\ 1 \\ 0 \\ 0
    \end{bmatrix} \\ \ \mb{h}_3 = \begin{bmatrix}
        -0.25 \\ 0.75 \\ 0 \\ 0 \\ -0.5 \\ 0.5
    \end{bmatrix},& \ \mb{h}_4 = \begin{bmatrix}
        0.5 \\ -0.25 \\ 1 \\ -0.5 \\ 0.75 \\ 0
    \end{bmatrix}.
\end{aligned}
\end{equation}
The resultant wrench is then $\mb{h}_o = \begin{bmatrix}
    1.250 & 1 & 3 & -0.208 & 1.957 & 1.846
\end{bmatrix}^T$. From here, the wrench decomposition problem is reduced to finding an admissible solution for the manipulating wrench distribution $\mb{h}_m$ for the resultant wrench (i.e., performing wrench synthesis), then using (\ref{eq:constraintWrenchAnalysis}) to calculate the constraint wrenches. $\mb{h}_{m,p}$ is chosen as the unique forces-only solution $\Tilde{\mb{h}}_m$ that corresponds to the vector $\mb{m}^*$ that solves (\ref{eq:VirtualMassSolutionReduced}). We choose $m^*_o = 4$ kg and obtain the solution $\mb{m}^* = \begin{bmatrix}
    1.123 & 0.790 & 0.967 & 1.121
\end{bmatrix}^T$. Since we wish to determine the manipulating wrench distribution which consists of only pure forces, the inertia of the system of masses $m^*_i$ must represent the total inertia of the dynamically equivalent system of the virtual rigid body. $\mb{J}^*_o$ is therefore obtained as

\begin{equation}
\begin{split}
    \mb{J}^*_o &=\sum_{i=1}^4 \mb{S}(\mb{r}_i) m^*_i \mb{S}(\mb{r}_i)^T \\ &= \begin{bmatrix}
        2.430 & 0.096 & -0.074 \\ 0.096 & 3.117 & 0.176 \\ -0.074 & 0.176 & 2.454
    \end{bmatrix}.
\end{split}
    \label{eq:virtualSphereInertia}
\end{equation}
We obtain the virtual instantaneous acceleration $\ddot{\mb{x}}^*_o = \begin{bmatrix}
    0.313 & 0.250 & 0.750 & -0.087 & 0.591 & 0.707
\end{bmatrix}^T$ and the solution is 

\begin{equation}
    \begin{aligned}
        \Tilde{\mb{h}}_{m,1} = \begin{bmatrix}
            0.952 \\ 0.705 \\ 0.562 \\ \mb{0}_{3 \times 1}
        \end{bmatrix}, \ & \Tilde{\mb{h}}_{m,2} = \begin{bmatrix}
            0.488 \\ 0.421 \\ 0.435 \\ \mb{0}_{3 \times 1}
        \end{bmatrix}, \\ \Tilde{\mb{h}}_{m,3} = \begin{bmatrix}
            -0.540 \\ 0.387 \\ 0.501 \\ \mb{0}_{3 \times 1}
        \end{bmatrix}, \ & \Tilde{\mb{h}}_{m,4} = \begin{bmatrix}
            0.350 \\ -0.513 \\ 1.503 \\ \mb{0}_{3 \times 1}
        \end{bmatrix}.
    \end{aligned}
    \label{eq:sphereAnalysisForcesOnly}
\end{equation}

The constraint wrench components that correspond to this manipulating wrench distribution can be found with (\ref{eq:constraintWrenchAnalysis}). We obtain 

\begin{equation}
\begin{aligned}
    \mb{h}_{c,1} = \begin{bmatrix}
        -0.048 \\ 0.205 \\ -0.438 \\ 0 \\ -0.500 \\ -0.500
    \end{bmatrix},& \ \mb{h}_{c,2} = \begin{bmatrix}
        0.488 \\ 0.421 \\ -0.565 \\ -1.000 \\ 0 \\ 0 
    \end{bmatrix}, \\
    \mb{h}_{c,3} = \begin{bmatrix}
        -0.290 \\ -0.364 \\ 0.501 \\ 0 \\ 0.500 \\ -0.500
    \end{bmatrix},& \ \mb{h}_{c,4} = \begin{bmatrix}
        -0.150 \\ -0.263 \\ 0.503 \\ 0.500 \\ -0.750 \\ 0 
    \end{bmatrix},
\end{aligned}
\label{eq:ForcesOnlyhc}
\end{equation}
which completes the wrench decomposition.

\textcolor{black}{Let us now compare these results with other existing methods in the literature. If we use Erhart and Hirche's method and calculate the forces-only manipulating wrench distribution with }(\ref{eq:BadParametrizedInverse}), \textcolor{black}{we obtain }

\begin{equation}
    \begin{aligned}
        \Tilde{\mb{h}}_{m,1} = \begin{bmatrix}
            1.145 \\ 0.627 \\ 0.463 \\ \mb{0}_{3 \times 1}
        \end{bmatrix}, \ & \Tilde{\mb{h}}_{m,2} = \begin{bmatrix}
            0.434 \\ 0.395 \\ 0.351 \\ \mb{0}_{3 \times 1}
        \end{bmatrix}, \\ \Tilde{\mb{h}}_{m,3} = \begin{bmatrix}
            -0.736 \\ 0.418 \\ 0.398 \\ \mb{0}_{3 \times 1}
        \end{bmatrix}, \ & \Tilde{\mb{h}}_{m,4} = \begin{bmatrix}
            0.408 \\ -0.439 \\ 1.789 \\ \mb{0}_{3 \times 1}
        \end{bmatrix}.
    \end{aligned}
    \label{eq:sphereAnalysisForcesOnlyEH}
\end{equation}
\textcolor{black}{However, substituting these wrenches into }(\ref{eq:staticEq}) \textcolor{black}{reveals that they generate a resultant wrench $\mb{h}_o = \begin{bmatrix}
    1.250 & 1 & 3 & -0.126 & 2.690 & 1.820
\end{bmatrix}$, which differs from the resultant wrench we prescribed. The solution in }(\ref{eq:sphereAnalysisForcesOnlyEH}) \textcolor{black}{is therefore not a valid manipulating wrench distribution. This is due to the incorrect form of $\mb{G}^+_M$ in }(\ref{eq:BadParametrizedInverse}), \textcolor{black}{which we have corrected in this paper. }

\textcolor{black}{Calculating the equilibrating wrench distribution using $\mb{G}^{\dagger}$, we obtain}

\begin{equation}
    \begin{aligned}
        \mb{f}_{e,1} = \begin{bmatrix}
            0.969 \\ 0.609 \\ 0.478 
        \end{bmatrix}, \ & \mb{f}_{e,2} = \begin{bmatrix}
            0.555 \\ 0.484 \\ 0.542 
        \end{bmatrix}, \\ \mb{f}_{e,3} = \begin{bmatrix}
            -0.602 \\ 0.350 \\ 0.496 
        \end{bmatrix}, \ & \mb{f}_{e,4} = \begin{bmatrix}
            0.328 \\ -0.443 \\ 1.484 
        \end{bmatrix}.
    \end{aligned}
    \label{eq:SphereEquilibratingForceDistribution}
\end{equation}
\textcolor{black}{This force distribution generates the correct resultant wrench while also satisfying the condition for no interaction forces }(\ref{eq:zeroInteractionForceCondition}). \textcolor{black}{However, these forces \emph{do not lie on a helicoidal field}. If instead we use the method proposed in }\cite{Kumar1988} \textcolor{black}{for calculating the parameters of the helicoidal field to obtain the equilibrating forces, we obtain the solution}

\begin{equation}
    \begin{aligned}
        \mb{f}_{e,1} = \begin{bmatrix}
            0.889 \\ 0.667 \\ 0.481 
        \end{bmatrix}, \ & \mb{f}_{e,2} = \begin{bmatrix}
            0.600 \\ 0.517 \\ 0.563 
        \end{bmatrix}, \\ \mb{f}_{e,3} = \begin{bmatrix}
            -0.592 \\ 0.370 \\ 0.477 
        \end{bmatrix}, \ & \mb{f}_{e,4} = \begin{bmatrix}
            0.313 \\ -0.459 \\ 1.386
        \end{bmatrix}.
    \end{aligned}
    \label{eq:KWHelicoidalFieldSolution}
\end{equation}
\textcolor{black}{This force distribution also satisfies }(\ref{eq:zeroInteractionForceCondition}), \textcolor{black}{but substituting into }(\ref{eq:staticEq}) \textcolor{black}{reveals that they generate wrench $\mb{h}_o = \begin{bmatrix}
    1.210 & 1.095 & 2.908 & -0.260 & 1.756 & 1.935
\end{bmatrix}^T$, which is again different from the prescribed resultant wrench. }

Returning to our example, if we instead assume that the object is handled by cooperating serial manipulators which can also apply torques, some or all of the resultant torque $\mb{t}_o$ can be assigned to the manipulating torques $\mb{t}_{m,i}$. Let us assume that all of $\mb{t}_o$ is assigned to the manipulating torques and that it is equally distributed among the four serial manipulators. The manipulating wrench distribution is then 

\begin{equation}
    \begin{aligned}
        \mb{h}_{m,1} = \begin{bmatrix}
            0.351 \\ 0.281 \\ 0.842 \\ -0.052 \\ 0.489 \\ 0.462
        \end{bmatrix}, \ & \mb{h}_{m,2} = \begin{bmatrix}
            0.247 \\ 0.197 \\ 0.592 \\ -0.052 \\ 0.489 \\ 0.462
        \end{bmatrix}, \\ \mb{h}_{m,3} = \begin{bmatrix}
            0.302 \\ 0.242 \\ 0.725 \\ -0.052 \\ 0.489 \\ 0.462
        \end{bmatrix}, \ & \mb{h}_{m,4} = \begin{bmatrix}
            0.350 \\ 0.280 \\ 0.841 \\ -0.052 \\ 0.489 \\ 0.462
        \end{bmatrix}.
    \end{aligned}
    \label{eq:SphereAnalysisEqualManipTorques}
\end{equation}
In this solution, the four manipulating torques are equal and parallel to $\mb{t}_o$. Since the manipulating forces must only generate the resultant force $\mb{f}_o$ and no torque, the helicoidal field defined by $\ddot{\mb{x}}^*_f$ is of infinite pitch and the field lines become parallel. The manipulating forces are therefore also parallel, but weighted by their respective virtual masses $m^*_i$. The constraint wrenches which correspond to this solution are 

\begin{equation}
\begin{aligned}
    \mb{h}_{c,1} = \begin{bmatrix}
        -0.649 \\ -0.219 \\ -0.158 \\ -0.052 \\ -0.011 \\ -0.039
    \end{bmatrix},& \ \mb{h}_{c,2} = \begin{bmatrix}
        0.247 \\ 0.197 \\ -0.408 \\ -1.052 \\ 0.489 \\ 0.461 
    \end{bmatrix} \\
    \mb{h}_{c,3} = \begin{bmatrix}
        0.552 \\ -0.508 \\ 0.725 \\ -0.052 \\ 0.989 \\ -0.039
    \end{bmatrix},& \ \mb{h}_{c,4} = \begin{bmatrix}
        -0.150 \\ 0.530 \\ -0.159 \\ 0.448 \\ -0.261 \\ 0.461 
    \end{bmatrix}.
\end{aligned}
\label{eq:EqualManipTorqueshc}
\end{equation}

Again, as described in Section \ref{sec:internalLoadingState}, the constraint wrenches shown in (\ref{eq:ForcesOnlyhc}), those in (\ref{eq:EqualManipTorqueshc}), and all other constraint wrench vectors which correspond to a valid manipulating wrench distribution for this set of applied wrenches all evaluate to the same vector $\boldsymbol{\lambda}_c$ which uniquely defines the location of vector $\mb{h}$ in sub-space $\mathcal{N}_c$.

\section{Simulation Experiment}
\label{sec:Simulation}
To validate our theory, we simulated four UR5 robots cooperatively manipulating a spherical payload using the PyBullet physics engine \cite{coumans2016pybullet} coupled with the Pinocchio Python library \cite{carpentier-sii19}.

\subsection{Experimental Setup}
We consider the payload to be a hollow sphere of radius $r_o = 0.2$ m, mass $m_o = 4$ kg and inertia tensor $\mb{J}_o = \frac{2}{3} m_o r_o^2 \mb{I}_3$. By selecting the vertices of a regular tetrahedron inscribed within the sphere, we are able to satisfy constraints (\ref{eq:virtualMassSum})-(\ref{eq:inertiaProportionality}) with only four manipulators. The azimuth and elevation angle pairs ($\alpha_i$, $\gamma_i$) that define the locations of the attachment points with respect to the payload's CoM are ($\pi / 4$, $1 / \sqrt{3}$), ($3\pi / 4$, $-1 / \sqrt{3}$), ($5\pi / 4$, $1 / \sqrt{3}$), and ($7\pi / 4$, $-1 / \sqrt{3}$). The bases of the UR5 robots are located at 

\begin{equation}
    \begin{aligned}
        \mb{d}_1 = \begin{bmatrix}
            0.4 \\ 0.4 \\ 0
        \end{bmatrix}\text{m}, \ \mb{d}_2 = \begin{bmatrix}
            -0.4 \\ 0.4 \\ 0
        \end{bmatrix} \text{m} \\
        \mb{d}_3 = \begin{bmatrix}
            -0.4 \\ -0.4 \\ 0
        \end{bmatrix}\text{m}, \ \mb{d}_4 = \begin{bmatrix}
            0.4 \\ -0.4 \\ 0
        \end{bmatrix}\text{m}.
    \end{aligned}
\end{equation}

We initialize the payload at an initial pose $\mb{x}_o^0 = \begin{bmatrix}
    \mb{p}_o^{0T} & \mb{q}_o^{0T}
\end{bmatrix}^T = \begin{bmatrix}
    0 & 0 & 0.6 & 0 & 0 & 0
\end{bmatrix}^T$ and prescribe a trajectory as well as an external wrench that the payload must sustain over the entire trajectory. The experimental setup is shown in Figure \ref{fig:experimentalSetup}. 

\begin{figure}
    \centering
    \includegraphics[width=0.8\linewidth]{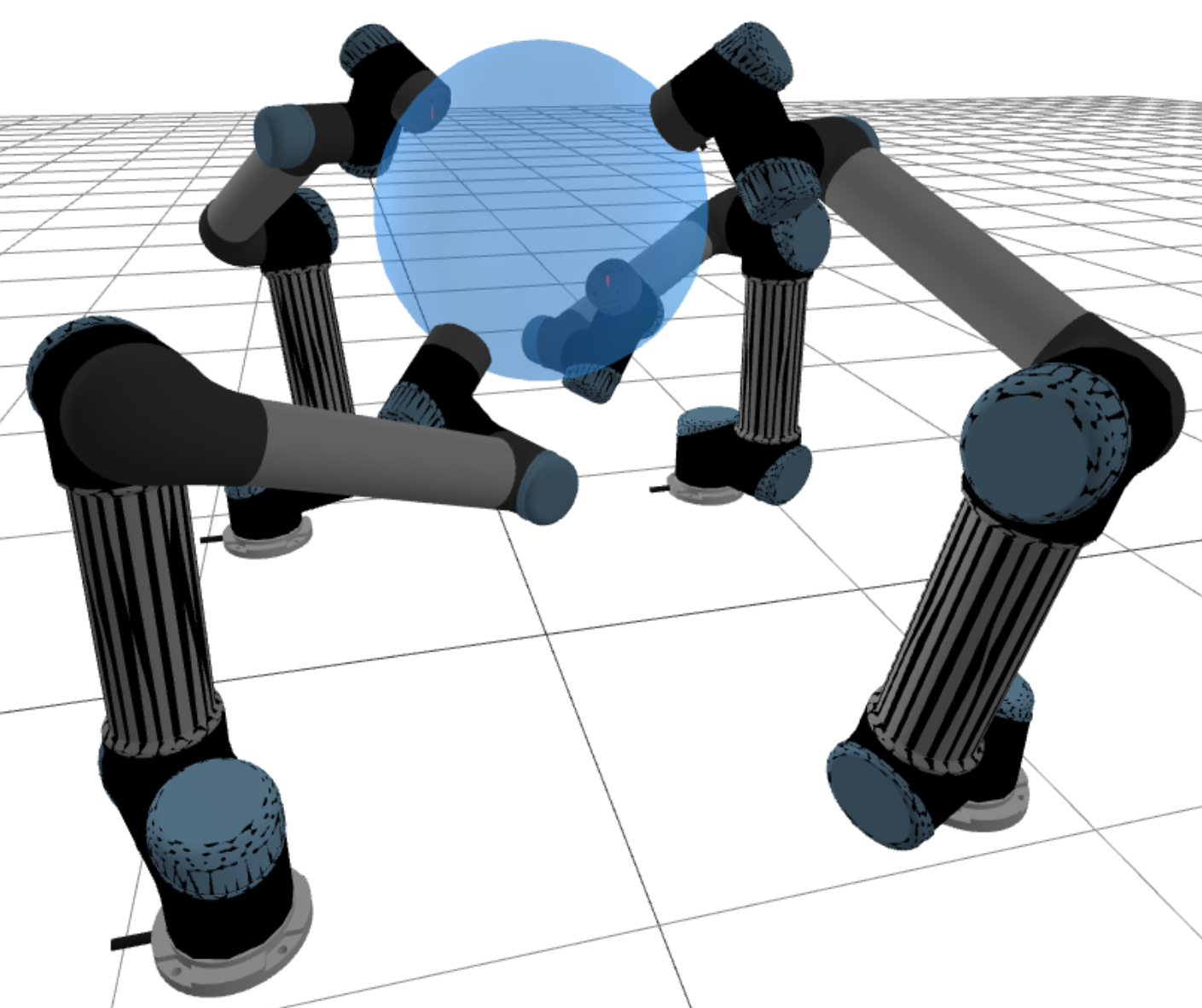}
    \caption{Simulation setup. Four UR5 robots cooperatively handle a common spherical payload.}
    \label{fig:experimentalSetup}
\end{figure}

For the trajectory, we choose a figure-8 Lissajous curve described by

\begin{equation}
    \mb{p}_o(t) = \mb{p}_o^0 + \begin{bmatrix}
        a_x \sin (\omega t) \\ \frac{1}{2}a_y \sin (2 \omega t) \\ a_z \sin (\omega t)
    \end{bmatrix}
    \label{eq:simTrajectory}
\end{equation}
where $a_x = 0.15$, $a_y = 0.15$, $a_z = 0.05$, and $\omega = 1$ rad/s. An external wrench $\mb{h}_{ext} = \begin{bmatrix}
    0 & 0 & 20 \ \text{N} & 0 & 0 & 50 \ \text{Nm}
\end{bmatrix}^T$ that the payload must apply to the environment is also prescribed.

The first and second analytical derivatives of (\ref{eq:simTrajectory}) allow us to obtain the instantaneous velocity $\dot{\mb{p}}_o$ and acceleration $\ddot{\mb{p}}_o$ of the payload at time $t$. The angular velocity and angular acceleration are zero throughout the entire trajectory. The end-effectors of the manipulators are constrained to always point toward the payload's CoM, which is also the geometric centre of the sphere.

For this simulation, the robots are controlled using the following force-control law: 

\begin{equation}
    \boldsymbol{\tau}_{c,i} = \boldsymbol{\tau}_{dyn,i} + \boldsymbol{\tau}_{ff,i}
\end{equation}
where $\boldsymbol{\tau}_{c,i}$ is the joint torque command sent to the $i$-th manipulator. $\boldsymbol{\tau}_{dyn,i}$ is the joint torque vector required to compensate the manipulator dynamics, written as

\begin{equation}
    \boldsymbol{\tau}_{dyn,i} = \mb{M}_{m,i}(\mb{\boldsymbol{\theta}}) \ddot{\boldsymbol{\theta}} + \mb{C}_{m,i}(\mb{\boldsymbol{\theta}}, \dot{\boldsymbol{\theta}}) \dot{\boldsymbol{\theta}} + \mb{g}_m(\boldsymbol{\theta}),
\end{equation}
and $\boldsymbol{\tau}_{ff,i}$ is the joint torque vector required to generate the feed-forward wrench $\mb{h}_{ff,i}$ written as

\begin{equation}
    \boldsymbol{\tau}_{ff,i} = \mb{J}_i^T \mb{h}_{ff,i} 
\end{equation}
where $\mb{J}_i$ is the kinematic Jacobian of the $i$-th manipulator. 

At each time step, we calculate the stacked vector of feed-forward wrenches $\mb{h}_{ff}$ that must be applied to the payload by the manipulators. This is based on the total wrench required to drive the payload, which is obtained as 

\begin{equation}
    \mb{h}_o = \mb{M}_o \ddot{\mb{x}}_o + \mb{g} + \mb{h}_{ext}.
    \label{eq:simulation_ho}
\end{equation}
Note that the Coriolis and centripetal term $\mb{C}_o \dot{\mb{x}}_o$ is omitted since the world frame is fixed and the orientation of the payload is kept constant. We initialize each manipulator at its starting configuration with the end-effector positioned at the payload attachment point $\mb{p}_i(t = 0)$.

Infinitely many solutions exist for $\mb{h}_{ff}$; the leading methodology in the literature is to use quadratic programming (QP) to iteratively converge on a locally optimal solution at every time-step. The general form of an optimization problem of this nature is written as the minimization of a quadratic cost function subject to equality and inequality constraints:

\begin{equation}
    \begin{aligned}
        \underset{\mb{h}_{ff}}{\min}& \qquad \frac{1}{2}\mathbf{h}_{ff}^T \mb{W}\mathbf{h}_{ff}\\ 
        \text{subject to}& \qquad \mathbf{h}_o = \mathbf{G} \mathbf{h}_{ff} \\ 
        & \qquad \mb{K} \mb{h}_{ff} \leq \mb{n}.
        \label{eq:QPdef}
    \end{aligned}
\end{equation}
where $\mb{W}$ is a positive-definite weighting matrix, and $\mb{K} \mb{h}_{ff} \leq \mb{n}$ represent the inequality constraints such as friction cones or actuator torque limits. Although included in the definition, we do not consider any of these inequalities in our implementation. Instead, we are only concerned with satisfying the wrench equilibrium equation.

In this experiment, we aim to analyze the internal loads generated by wrench synthesis methods frequently used in the literature and compare these results to the manipulating wrench distribution calculated using our proposed method. To quantify the internal forces generated by a given solution for the applied feed-forward wrench vector $\mb{h}_{ff}$, we decouple the motion of the payload from the motion of the manipulators. Instead of explicitly connecting the end-effectors of the manipulators to the payload directly in the physics engine, we use the prescribed trajectory (\ref{eq:simTrajectory}) to calculate the path that must be taken by each payload attachment point over the course of the trajectory. This serves as a ground truth to which we compare the displacement of the LMIEs induced by the feed-forward wrenches applied by the manipulators. If a given manipulator does not induce the proper constrained acceleration of its LMIE, the path taken by the LMIE will deviate from the desired path obtained from the trajectory, and a constraint wrench is needed to correct this deviation. 

To correct any errors in the motion induced by the feed-forward wrenches, we employ impedance control via a simple PD controller. This ensures that the end-effectors of the UR5 robots follow the correct path by compensating for any drift. This correcting force is given by

\begin{multline}
    \mb{f}_{pd,i} = K_p (\mb{p}_i(t) - \mb{p}_{ee,i}(t)) + \\
    K_d (\dot{\mb{p}}_i(t) - \dot{\mb{p}}_{ee,i}(t))
\end{multline}
where $\mb{p}_i(t)$ and $\dot{\mb{p}}_i(t)$ are the position and velocity of the $i$-th attachment point at time $t$, respectively, and $\mb{p}_{ee,i}(t)$ and $\dot{\mb{p}}_{ee,i}(t)$ are the position and velocity of the $i$-th UR5 end-effector carrying its LMIE of mass $m^*_i$ at time $t$, respectively. $K_p$ and $K_d$ are the proportional and derivative gains.

The unique forces-only solution was chosen as the manipulating wrench distribution when using our method for wrench synthesis. Conversely, when using the QP methods, the result often yields full 6D wrenches to be applied by the manipulators which also have pure torque components. These applied torques $\mb{t}_i$ also contribute to the manipulating wrench distribution, and this contribution must be subtracted from $\mb{h}_o$ when calculating the virtual acceleration field that defines the virtual accelerations $\ddot{\mb{p}}^*_i$ of the virtual masses $m^*_i$. The manipulating torque component of the applied torques is written as

\begin{equation}
    \sum_{i=1}^n\mb{t}_{m,i} = \left( \left( \sum^n_{i=1} \mb{t}_i \right) \cdot \hat{\mb{t}}_o\right) \cdot \hat{\mb{t}}_o
    \label{eq:AppliedTorqueProjection}
\end{equation}
where vectors $\mb{t}_i \ \forall i$ are the torque components of the feedforward wrenches $\mb{h}_{ff,i}$ calculated with the QP method and $\hat{\mb{t}}_o$ is a unit vector in the direction of $\mb{t}_o$. Equation (\ref{eq:AppliedTorqueProjection}) serves to project the sum of the applied torques onto the direction of the resultant torque $\mb{t}_o$.

The 6D vector $\ddot{\mb{x}}^*_f$ that defines the virtual accelerations $\ddot{\mb{p}}^*_i$ is then obtained with

\begin{equation}
    \ddot{\mb{x}}^*_f = \begin{bmatrix}
        \frac{1}{m_o} \mb{I}_3 & \mb{0}_3 \\
        \mb{0}_3 & \mb{J}^{-1}_o
    \end{bmatrix} \begin{bmatrix}
        \mb{f}_o \\ \mb{t}_o - \sum^n_{i=1} \mb{t}_{m,i}
    \end{bmatrix}.
    \label{eq:ForceAccelField}
\end{equation}
This enables us to calculate the constrained accelerations that must be imparted by the manipulating forces even when torques are applied by the manipulators. The actual accelerations induced by the force components of solutions to (\ref{eq:QPdef}) obtained with the QP method are compared against these constrained accelerations to calculate the theoretical internal forces. 

\subsection{Results}
We choose the forces-only solution for the manipulating wrench distribution. Setting $\mb{J}^*_i = \mb{0}_3$ and solving (\ref{eq:VirtualMassSolutionReduced}), we obtain the vector of virtual masses $\mb{m}^* = \begin{bmatrix}
    1 & 1 & 1 & 1
\end{bmatrix}^T$. A LMIE of mass $m^*_i$ is rigidly attached to the end-effector of the $i$-th manipulator. We first use our method to synthesize the manipulating force distribution using the parametrized Moore-Penrose pseudo-inverse of the grasp matrix (\ref{eq:GoodParametrizedInverse}) and the resultant wrench $\mb{h}_o$ obtained with (\ref{eq:simulation_ho}). We then compare the motion induced on the LMIEs with the ground truth obtained from the prescribed trajectory. 

With knowledge of the manipulating wrench distribution and the applied wrenches, we can also calculate the theoretical constraint forces $\mb{f}_c$ needed to induce the correct constrained accelerations on the LMIEs using (\ref{eq:ForceAccelField}) and (\ref{eq:constraintWrenchAnalysis}). We track the norm of each constraint force $\mb{f}_{c,i}$ and compare it to that of the corresponding correction force $\mb{f}_{pd,i}$. The results are shown in Figure \ref{fig:SimResults-Equimomental}. We see that the correction forces $\mb{f}_{pd, i}$ are very low, not exceeding 2.5 N. Predictably, the norms of the theoretical constraint forces obtained with our wrench decomposition method remain strictly at zero for the duration of the trajectory because we are explicitly prescribing the manipulating force distribution as our feed-forward force command.

\begin{figure}
    \centering
    \includegraphics[width=1\linewidth]{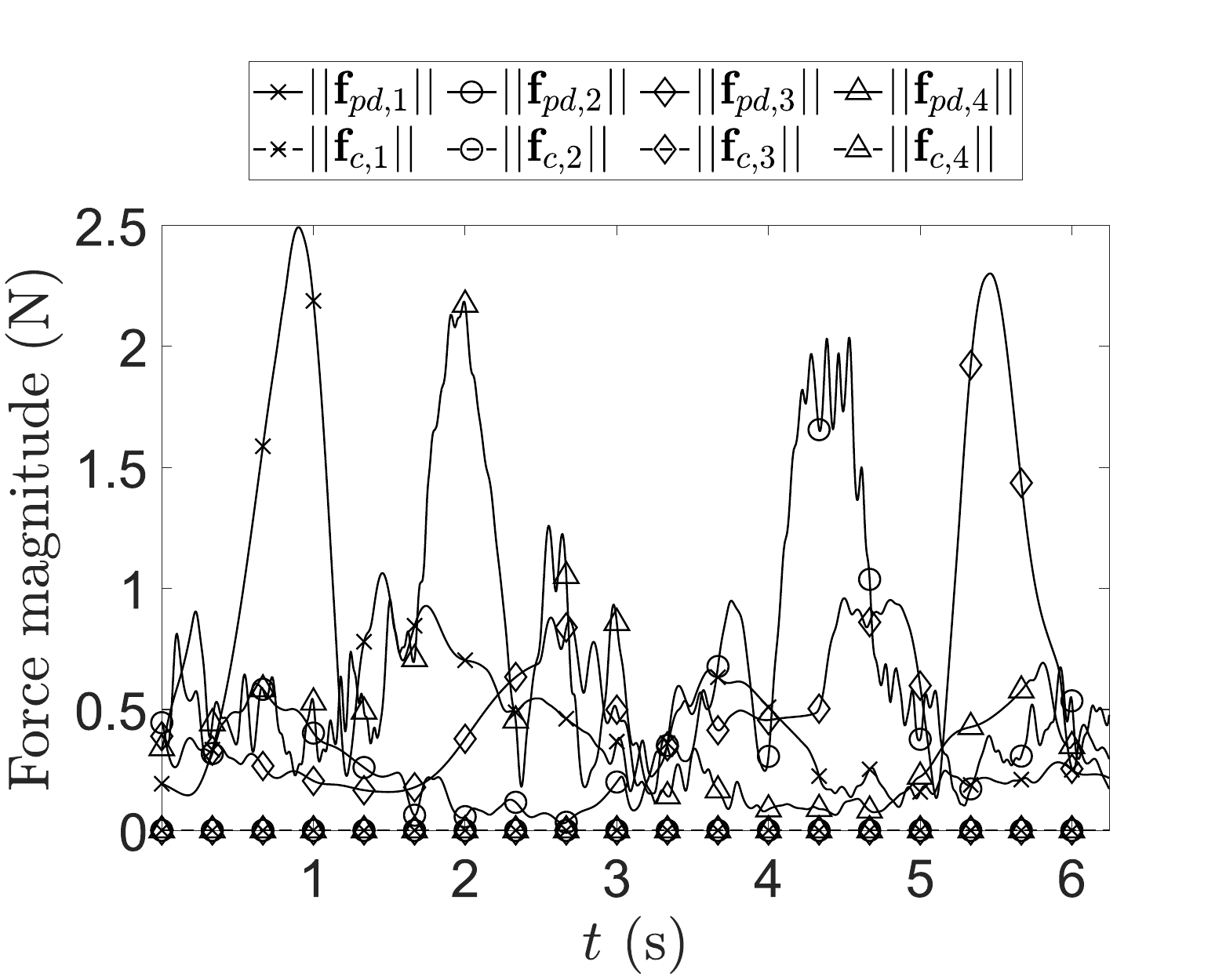}
    \caption{Magnitudes of correction forces $\mb{f}_{pd,i}$ and constraint forces $\mb{f}_{c,i}$ over one cycle of trajectory using our wrench synthesis method.}
    \label{fig:SimResults-Equimomental}
\end{figure}

We now compare with QP optimization using two different weighting matrices. The first method involves weighting each actuator based on its maximum torque value. This allocates more load to the stronger actuators to maximize the load capacity of the manipulators \cite{Xing2026}. Weighting matrix $\mb{W}$ is written as 

\begin{equation}
    \mb{W} = \begin{bmatrix}
        \mb{W}_{1}& \mb{0}_6 & \mb{0}_6 & \mb{0}_6 \\
        \mb{0}_6 & \mb{W}_{2} & \mb{0}_6 & \mb{0}_6 \\
        \mb{0}_6 & \mb{0}_6 & \mb{W}_{3} & \mb{0}_6 \\
        \mb{0}_6 & \mb{0}_6 & \mb{0}_6 & \mb{W}_{4}
    \end{bmatrix}
\end{equation}
where

\begin{equation}
    \mb{W}_i = \mb{J}_i \mb{W}_{\tau,i} \mb{J}_i^T.
\end{equation}
$\mb{W}_{\tau,i}$ is the diagonal vector containing the actuator torque limits written as

\begin{equation}
    \mb{W}_{\tau,, i} = \text{diag}\left(\frac{1}{\tau_{max,1}^2}, \ldots, \frac{1}{\tau_{max,6}^2}\right).
\end{equation}
The maximum torque values for the six joints on each of the four UR5 robots are chosen to be $\tau_{max,1} = \tau_{max,2} = \tau_{max,3} = 150$ Nm and $\tau_{max,4} = \tau_{max,5} = \tau_{max,6} = 28$ Nm.

The norms of the correction and constraint forces generated during the trajectory using this method are shown in Figure \ref{fig:SimResults-LoadCapacity}. In this case, the correction forces $\mb{f}_{pd,i}$ are significantly higher, which will inevitably cause internal stress within the payload. We also observe that the theoretical constraint forces calculated with our wrench decomposition method follow the measured values very closely. The measured values occasionally oscillate around the theoretical value; we suspect this is due to a poorly tuned PD controller for $\mb{f}_{pd,i}$, as little effort was invested in optimizing the gain parameters. 

\begin{figure}
    \centering
    \includegraphics[width=1\linewidth]{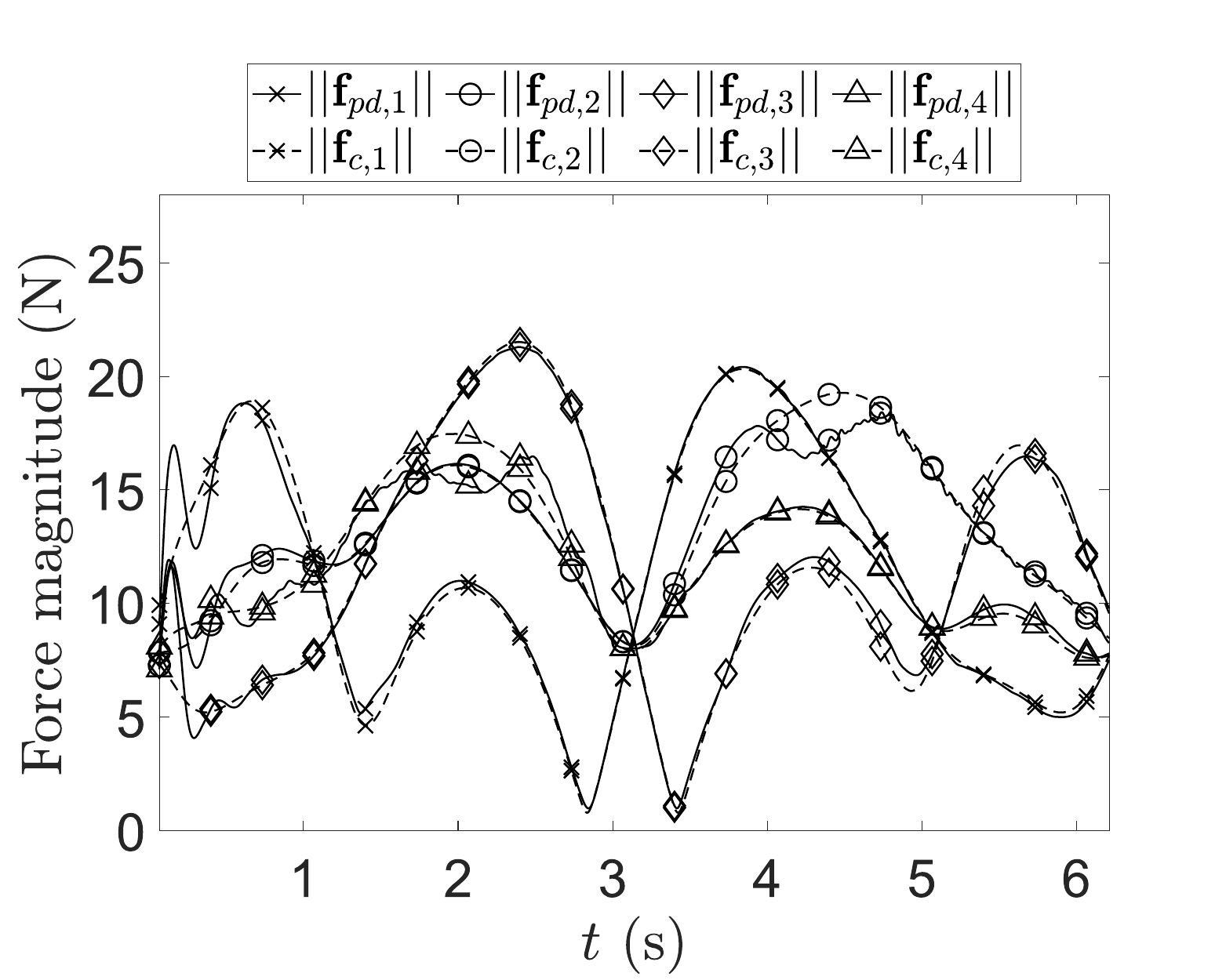}
    \caption{Magnitudes of correction forces $\mb{f}_{pd,i}$ and constraint forces $\mb{f}_{c,i}$ over one cycle of trajectory using torque limit weighting matrix}
    \label{fig:SimResults-LoadCapacity}
\end{figure}

The second QP method utilizes the operational space inertia matrix $\boldsymbol{\Lambda}_i$ of each manipulator which maps the spatial forces to the spatial accelerations through the joint inertia matrix \cite{Jain2014}. By choosing

\begin{equation}
    \mb{W} = \boldsymbol{\Lambda}_i^{-1} = \mb{J}_i \mb{M}_{m,i}^{-1} \mb{J}_i^T,
\end{equation}
where $\mb{M}_{m,i}$ is again the manipulator inertia matrix, we weight each actuator by the apparent inertia of the manipulator as felt at that specific actuator. For example, the shoulder joint of a horizontally outstretched manipulator will be assigned a smaller portion of the payload since it is already operating closer to its limits simply to support and accelerate the manipulator's own mass. Using this weighting matrix, we obtain the results shown in Figure \ref{fig:SimResults-KineticEnergy}.

\begin{figure}
    \centering
    \includegraphics[width=1\linewidth]{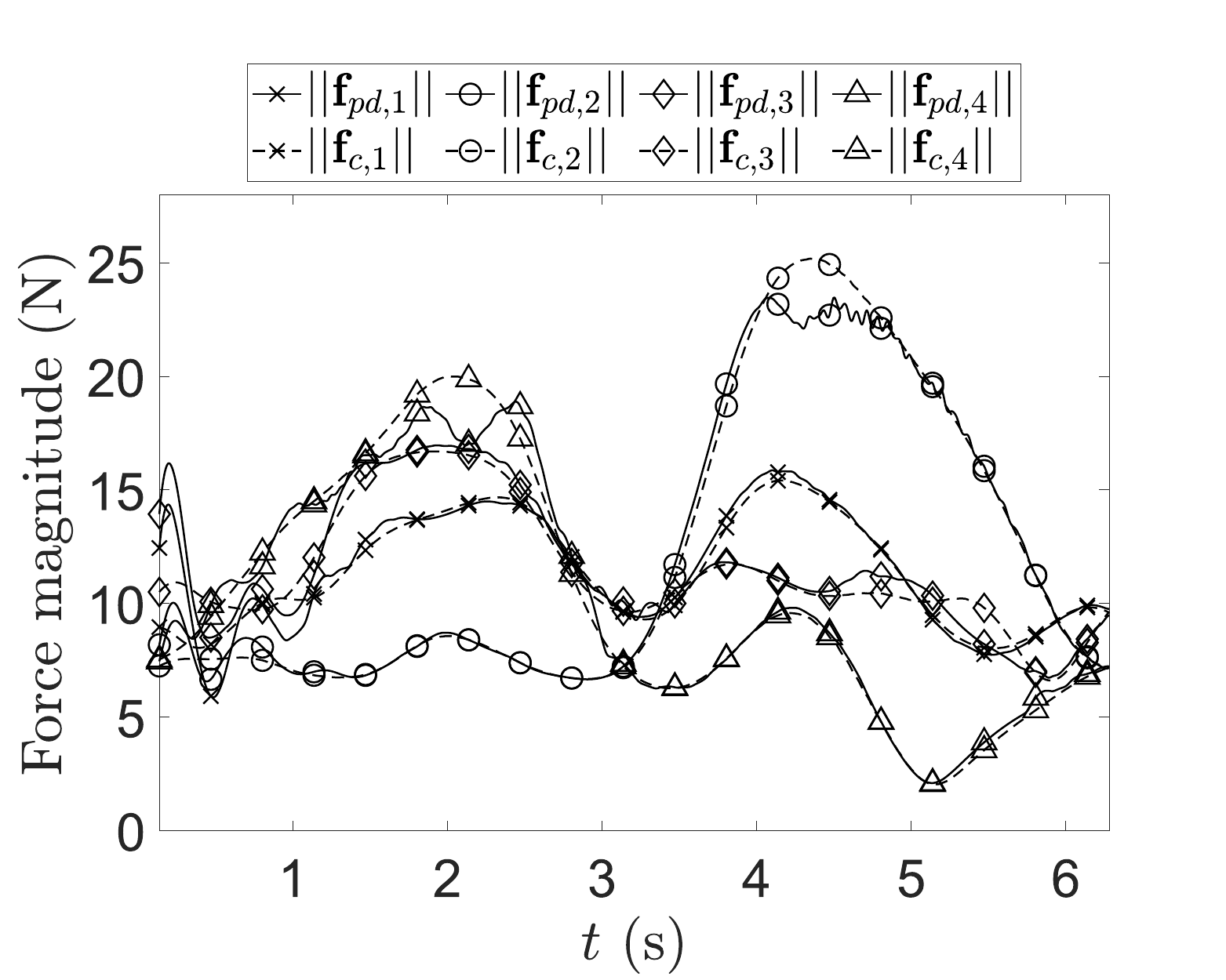}
    \caption{Magnitudes of correction forces $\mb{f}_{pd,i}$ and constraint forces $\mb{f}_{c,i}$ over one cycle of trajectory using operational space inertia matrix.}
    \label{fig:SimResults-KineticEnergy}
\end{figure}

Once again, we see that the correction forces needed to keep the manipulator end-effectors following the prescribed path of the payload are quite large, but that our proposed wrench decomposition method is capable of predicting them efficiently. 


\section{Conclusion}
The aim of this work is to propose a generalized theory of load distribution in rigid bodies subjected to wrenches applied by multiple closed-loop kinematic chains. We first examine the physical significance of well established principles of rigid body motion described by the Udwadia-Kalaba equation (\cite{Udwadia2010}) to better highlight the roles of constraint and manipulating wrenches. Then, we present our theorem which unifies many existing theories and methods related to wrench synthesis and analysis in the context of redundantly-actuated robot force control.

Our theorem fully characterizes the feasible set of manipulating wrench distributions $\mb{h}_m$ for a given resultant wrench $\mb{h}_o$. We examine in detail the uniqueness of the feasible manipulating wrench distribution set, show how the concept may be extended to wrench synthesis and wrench analysis, and determine a unique metric to describe the internal loading state of a rigid body for a given set of applied wrenches. We also provide a detailed review of the implications of our proposed theorem for other approaches in the literature. Finally, case studies are presented to illustrate the advantages of our proposed methods. 

This general theory and the methods derived from it are intended to supplement modern robotics controllers used in real-time force control applications. Our results are proposed in the hopes that we may consider the problem of load distributions in rigid bodies handled by redundantly-actuated closed kinematic chains---either the determination of load distributions with prescribed internal loads or the decomposition of known distributions into manipulating and constraint components---to be a solved one. Such a complete description of the nature of internal loads combined with the methods that we propose for wrench synthesis and analysis may prove to be a useful tool for the real-time force control of redundantly-actuated robotic systems since examples of multiple wrenches applied to a single object are seen in many areas of robotics. 

\section*{Statements and Declarations}
\subsection*{Ethical Considerations}
Not applicable.

\subsection*{Consent to Participate}
Not applicable. 

\subsection*{Consent for Publication}
Not applicable.

\begin{funding}
    The authors would like to acknowledge the financial support of the Natural Sciences and Engineering Research Council of Canada (NSERC) through a scholarship (Canada Graduate Research Scholarship — Doctoral) to the first author and discovery grant DG-89715 to the second author. They would also like to acknowledge the financial support of the Institut de recherche Robert-Sauvé en santé et sécurité du travail (IRSST) through a doctoral scholarship to the first author. 
\end{funding}

\begin{dci}
    The authors declared no potential conflicts of interest with respect to the research, authorship, and/or publication of this article.
\end{dci}

\begin{sm}
    Not applicable.
\end{sm}





\theendnotes

\bibliographystyle{SageH}
\bibliography{bibliography.bib}

\section*{Appendix}
\subsection{Proof of Theorem \ref{th:InternalLoadsAsHelicoidalField}}

\begin{proof}
    1. The resultant wrench about the centroid of the rigid body is obtained from (\ref{eq:FullDynamics}) and the virtual rigid body inertia matrix is 

    \begin{equation}
        \mb{M}^*_o = \begin{bmatrix}
            m^*_o \mb{I}_3 & \mb{0} \\
            \mb{0} & \mb{J}^*_o
        \end{bmatrix}.
    \end{equation}
    Hence, the virtual instantaneous acceleration of the virtual rigid body subjected to the resultant wrench is

    \begin{equation}
        \mb{M}^*_o \ddot{\mb{x}}^*_o = \mb{h}_o \quad \Longrightarrow \quad \ddot{\mb{x}}^*_o = (\mb{M}^*_o)^{-1} \mb{h}_o.
    \end{equation}

    2. The kinematic constraints (\ref{eq:LinearKinematicConstraint}) and (\ref{eq:AngularKinematicConstraint}) constrain the virtual instantaneous acceleration of the $i$-th LMIE at $\mb{r}_i$ to be 

    \begin{equation}
        \ddot{\mb{x}}^*_i = \begin{bmatrix}
            \ddot{\mb{p}}^*_o  + \dot{\boldsymbol{\omega}}^*_o \times \mb{r}_i + \boldsymbol{\omega}^*_o \times (\boldsymbol{\omega}^*_o \times \mb{r}_i) \\
            \dot{\boldsymbol{\omega}}^*_o
        \end{bmatrix},
    \end{equation}
    but these vectors do not lie on a helicoidal field because of the normal acceleration term $\boldsymbol{\omega}^*_o \times (\boldsymbol{\omega}^*_o \times \mb{r}_i)$. However, since constraint wrenches can be generated even when no wrenches are applied by the kinematic chains as shown in Section \ref{sec:CentripetalForceImplication}, we assume the virtual rigid body to be stationary and this term vanishes.

    The constrained virtual acceleration $\ddot{\mb{x}}^*_i$ at any location $\mb{r}_i$ with respect to the CoM of the rigid body is then obtained from only the linear acceleration $\ddot{\mb{p}}^*_o$ and the angular acceleration $\dot{\boldsymbol{\omega}}^*_o$ of the virtual rigid body.

    {\small
    \begin{equation}
        \ddot{\mb{x}}^*_i = \begin{bmatrix}
            \ddot{\mb{p}}^*_o  + \dot{\boldsymbol{\omega}}^*_o \times \mb{r}_i \\
            \dot{\boldsymbol{\omega}}^*_o
        \end{bmatrix}.
    \end{equation}}

    3. Each wrench $\mb{h}_i$ applied to an LMIE at $\mb{r}_i$ induces an unconstrained virtual acceleration given by

    \begin{equation}
        (\ddot{\mb{x}}^d_i)^* = (\mb{M}^*_i)^{-1} \mb{h}_i, \quad \mb{M}^*_i = \begin{bmatrix}
            m^*_i \mb{I}_3 & \mb{0} \\
            \mb{0} & \mb{J}^*_i
        \end{bmatrix}.
    \end{equation}

    4. According to the Udwadia-Kalaba equation (\cite{Udwadia2010}), a constraint wrench $\mb{h}_{c,i}$ must be applied at $\mb{r}_i$ if the unconstrained virtual acceleration $(\ddot{\mb{x}}^d_i)^* = (\mb{M}^*_i)^{-1} \mb{h}_i$ that $\mb{h}_i$ would produce differs from the local constrained virtual acceleration $\ddot{\mb{x}}^*_i$ demanded by the rigid body motion. From (\ref{eq:constrainedEquationofMotion}), the constraint wrench can be written as 

    \begin{equation}
        \mb{h}_{c,i} = \mb{M}^*_i\left(  \ddot{\mb{x}}^*_i - (\mb{M}^*_i)^{-1} \mb{h}_i \right)
    \end{equation}
    which ensures that the constrained acceleration of the LMIE at $\mb{r}_i$ is compatible with the kinematic constraints. It is now clear that $\mb{h}_{c,i} = \mb{0} \iff (\mb{M}^*_i)^{-1}\mb{h}_i = \ddot{\mb{x}}^*_i$

    5. Since any nonzero $\mb{h}_{c,i}$ indicates the existence of a constraint wrench, we conclude 

    \begin{equation}
        \mb{h}_{c,i} = \mb{0} \; \forall i \iff \mb{h}_i = \mb{h}_{m,i} \ \forall i,
    \end{equation}
    i.e., for there to be no constraint wrenches, each local unconstrained acceleration $(\ddot{\mb{x}}^d_i)^*$ induced by $\mb{h}_i$ must lie on the helicoidal acceleration field defined by $\ddot{\mb{x}}^*_o$.
\end{proof}

\subsection{Correction of Theorem \ref{th:GoodParamMPInv}}

\begin{proof}
    From (\ref{eq:RBDynamics}), $\ddot{\mb{p}}_o$ and $\dot{\boldsymbol{\omega}}_o$ are explicitly written as
    
    \begin{equation}
    \label{eq:LinAccel}
        \ddot{\mb{p}}^*_o = \frac{1}{m^*_o}\mb{f}_o,
    \end{equation}
    and
    
    \begin{equation}
    \label{eq:AngularAccel}
        \dot{\boldsymbol{\omega}}^*_o = [\mb{J}^*_o]^{-1} \mb{t}_o,
    \end{equation}
    where $\mb{f}_o$ and $\mb{t}_o$ are the resultant force and torque, respectively. 

    Substituting (\ref{eq:LMIEConstrainedAcceleration}) into (\ref{eq:LMIEDynamics}), we obtain
    
    \begin{equation}
    \label{eq:ManipulatingWrenchDynamics}
        \mb{h}_{m,i} = \begin{bmatrix}
            m^*_i \mb{I}_3 & \mb{0} \\
            \mb{0} & \mb{J}^*_i
        \end{bmatrix} \begin{bmatrix}
            \ddot{\mb{p}}^*_o + \dot{\boldsymbol{\omega}}^*_o \times \mb{r}_i \\ \dot{\boldsymbol{\omega}}^*_o
        \end{bmatrix}.
    \end{equation}
    Finally, we substitute (\ref{eq:LinAccel}) and (\ref{eq:AngularAccel}) into (\ref{eq:ManipulatingWrenchDynamics}) to obtain explicit expressions for the force and torque components of the manipulating wrenches. The manipulating force is derived as 
    
    \begin{align}
        \mb{f}_{m,i} &= \frac{m^*_i}{m^*_o}\mb{f}_o + m^*_i ([\mb{J}^*_o]^{-1} \mb{t}_o) \times \mb{r}_i \\
        \mb{f}_{m,i} &= \frac{m^*_i}{m^*_o}\mb{f}_o - m^*_i \mb{r}_i \times ([\mb{J}^*_o]^{-1} \mb{t}_o) \\
        \label{eq:ManipulatingForce}
        \mb{f}_{m,i} &= \frac{m^*_i}{m^*_o}\mb{f}_o + m^*_i \mb{S}( \mb{r}_i)^T ([\mb{J}^*_o]^{-1} \mb{t}_o),
    \end{align}
    where $\mb{S}(\mb{r}_i)^T = -\mb{S}(\mb{r}_i)$ is used. The manipulating torque is written as 
    
    \begin{equation}
        \mb{t}_{m,i} = \mb{J}^*_i ([\mb{J}^*_o]^{-1} \mb{t}_o).
    \end{equation}
    When we write these equations for every manipulating wrench and collect them in matrix form, we obtain 
    
    \begin{equation}
    \begin{split}
        \mb{h}_m &= \mb{G}^+_M \mb{h}_o \\ &= \begin{bmatrix}
            \frac{m^*_1}{m^*_o} \mb{I}_3 &  m^*_1 \mb{S}(\mb{r}_1)^T [\mb{J}^*_o]^{-1} \\
            \mb{0}_3 & \mb{J}^*_1 [\mb{J}^*_o]^{-1} \\
            \vdots & \vdots \\
            \frac{m^*_n}{m^*_o} \mb{I}_3 &  m^*_n \mb{S}(\mb{r}_n)^T [\mb{J}^*_o]^{-1} \\
            \mb{0}_3 & \mb{J}^*_n [\mb{J}^*_o]^{-1}
        \end{bmatrix} \begin{bmatrix}
            \mb{f}_o \\
            \mb{t}_o
        \end{bmatrix},
    \end{split}
    \end{equation}
    where $\mb{G}^+_M$ differs from the form shown in (\ref{eq:BadParametrizedInverse}) which violates the non-commutative property of the matrices in (\ref{eq:ManipulatingForce}).
\end{proof}

\subsection{Proof of Corollary \ref{cor:MPInverseG}}
\begin{proof}
The grasp matrix $\mb{G}$ has the form shown in (\ref{eq:GForcesandMoments}). The unweighted Moore-Penrose pseudo-inverse $\mb{G}^{\dagger}$ is written as 

\begin{equation}
    \mb{G}^{\dagger} = \mb{G}^T(\mb{G}\mb{G}^T)^{-1}
\end{equation}
and is the solution to the quadratic optimization problem

\begin{equation}
\label{eq:ManipulatingWrenchOptimizationProblem}
    \begin{aligned}
        \text{minimize}& \quad \mb{h}^T \mb{h} \\ 
        \text{subject to}& \quad \mb{h}_o = \mb{G} \mb{h}.
    \end{aligned}
\end{equation}

It can easily be verified that

\begin{equation}
    \mb{G}^T \mb{G} = \begin{bmatrix}
        n \mb{I}_3 & \sum_{i=1}^n \mb{S}(\mb{r}_i)^T \\ \sum_{i=1}^n \mb{S}(\mb{r}_i) & n \mb{I}_3 + \sum_{i=1}^n \mb{S}(\mb{r}_i) \mb{S}(\mb{r}_i)^T
    \end{bmatrix}
    \label{eq:G^TG}
\end{equation}
where $\sum_{i=1}^n \mb{S}(\mb{r}_i)$ and $\sum_{i=1}^n \mb{S}(\mb{r}_i)^T$ can be rewritten as 

\begin{equation}
\begin{split}
    &\sum_{i=1}^n \mb{S}(\mb{r}_i) = \mb{S}\left( \sum_{i=1}^n \mb{r}_i \right) \  \text{and} \\ &\sum_{i=1}^n \mb{S}(\mb{r}_i)^T = \mb{S} \left( \sum_{i=1}^n \mb{r}_i \right)^T,
\end{split}
    \label{eq:SkewSymMatrices}
\end{equation}
respectively. Eq. (\ref{eq:SkewSymMatrices}) is equivalent to the skew symmetric matrix that corresponds to the cross product with the vector in constraint (\ref{eq:virtualCoMEquivalence}) when $m^*_i = 1 \ \forall i$. Therefore, when this constraint is satisfied, these matrices become $\mb{0}_3$ and the inverse $(\mb{G}^T \mb{G})^{-1}$ takes the simpler form

\begin{equation}
    (\mb{G}^T \mb{G})^{-1} = \frac{1}{n} \begin{bmatrix}
        \mb{I}_3 & \mb{0}_3 \\
        \mb{0}_3 & \mb{I}_3 + \frac{1}{n} \sum_{i=1}^n \mb{S}(\mb{r}_i) \mb{S}(\mb{r}_i)^T
    \end{bmatrix}.
    \label{eq:inv(G^TG)}
\end{equation}
Finally, we premultiply (\ref{eq:inv(G^TG)}) by $\mb{G}^T$ to obtain

\begin{equation}
    \mb{G}^{\dagger} = \frac{1}{n} \begin{bmatrix}
            \mb{I}_3 & \mb{S}(\mb{r}_1)^T \Bar{\mb{J}}^{-1} \\ \mb{0}_{3 \times 3} & \Bar{\mb{J}}^{-1} \\ 
            \vdots & \vdots \\
            \mb{I}_3 & \mb{S}(\mb{r}_n)^T \Bar{\mb{J}}^{-1} \\ \mb{0}_{3 \times 3} & \Bar{\mb{J}}^{-1}
        \end{bmatrix}
\end{equation}
with $\Bar{\mb{J}} = \mb{I}_3 + \frac{1}{n} \sum_{i=1}^n \mb{S}(\mb{r}_i) \mb{S}(\mb{r}_i)^T$. Note that the error related to matrix non-commutativity that is present in (\ref{eq:BadParametrizedInverse}) is also present in (\ref{eq:BadMPInverseG}). This is corrected in (\ref{eq:GoodMPInverseG}).

If $m^*_i = 1 \ \forall i$ does not solve (\ref{eq:virtualCoMEquivalence}), the inverse of (\ref{eq:G^TG}) is more complex, and $\mb{G}^{\dagger}$ and $\mb{G}^+_M$ for $m^*_i = 1$ and $\mb{J}^*_i = \mb{I}_3$ no longer return the same solution.    
\end{proof}

\end{document}